\newcommand*{\addFileDependency}[1]{
  \typeout{(#1)}
  \@addtofilelist{#1}
  \IfFileExists{#1}{}{\typeout{No file #1.}}
}
\definecolor{myPurple}{HTML}{D3B6E6}
\newcommand{\cmark}{\ding{51}}
\newcommand{\xmark}{\ding{55}}
\DeclareRobustCommand\onedot{\futurelet\@let@token\@onedot}
\def\@onedot{\ifx\@let@token.\else.\null\fi\xspace}
\def\eg{\emph{e.g}\onedot} 
\def\ie{\emph{i.e}\onedot} 
\def\etc{\emph{etc}\onedot}
\DeclareMathOperator*{\argmax}{argmax~}
\renewcommand{\eqref}[1]{Eq.~\ref{#1}}
\newif\ifcomment
	\newcommand{\yl}[1]{ \noindent {\color{red} {\bf yl:} {#1}} }
	\newcommand{\yl}[1]{}
\newcommand{\red}[1]{\noindent{\color{red}{#1}}}
\newcommand{\blue}[1]{\noindent{\color{black}{#1}}}
\newcommand{\bluei}[1]{{\color{black}{#1}}}
\newcommand{\green}[1]{\noindent{\color{green}{#1}}}
\newcommand{\gray}[1]{\noindent{\color{gray}{#1}}}
\newtheorem{lemma}{Lemma}
\newtheorem{theorem}{Theorem}
\newtheorem{definition}{Definition}
\begin{document}

\title{RING\#: \blue{PR-by-PE Global Localization with Roto-translation Equivariant Gram Learning}}

\author{Sha Lu$^{1}$, Xuecheng Xu$^{1}$, Yuxuan Wu$^{2}$, Haojian Lu$^{1}$, Xieyuanli Chen$^{3}$, Rong Xiong$^{1}$ and Yue Wang$^{1\dagger}$
\thanks{$^{1}$Institute of Cyber-Systems and Control, Zhejiang University, China.}
\thanks{$^{2}$School of Astronautics, Beihang University, China.}
\thanks{$^{3}$College of Intelligence Science and Technology, National University of Defense Technology, China.}
\thanks{$^{\dagger}$\textit{Corresponding author: Yue Wang.} (E-mail: wangyue@iipc.zju.edu.cn)}

}



\maketitle

\begin{abstract}
Global localization using onboard perception sensors, such as cameras and LiDARs, is crucial in autonomous driving and robotics applications when GPS signals are unreliable. \blue{Most approaches achieve global localization by sequential place recognition (PR) and pose estimation (PE). Some methods train separate models for each task, while others employ a single model with dual heads, trained jointly with separate task-specific losses.} However, the accuracy of localization heavily depends on the success of place recognition, which often fails in scenarios with significant changes in viewpoint or environmental appearance. Consequently, this renders the final pose estimation of localization ineffective. \blue{To address this, we introduce a new paradigm, \textit{PR-by-PE localization}, which bypasses the need for separate place recognition by directly deriving it from pose estimation. We propose RING\#, an end-to-end \textit{PR-by-PE localization} network that operates in the bird's-eye-view (BEV) space, compatible with both vision and LiDAR sensors. RING\# incorporates a novel design that learns two equivariant representations from BEV features, enabling globally convergent and computationally efficient pose estimation.} Comprehensive experiments on the NCLT and Oxford datasets show that RING\# outperforms state-of-the-art \blue{methods in both vision and LiDAR modalities, validating the effectiveness of the proposed approach.} The code will be publicly released.

\end{abstract}

\begin{IEEEkeywords}
Global Localization, Place Recognition, BEV Representation Learning, Roto-translation Equivariance.
\end{IEEEkeywords}

\section{Introduction}
\IEEEPARstart{G}{lobal} localization is a fundamental task in autonomous driving and mobile robot navigation systems. The need for global localization arises when GPS signals are unavailable or inaccurate, such as indoor and urban canyons. It also enables loop closures in SLAM~\cite{cummins2008fab, cummins2011appearance, milford2012seqslam, mur2015orb, lowry2015visual}, recovers the current pose of kidnapped robots, and merges multi-robot/multi-session maps. To achieve global localization, we must register the current sensor observation against the entire map without prior knowledge. 


\begin{figure}[t]
  \centering
  \includegraphics[width=0.9\linewidth]{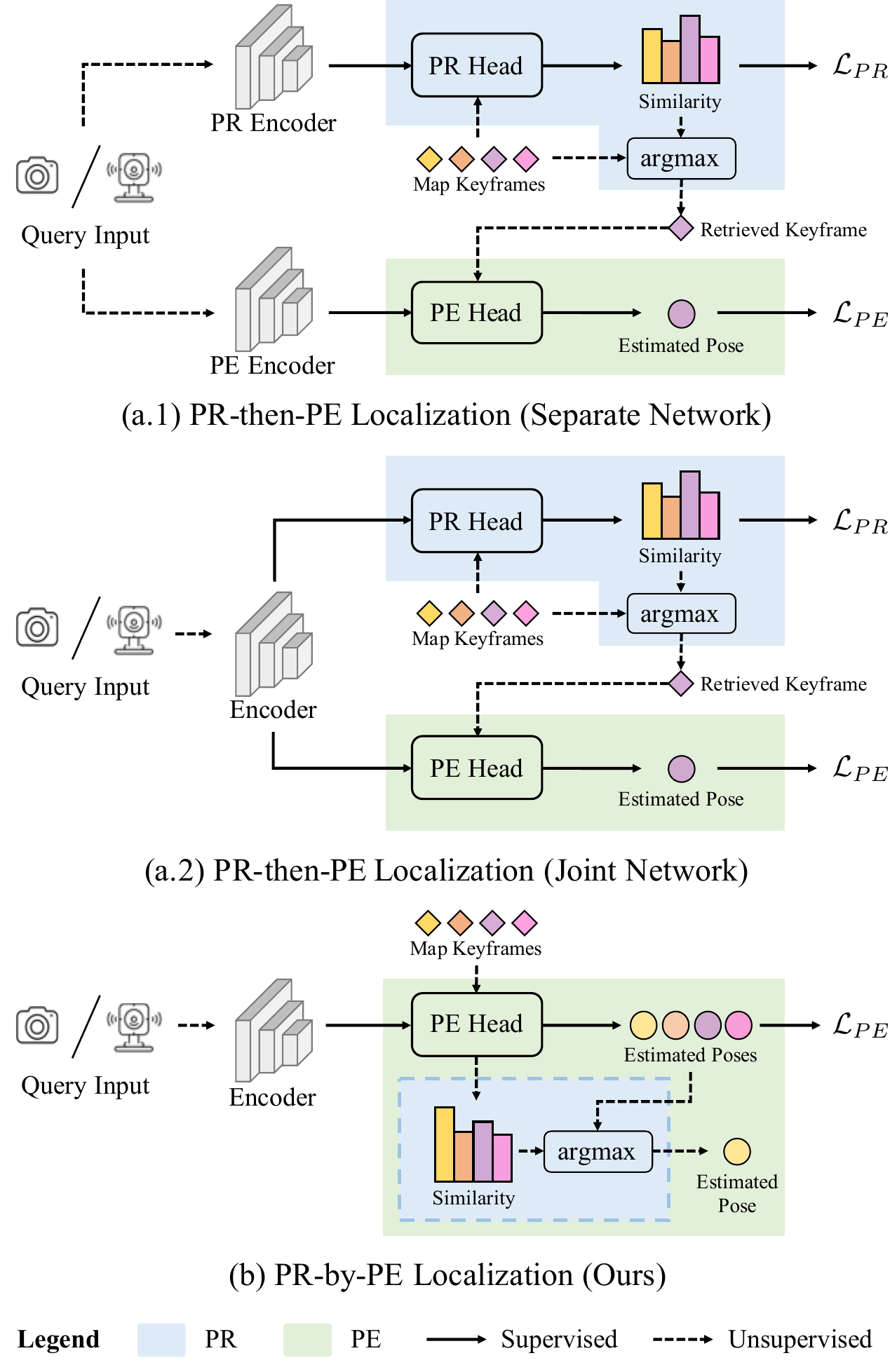}
  \vspace{-0.3cm}
  \caption{\blue{\textbf{Comparison on the various paradigms of global localization.} (a) The PR-then-PE localization paradigm treats place recognition and pose estimation as upstream and downstream tasks, either handled by two independent models (a.1) or jointly learned within a single model (a.2). (b) We introduce a novel paradigm: PR-by-PE localization, which leverages pose estimation to derive place recognition in a single model.}}
  \label{fig:teaser}
  \vspace{-0.5cm}
\end{figure}

\bluei{Traditional global localization approaches, originating from early visual methods~\cite{angeli2008fast, milford2012seqslam, arandjelovic2013all}, typically follow a paradigm: first place recognition (PR) then pose estimation (PE). This paradigm, referred to as \textit{PR-then-PE localization}, treats place recognition and pose estimation as upstream and downstream tasks, as illustrated in Fig.~\ref{fig:teaser}(a). Place recognition~\cite{galvez2012bags, arandjelovic2016netvlad, ge2020self, hausler2021patch, keetha2023anyloc, ali2023mixvpr, he2016m2dp, uy2018pointnetvlad, ma2022overlaptransformer, lu2023deepring} retrieves the map keyframe most similar to the query observation, narrowing down the search space for subsequent pose estimation. Pose estimation~\cite{fischler1981random, chum2003locally, chum2005matching, cavalli2020adalam, besl1992method, shi2023rdmnet, shi2024lcrnet} then aligns the query observation with the retrieved keyframe to estimate the robot's metric pose. As depicted in Fig.~\ref{fig:teaser}(a.1), early methods handle place recognition and pose estimation using separate models trained independently. Recent works~\cite{sarlin2018leveraging, sarlin2019coarse, du2020dh3d, chen2021overlapnet, xu2021disco, komorowski2021egonn, cattaneo2022lcdnet} have advanced this paradigm by integrating these two tasks into a single network, leveraging a shared encoder and task-specific heads to facilitate joint learning, as shown in Fig.~\ref{fig:teaser}(a.2). However, a common limitation of these \textit{PR-then-PE localization} methods is their heavy reliance on the success of place recognition. If place recognition fails to identify the correct keyframe, the subsequent pose estimation will also fail. Moreover, the objectives of these two tasks are inherently misaligned: place recognition aims for viewpoint invariance, while pose estimation requires viewpoint awareness. This misalignment can lead to cascading errors that degrade overall localization performance. This leads to a crucial question: \textit{Is it possible to bypass the need for a separate place recognition module and better align the objectives of these two tasks?}}

\bluei{To answer this question, we propose a novel paradigm: \textit{PR-by-PE localization}, as shown in Fig.~\ref{fig:teaser}(b). This paradigm infers place recognition by pose estimation, eliminating the separate place recognition module and avoiding error accumulation. A straightforward implementation would involve exhaustively applying an off-the-shelf pose solver to compute the relative pose and similarity score (\eg the number of inliers) of the query observation against all map keyframes. The keyframe with the highest similarity score is selected as the place recognition result. However, existing solvers either require robust optimization~\cite{sattler2012improving, sarlin2020superglue, jiang2021cotr, sun2021loftr, wang2022matchformer} or iterative search~\cite{besl1992method}, which are computationally expensive and prone to local minima. While directly regressing the relative pose between the query observation and all map keyframes offers a faster alternative, it typically suffers from limited accuracy and lacks a similarity measure for place recognition. Absolute pose regression~\cite{kendall2015posenet} is another option, but it struggles to generalize to unseen environments. Therefore, current methods are not well-suited for \textit{PR-by-PE localization}. We argue that a successful \textit{PR-by-PE localization} approach requires a relative pose estimation network that is both globally convergent and computationally efficient, with a built-in similarity assessment mechanism.}

\bluei{To this end, we propose RING\#, an end-to-end \textit{PR-by-PE localization} framework that explicitly predicts a 3-DoF pose (x, y, yaw) with a built-in similarity score, supervised solely by the pose estimation loss. The key design is the two equivariant representations to decouple pose estimation into sequential rotation and translation estimation, which effectively reduces the search space dimensionality.} \blue{By employing a correlation-based exhaustive search on equivariant features in two subspaces}, \blue{RING\# achieves globally} convergent pose estimation. \blue{The resulting correlation values serve as similarity scores for place recognition, thus effectively embodying the \textit{PR-by-PE localization} paradigm.} Thanks to fast correlation computation using the Fast Fourier Transform (FFT), batch processing on GPU, as well as the sparse map keyframe aided by the large convergence basin, RING\# is computationally feasible. \blue{Furthermore, RING\# employs a bird's-eye-view (BEV) architecture that supports both vision and LiDAR modalities, making it a versatile solution for various sensor inputs.} Overall, our contributions are summarized as follows:
\begin{itemize}
\item \blue{We introduce a novel paradigm for global localization: \textit{PR-by-PE localization}, which derives place recognition by pose estimation.}
\item \blue{We present RING\#, an end-to-end framework that learns equivariant representations to enable globally convergent localization and efficient evaluation.}
\item \blue{We design a BEV-based feature learning architecture compatible with both vision and LiDAR modalities.}
\item \blue{We validate the effectiveness of RING\# through extensive experiments on the NCLT and Oxford datasets, demonstrating superior performance across both vision and LiDAR modalities, with RING\#-V even outperforming most LiDAR-based methods.}
\end{itemize}


\bluei{In our previous work RING++~\cite{xu2023ring++}, we propose a learning-free framework that aims to construct a roto-translation invariant representation for global localization on a sparse scan map. However, RING++ is limited by its inability to learn from data, leading to suboptimal performance in challenging scenarios and restricting its application to LiDAR inputs only. In contrast, RING\# introduces an end-to-end network that learns BEV features while maintaining equivariance to both rotation and translation. This design not only enhances LiDAR-based localization performance, but also extends its applicability to vision-based scenarios. In addition, RING++ operates under the \textit{PR-then-PE localization} paradigm, where errors in place recognition propagate and affect the final pose estimation. RING\#, however, adopts the \textit{PR-by-PE localization} paradigm, avoiding the cascading error accumulation. This shift results in a substantial improvement in localization performance, increasing the global localization success rate by around 20\%.}

\section{Related Work}
In this section, we provide an overview of methods for visual- and LiDAR-based global localization, followed by a review of methods for BEV representation learning.

\textbf{Vision-based Global Localization.}
Vision-based localization methods typically \blue{adhere to the \textit{PR-then-PE localization}} paradigm: place recognition then pose estimation. In the place recognition stage, the most similar map image to the query image is retrieved for coarse localization. Traditional methods rely on handcrafted local features like SIFT~\cite{lowe2004distinctive}, SURF~\cite{bay2006surf} and ORB~\cite{rublee2011orb} to extract local descriptors, which are then aggregated into global descriptors using aggregation algorithms such as Bag of Words (BoW)~\cite{galvez2012bags}, Fisher Kernel~\cite{perronnin2007fisher, perronnin2010large}, and Vector of Locally Aggregated Descriptors (VLAD) \cite{arandjelovic2013all}. With the advent of deep learning, there is a shift towards learned local features~\cite{detone2018superpoint, revaud2019r2d2, jau2020deep, sun2021loftr}, along with learnable aggregation algorithms~\cite{arandjelovic2016netvlad, radenovic2018fine, peng2021attentional}. In addition to CNN-based methods~\cite{uy2018pointnetvlad, hausler2021patch, ge2020self, ali2023mixvpr}, recent approaches~\cite{wang2022transvpr, keetha2023anyloc, kannan2024placeformer} introduce vision Transformers for end-to-end learning of global descriptors. In the pose estimation stage, feature matching methods~\cite{sarlin2020superglue, jiang2021cotr, wang2022matchformer} establish correspondences between query and retrieved images using the above local descriptors. Based on the correspondences, a robust pose solver~\cite{fischler1981random, chum2003locally, chum2005matching, cavalli2020adalam} is applied to obtain the query pose in the map coordinate. To boost the place recognition performance, matching local features in the above process is employed for the top N candidates, serving as a post-processing step to refine the place recognition result. Recently, there's a trend towards integrating place recognition and pose estimation in an end-to-end manner~\cite{sarlin2018leveraging, sarlin2019coarse} or utilizing local matching to re-rank place recognition results~\cite{hausler2021patch, wang2022transvpr, barbarani2023local, zhu2023r2former, kannan2024placeformer}.

\textbf{LiDAR-based Global Localization.}
\blue{LiDAR-based localization methods have gained increasing attention due to their robustness to appearance changes. Following the \textit{PR-then-PE localization} paradigm, these methods} address global localization through place recognition algorithms, followed by point cloud registration techniques like Iterative Closest Point (ICP)~\cite{besl1992method}. M2DP~\cite{he2016m2dp} represents 3D point clouds as fingerprints by projecting onto multiple 2D planes. Scan Context and its variants~\cite{kim2018scan, wang2020intensity, wang2020lidar, kim2021scan, li2021ssc} transform the raw 3D point cloud to a BEV representation in the polar coordinate with the height, intensity, or semantic information. RING~\cite{lu2022one} and its extension RING++~\cite{xu2023ring++} employ the Radon transform to represent a point cloud as a sinogram for both place recognition and pose estimation. BoW3D~\cite{cui2022bow3d} exploits a novel linear keypoint descriptor~\cite{cui2024link3d} to build BoW for point clouds and achieve LiDAR-based place recognition and pose estimation. Equipped with the power of deep learning, PointNetVLAD~\cite{uy2018pointnetvlad} extracts local features of a LiDAR scan with PointNet~\cite{qi2017pointnet} and utilizes NetVLAD~\cite{arandjelovic2016netvlad} to aggregate them into a global descriptor. DiSCO~\cite{xu2021disco} leverages the property of the Fourier transform to output a rotation-invariant global descriptor. OverlapNet~\cite{chen2021overlapnet} and its derivative OverlapTransformer~\cite{ma2022overlaptransformer} estimate the overlap between two scans for place recognition. DeepRING~\cite{lu2023deepring} extracts features from the sinogram to obtain a rotation-invariant global representation. Moreover, SpectralGV~\cite{vidanapathirana2023spectral} presents a spectral method for geometric verification to enhance place recognition performance. Except for learning merely place recognition, some works~\cite{du2020dh3d, komorowski2021egonn, cattaneo2022lcdnet} learn both global and local descriptors for place recognition and pose estimation in a single forward pass.

\textbf{BEV Representation Learning.}
There are two mainstream classes of view transformation methods to generate BEV features: depth-based methods and transformer-based methods. After LSS~\cite{philion2020lift} method arises, BEV representation is widely used in 3D perception tasks, \eg 3D objection detection, segmentation, and scene completion. These two approaches represent two lines of BEV perception studies. One line of work incorporates depth-based methods like LSS~\cite{philion2020lift}, BEVDet~\cite{huang2021bevdet}, BEVDepth~\cite{li2023bevdepth}, \etc Depth-based methods lift 2D perspective view (PV) features to 3D features by estimating the depth distribution of each pixel and then generating BEV features by reducing the vertical dimension. The other line is composed of transformer-based methods like DETR3D~\cite{wang2022detr3d}, PETR~\cite{liu2022petr}, BEVFormer~\cite{li2022bevformer,yang2023bevformer}, \etc Unlike depth-based methods in a bottom-up manner, transformer-based methods utilize BEV queries to retrieve PV features through the cross-attention mechanism in a top-down manner. 

However, the performance of these localization methods is inherently tied to the success of place recognition. When place recognition fails, the overall localization performance suffers significantly. To overcome this limitation, we present a \blue{BEV-based \textit{PR-by-PE localization}} framework by efficient exhaustive pose estimation, showcasing superior performance.


\section{Overview}
\label{sec:overview}
We first introduce the problem statement of global localization in Sec.~\ref{sec:problem}. Then, we provide an overview of the proposed framework RING\# in Sec.~\ref{sec:framework}.

\subsection{Problem Statement}
\label{sec:problem}
Given a query observation $Q$ and a database of map keyframe observations $\mathfrak{M} \triangleq \{M_1, M_2, \cdots, M_n\}$ with known poses, the goal of global localization is to estimate the pose of $Q$ in the map coordinate. For efficient computation and low memory usage, \blue{the map density,} denoted as $|\mathfrak{M}|$, is expected to be as small as possible. In autonomous driving and robotics applications, the gravity direction is easily known and changes in pitch, roll, and height within a local area are generally negligible~\cite{Lu_2019_CVPR, sarlin2024snap}, which reduces the global localization problem from estimating a 6-DoF pose to a 3-DoF pose $T \in \text{SE(2)}$. Therefore, we define the global localization problem as a global 3-DoF pose estimation task. $T$ can be decomposed into a \blue{1-DoF} rotation $\theta \in [0, 2\pi)$ and a \blue{2-DoF} translation $(x, y) \in \mathbb{R}^2$ as follows:
\begin{equation}
    T = \begin{bmatrix}
        \cos\theta & -\sin\theta & x \\
        \sin\theta & \cos\theta & y \\
        0 & 0 & 1
    \end{bmatrix}.
\end{equation}

\textbf{\blue{PR-then-PE Localization.}} Existing global localization methods generally adhere to \blue{the \textit{PR-then-PE localization} paradigm}, formulating the global localization problem as two sequential sub-problems: place recognition and pose estimation. \blue{These methods tackle place recognition and pose estimation either using two standalone models or a single model with a shared encoder and task-specific heads.} The place recognition algorithm retrieves the most similar map keyframe to $Q$ from $\mathfrak{M}$, as defined in Eq.~(\ref{eq:pr}). Subsequently, the pose estimation algorithm estimates the relative pose between $Q$ and the \blue{retrieved} map keyframe, as formulated in Eq.~(\ref{eq:pe}).
\begin{align}
    \hat{M}_{i} & = \argmax_{M_i \in \mathfrak{M}} \mathcal{S}_{pr}(Q, M_i) \label{eq:pr}, \\
    \hat{T}^{\hat{M}_{i}}_{Q} & = \argmax_{T^{\hat{M}_{i}}_{Q}} \mathcal{S}_{pe}(Q, \hat{M}_{i}, T^{\hat{M}_{i}}_{Q}), \label{eq:pe}
\end{align}
where $\hat{M}_{i}$ is the retrieved map keyframe, $\mathcal{S}_{pr}(\cdot)$ is the place recognition similarity function, $\hat{T}^{\hat{M}_{i}}_{Q}$ is the estimated relative pose between $Q$ and $\hat{M}_{i}$, and $\mathcal{S}_{pe}(\cdot)$ is the pose estimation similarity function. The global pose of $Q$ is then derived by
\begin{equation}
    \hat{T}^{\mathfrak{M}}_{Q} = T^{\mathfrak{M}}_{\hat{M}_{i}} \hat{T}^{\hat{M}_{i}}_{Q},
    \label{eq:gl_pose}
\end{equation}
where \blue{$\hat{T}^{\mathfrak{M}}_{Q}$ is the estimated pose of $Q$ in the map coordinate, and} $T^{\mathfrak{M}}_{\hat{M}_{i}}$ is the pose of $\hat{M}_{i}$ in the map coordinate. However, the localization performance of such methods heavily depends on the success of place recognition in identifying the most similar map keyframe in the database. If place recognition using $\mathcal{S}_{pr}(\cdot)$ fails, the output of $\mathcal{S}_{pe}(\cdot)$ becomes meaningless, inevitably resulting in incorrect localization.


\textbf{\blue{PR-by-PE Localization.}} Rethinking the similarity function in Eq.~(\ref{eq:pe}), we note that the map keyframe with the highest similarity after pose alignment should be the correct retrieval for place recognition when performing Eq.~(\ref{eq:pe}) for all map keyframes $M_i$ in $\mathfrak{M}$. \blue{This insight forms the basis of the \textit{PR-by-PE localization} paradigm}. \blue{In this paradigm, the 3-DoF localization problem is formulated as}
\begin{equation}
    \hat{M}_{i}, \hat{\theta}, \hat{x}, \hat{y} = \argmax_{M_i \in \mathfrak{M}, \theta, x, y} \mathcal{S}_{pe}(Q, M_i, \theta, x, y),
    \label{eq:gl}
\end{equation}
where $\hat{M}_{i}$ is the retrieved map keyframe, $\hat{\theta}$ is the estimated relative rotation angle, $(\hat{x}$, $\hat{y})^T$ is the estimated relative translation vector, and $\mathcal{S}_{pe}(\cdot)$ is the \blue{\textit{PR-by-PE localization}} similarity function. \blue{This paradigm derives place recognition as a by-product of pose estimation, effectively bypassing the inherent limitations of \textit{PR-then-PE localization}}. However, there are two main challenges in designing an effective similarity function $\mathcal{S}_{pe}(\cdot)$: the \textit{efficient evaluation} of similarity throughout the entire database for each frame, and the \textit{global estimation} of 3-DoF pose to eliminate the dependency on the initial value.

\begin{figure}[t]
    \centering
    \includegraphics[width=1.0\linewidth]{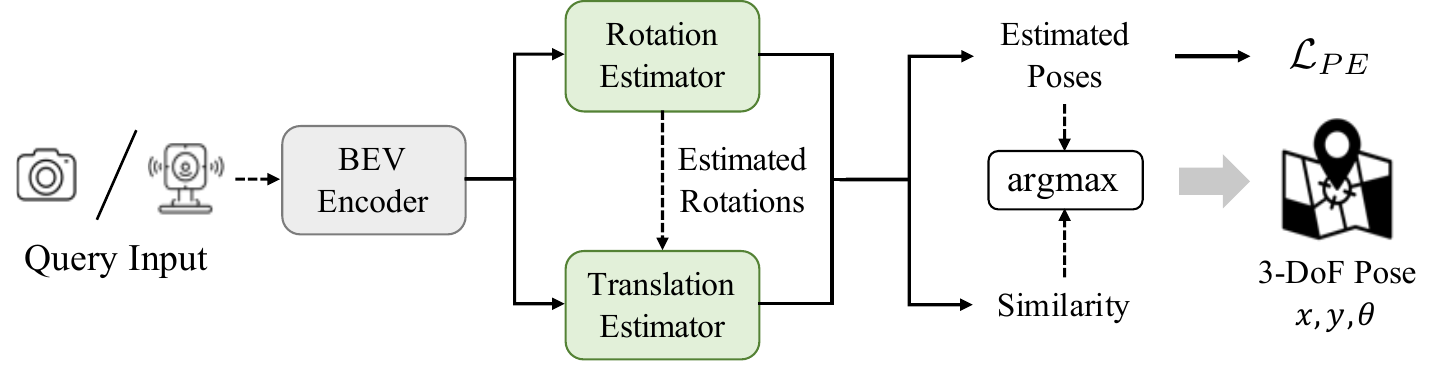}
    \vspace{-0.5cm}
    \caption{\textbf{\blue{PR-by-PE localization framework.}} Given a \blue{raw sensor} observation, we encode it into BEV features first. Based on the BEV features, we construct two equivariant representations that enable the decoupling of pose estimation into rotation estimation and translation estimation.}
    \label{fig:general_framework}
    \vspace{-0.5cm}
\end{figure}

\subsection{Framework}
\label{sec:framework}

To address the above challenges, we propose a framework that \blue{adopts the \textit{PR-by-PE localization} paradigm}, as illustrated in Fig.~\ref{fig:general_framework}. We first encode the query observation into BEV features. Based on the BEV features, we construct a rotation-equivariant and translation-invariant representation, and a translation-equivariant and rotation-invariant representation (proved in Sec.~\ref{sec:equivariance}). With these two equivariant representations, we decouple the 3-DoF pose estimation into sequential 1-DoF rotation estimation and 2-DoF translation estimation, \blue{thereby reducing the search space dimensionality}. For both rotation and translation estimation, we design a correlation-based similarity function to exhaustively search for all possible pose configurations in the 3-DoF pose space. As a result, our localization framework offers the following benefits:


\textbf{\blue{Built-in Similarity.}} \blue{The similarity score for place recognition is derived directly from the maximum correlation value obtained through correlation-based pose estimation. This design naturally embodies the \textit{PR-by-PE localization} paradigm, facilitating place recognition by pose estimation.}

\textbf{Global Convergence.} The construction of equivariant representations ensures that the learned features remain equivariant to pose transformations of the input data. This property enables exhaustive matching between the query and all map keyframes in the database across all possible pose configurations, $(x, y, \theta)$. Benefiting from exhaustive matching, our approach converges to the optimal solution without dependency on the initial value.

\textbf{Evaluation Efficiency.} \blue{The decoupling of 3-DoF pose estimation into rotation and translation estimation, combined with the use of correlation-based similarity function, greatly enhances computational efficiency. This similarity function is accelerated using the Fast Fourier Transform (FFT) and batch processing on GPU, drastically reducing the computational cost. This design enables efficient similarity evaluation across the entire database for each frame, making the exhaustive search feasible in practice.}



\bluei{\textbf{Vision and LiDAR Compatibility.} Our framework incorporates a BEV-based feature learning architecture capable of encoding sensor observations into BEV features for both vision and LiDAR modalities. These BEV features are then used to construct equivariant representations that are effective across different sensor types,} allowing the framework applicable to both vision- and LiDAR-based localization.

\section{Equivariant Representations}
\label{sec:equivariance}
In this section, we analyze the rotation and translation equivariance and invariance properties of the convolution (CNN), the Radon transform (RT), and the Fourier transform (FT). Upon these properties, we construct two equivariant representations for RING\# and provide theoretical proofs.

\subsection{Definitions}
We first present the definition of equivariance and invariance as follows:

\begin{definition}[Equivariance]
\label{def:equivariance}
For a group of transformations $G$, a function $f$ is equivariant if:
\begin{equation}
    \label{eq:equivariance}
    f(T_g[x]) = S_g[f(x)], \quad \forall x \in X, \, g \in G,
\end{equation}
where $x$ is the input, $g$ is an element of the group $G$, and $T_g$ and $S_g$ are transformations parameterized by $g$.
\end{definition}

\begin{definition}[Invariance]
\label{def:invariance}
For a group of transformations $G$, a function $f$ is invariant if:
\begin{equation}
    \label{eq:invariance}
    f(T_g[x]) = f(x), \quad \forall x \in X, \, g \in G,
\end{equation}
where $x$ is the input, $g$ is an element of the group $G$, and $T_g$ is a transformation parameterized by $g$.
\end{definition}

\subsection{Convolution}
The convolution on $f(x)$ with the kernel $h(x)$ is formulated as follows:
\begin{equation}
    \label{eq:cnn}
    f_h(x) \triangleq \phi_h(f(x)) = \int_{-\infty}^{\infty} f(u) h(x - u) \, \mathrm{d}u,
\end{equation}
where $f_h(x)$ is the output of $f(x)$ after convolution, and $\phi_h(\cdot)$ is the convolution operator with kernel $h(x)$. Then we show the equivariance of the function after applying convolution.

\begin{lemma}
\label{lemma 1}
$\phi_h(f(x))$ is translation equivariant.
\end{lemma}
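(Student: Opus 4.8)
The plan is to verify Definition~\ref{def:equivariance} directly for the convolution operator $\phi_h$ under the translation group acting on functions by $T_g[f](x) = f(x - g)$, $g \in \mathbb{R}$. Concretely, I would take an arbitrary shift $g$, apply $\phi_h$ to the translated input $T_g[f]$, and show the result equals $T_g[\phi_h(f)]$, i.e.\ that the same shift $g$ appears on the output (so that $S_g = T_g$ here).

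First I would write out $\phi_h(T_g[f])(x) = \int_{-\infty}^{\infty} f(u - g)\, h(x - u)\, \mathrm{d}u$ straight from Eq.~(\ref{eq:cnn}). Then I would perform the change of variables $v = u - g$ (so $\mathrm{d}v = \mathrm{d}u$, and the limits are unchanged since the integral is over all of $\mathbb{R}$), giving $\int_{-\infty}^{\infty} f(v)\, h(x - g - v)\, \mathrm{d}v$. The key observation is that $h(x - g - v) = h\big((x-g) - v\big)$, so the right-hand side is exactly $\phi_h(f)(x - g) = T_g[\phi_h(f)](x)$ by the definition of $\phi_h$. Since $x$ and $g$ were arbitrary, this establishes $\phi_h(T_g[f]) = T_g[\phi_h(f))$ for all $g$, which is precisely translation equivariance in the sense of Definition~\ref{def:equivariance} with $T_g = S_g$.

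There is no real obstacle here: the whole argument rests on the translation-invariance of Lebesgue measure, which makes the substitution $v = u - g$ cost-free. The only point worth stating carefully is the identification $T_g = S_g$ — that the output transforms by the \emph{same} shift as the input — and that the convolution is taken over the full real line so that shifting the integration variable does not disturb the domain. If one wanted to be fully rigorous about interchanging operations, I would note mild regularity assumptions (e.g.\ $f, h$ integrable or one compactly supported) guaranteeing the integrals converge, but these are standard and can be left implicit. The multidimensional case (BEV features live on $\mathbb{R}^2$) is identical with $g \in \mathbb{R}^2$ and the same vector change of variables, so the one-dimensional statement suffices.
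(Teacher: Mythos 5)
Your proposal is correct and follows essentially the same argument as the paper: translate the input, apply the change of variables $u' = u - \Delta x$ (your $v = u - g$) under the integral, and read off that the output is the convolution evaluated at the shifted point, which is exactly Definition~\ref{def:equivariance} with the output transforming by the same shift. Your added remarks on integrability and the higher-dimensional case match the paper's brief note that the extension to higher dimensions holds with the proof omitted.
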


\begin{proof}
Translate the input function $f(x)$ by $\Delta{x}$, which generates $f'(x) \triangleq f(x - \Delta{x})$. The convolution on $f'(x)$ with the kernel $h(x)$, denoted as $f'_h(x)$ is calculated by
\begin{equation}
\begin{aligned}
    f'_h(x) &= \int_{-\infty}^{\infty} f'(u) h(x - u) \, \mathrm{d}u \\
    &= \int_{-\infty}^{\infty} f(u - \Delta{x}) h(x - u) \, \mathrm{d}u.
\end{aligned}
\end{equation}

Let $u' = u - \Delta{x}$, which implies $\mathrm{d}u' = \mathrm{d}u$. Then we have
\begin{equation}
\begin{aligned}
    f'_h(x) &= \phi_h(f(x - \Delta{x})) \\
    &= \int_{-\infty}^{\infty} f(u') h((x - \Delta{x}) - u') \, \mathrm{d}u' \\
    &= f_h(x - \Delta{x}).
\end{aligned}
\end{equation}
Therefore, $\phi_h(f(x))$ is translation equivariant according to Definition~\ref{def:equivariance}. In addition, the equivariance can be extended to higher dimensions. We omit the proof here.
\end{proof}

\subsection{Radon Transform}
\label{sec:radon}
The Radon transform \cite{deans2007radon} is a linear integral transform that computes the integral along a set of straight lines. The mathematical equation of the Radon transform is as follows:
\begin{equation}
    \label{eq:radon}
    \begin{split}
        S(\theta, \tau) &\triangleq \mathcal{R}(f(x,y)) \\
        &=\int_{L(\theta, \tau): \; x\cos\theta + y\sin\theta = \tau} f(x,y) \mathrm{d}x \mathrm{d}y \\ 
        &=\int_{-\infty}^{\infty}\int_{-\infty}^{\infty} f(x,y) \delta(\tau-x\cos\theta-y\sin\theta) \mathrm{d}x \mathrm{d}y,
    \end{split}
\end{equation}
where $S(\theta, \tau)$ represents the resultant sinogram, $\mathcal{R}(\cdot)$ denotes the Radon transform operation, $f(x, y)$ is the input 2D function, $L(\theta, \tau): x\cos\theta + y\sin\theta = \tau$ is the line for integral, $\theta \in [0, 2\pi)$ is the tangent angle of the line $L(\theta, \tau)$, $\tau \in (-\infty, \infty)$ is the distance from the origin to $L(\theta, \tau)$, and $\delta(\cdot)$ is the Dirac delta function.




\textbf{Sinogram after Rotation and Translation.} Let $f'(x,y)$ be transformed by $f(x,y)$ with a 3-DoF pose transformation $T$ parameterized by a rotation angle $\alpha$ and a translation vector $t \triangleq (\Delta x, \Delta y)^{T}$. $f'(x,y)$ is formulated as
\begin{equation}
\begin{aligned}
    &f'(x,y) \triangleq f(R_{\alpha} X - t), \\
    R_{\alpha} &\triangleq \begin{bmatrix}
        \cos\alpha & -\sin\alpha \\
        \sin\alpha & \cos\alpha
    \end{bmatrix}, X \triangleq \begin{bmatrix} x \\ y \end{bmatrix},
\end{aligned}
\end{equation}
where $R_{\alpha}$ represents the rotation matrix parameterized by $\alpha$.

Applying the Radon transform to $f'(x,y)$, the resultant sinogram $S'(\theta, \tau)$ can be expressed as
\begin{equation}
\begin{aligned}
    \label{eq:radon_pose}
    S'(\theta, \tau) &= \mathcal{R}(f'(x,y)) \\
    &= \int_{-\infty}^{\infty}\int_{-\infty}^{\infty} f(R_{\alpha} X - t) \delta(\tau - k_{\theta} \cdot X) \mathrm{d}x \mathrm{d}y \\
    &= S(\theta + \alpha, \tau - \Delta{\tau}),
\end{aligned}
\end{equation}
where $k_{\theta} \triangleq (\cos{\theta},\sin{\theta})^{T}$ is a unit vector of the line $L(\theta, \tau)$, and $\Delta{\tau}$ is equivalent to the projection of the translation vector $t$ on the line $L(\theta + \alpha, \tau)$, which is calculated by
\begin{equation}
\begin{aligned}
\Delta{\tau} &= {k}_{\theta+\alpha} \cdot t \\
&= (\cos(\theta+\alpha),\sin(\theta+\alpha)) \cdot (\Delta{x}, \Delta{y}) \\
&= \Delta{x}\cos(\theta+\alpha) + \Delta{y}\sin(\theta+\alpha).
\end{aligned}
\end{equation}
Therefore, a rotation angle $\alpha$ on $f(x, y)$ causes a circular shift along the $\theta$ axis of $S(\theta, \tau)$, and a translation vector $t$ on $f(x, y)$ results in a shift in the variable $\tau$ equal to the projection of $t$ onto the line $L(\theta+\alpha, \tau)$.

\textbf{Comparison with Polar Transform.}
The polar transform (PT) \cite{matungka2009image} is widely used to construct an observation representation, which is formulated as
\begin{equation}
\begin{aligned}
    \label{eq:polar}
    &p(r,\theta) \triangleq \mathcal{P}(f(x,y)) = f(r\cos{\theta}, r\sin{\theta}), \\
    &r = \sqrt{x^2+y^2}, \\
    &\theta = \arctan\frac{y}{x},
\end{aligned}
\end{equation}
where $p(r,\theta)$ is the result of the polar transform, $\mathcal{P}(\cdot)$ is the polar transform operator, and $f(x,y)$ is the input 2D image. The polar transform of $f'(x,y)$ is formulated as
\begin{equation}
\begin{aligned}
    \label{eq:polar_pose}
    &p'(r, \theta) = p(r', \theta'), \\
    &r' = \sqrt{(r\cos(\theta+\alpha)-\Delta{x})^2+(r\sin(\theta+\alpha)-\Delta{y})^2}, \\
    &\theta' = \arctan\frac{r\sin(\theta+\alpha)-\Delta{y}}{r\cos(\theta+\alpha)-\Delta{x}},
\end{aligned}
\end{equation}
where $p'(r, \theta)$ is the polar representation of $f'(x,y)$. In contrast to the Radon transform in Eq.~(\ref{eq:radon_pose}), the $\theta$ axis of $p(r,\theta)$ is a nonlinear combination of the rotation angle $\alpha$ and the translation $t$, so is the $r$ axis. Such representation obviously loses equivariance after neural network processing.

\subsection{Fourier Transform}
The Fourier transform \cite{brigham1988fast} is an integral transform that represents a function in the frequency domain, whose formula is
\begin{equation}
    \widehat{f}(\omega) \triangleq \mathcal{F}(f(x)) = \int_{-\infty}^{\infty} f(x) e^{-i \omega x} \mathrm{d}x,
\end{equation}
where $f(x)$ is a function in the time domain, $\mathcal{F}(\cdot)$ is the Fourier transform operator, $\omega$ is the angular frequency, and $\widehat{f}(\omega)$ is the representation in the frequency domain. Let $\mathcal{A}(\cdot) \triangleq |\mathcal{F}(\cdot)|$ be the composed operator of $\mathcal{F}(\cdot)$ and $|\cdot|$, where $|\cdot|$ is the operation of taking magnitude.

\begin{lemma}
\label{lemma 3}
$\mathcal{A}(f(x))$ is translation invariant.
\end{lemma}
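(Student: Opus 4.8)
The plan is to show that the magnitude of the Fourier transform is unchanged under a translation of the input, by invoking the classical \emph{shift theorem} of Fourier analysis and then taking absolute values. First I would translate the input function, writing $f'(x) \triangleq f(x - \Delta x)$, exactly as in the proof of Lemma~\ref{lemma 1}. Then I would compute $\mathcal{F}(f'(x)) = \int_{-\infty}^{\infty} f(x - \Delta x) e^{-i\omega x}\,\mathrm{d}x$ and perform the substitution $u = x - \Delta x$, so that $\mathrm{d}u = \mathrm{d}x$ and $x = u + \Delta x$. This yields $\mathcal{F}(f'(x)) = \int_{-\infty}^{\infty} f(u) e^{-i\omega(u + \Delta x)}\,\mathrm{d}u = e^{-i\omega \Delta x}\,\widehat{f}(\omega)$, pulling the constant phase factor $e^{-i\omega\Delta x}$ outside the integral.

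The second step is to take magnitudes on both sides: $\mathcal{A}(f'(x)) = |\mathcal{F}(f'(x))| = |e^{-i\omega\Delta x}|\cdot|\widehat{f}(\omega)| = |\widehat{f}(\omega)| = \mathcal{A}(f(x))$, where I use that $|e^{-i\omega\Delta x}| = 1$ for all real $\omega$ and $\Delta x$. By Definition~\ref{def:invariance}, this establishes that $\mathcal{A}(f(x))$ is translation invariant. As in Lemma~\ref{lemma 1}, I would remark that the argument extends verbatim to higher dimensions, since a multidimensional translation factors into a product of one-dimensional phase factors, each of unit modulus, and omit that routine extension.

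There is no real obstacle here — the entire content is the shift theorem plus the observation that complex exponentials have unit modulus. The only things to be careful about are bookkeeping: making sure the substitution is carried out cleanly and that the phase factor is correctly identified as depending only on $\omega$ and $\Delta x$ (not on the integration variable), so that it legitimately comes out of the integral. If I wanted to be thorough about hypotheses I could note that $f$ should be integrable (or in $L^1$) for the transform and the change of variables to be valid, but in keeping with the paper's informal style I would not dwell on this. The proof is essentially three lines and mirrors the structure of the earlier convolution lemma.
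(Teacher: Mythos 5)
Your proof is correct and follows essentially the same route as the paper: establish the Fourier shift property $\mathcal{F}(f(x-\Delta x)) = e^{-i\omega\Delta x}\widehat{f}(\omega)$ (your change of variables $u = x - \Delta x$ is just a cleaner rendering of the paper's manipulation) and then take magnitudes, using $|e^{-i\omega\Delta x}| = 1$ to conclude invariance via Definition~\ref{def:invariance}. No gaps.
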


\begin{proof}
Suppose $f'(x) \triangleq f(x - \Delta{x})$ is the function after translating $f(x)$ by $\Delta{x}$. Referring to the time shifting property, the Fourier transform converts a shift $\Delta{x}$ in the time domain to a phase shift $-\omega\Delta{x}$ in the frequency domain:
\begin{equation}
\begin{aligned}
    \label{eq:fourier_shift}
    \widehat{f'}(\omega) &= \mathcal{F}(f(x - \Delta{x})) \\
    &= \int_{-\infty}^{\infty} f(x - \Delta{x}) e^{-i \omega x} \mathrm{d}x \\
    &= \int_{-\infty}^{\infty} f(x - \Delta{x}) e^{-i \omega (x - \Delta{x})} e^{-i\omega\Delta{x}} \mathrm{d}(x - \Delta{x}) \\
    &= e^{-i\omega\Delta{x}} \int_{-\infty}^{\infty} f(x - \Delta{x}) e^{-i \omega (x - \Delta{x})} \mathrm{d}(x - \Delta{x}) \\    
    &= e^{-i\omega\Delta{x}}\widehat{f}(\omega).
\end{aligned}
\end{equation}
$\widehat{f'}(\omega)$ is the representation of $f'(x)$ in the frequency domain.

The amplitude of $\widehat{f'}(\omega)$ remains the same regardless of the value of $\Delta{x}$:
\begin{equation}
\begin{aligned}
A'(\omega) &= \mathcal{A}(f(x - \Delta{x}))\\
&= |\widehat{f'}(\omega)| \\
&= |e^{-i \omega \Delta{x}} \widehat{f}(\omega)| \\
&= |\widehat{f}(\omega)| \\
&= A(\omega),
\end{aligned}
\end{equation}
where $A(\omega)$ and $A'(\omega)$ are the amplitudes of $f(x)$ and $f'(x)$ in the frequency domain, satisfying Definition~\ref{def:invariance}. Therefore, $\mathcal{A}(f(x))$ is translation invariant.
\end{proof}

\begin{figure*}[t]
    \centering
    \includegraphics[width=18cm]{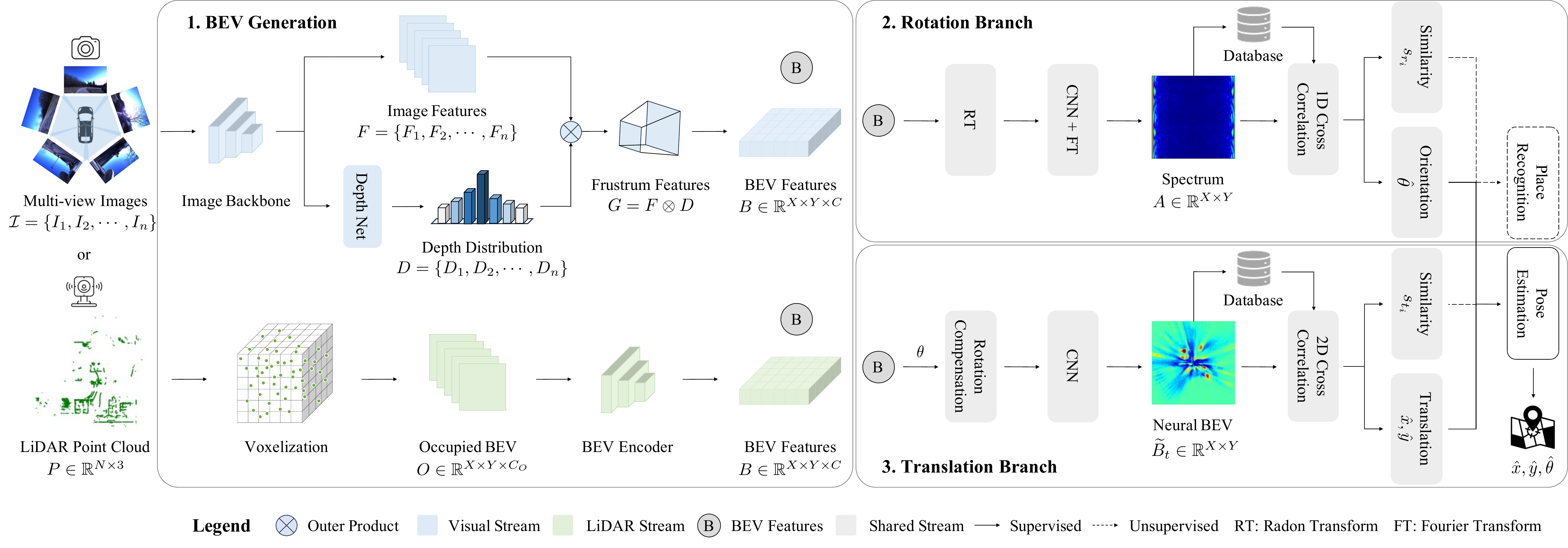}
    \vspace{-0.3cm}
    \caption{\textbf{Overview of the \blue{PR-by-PE localization} framework RING\#.} 1. Our BEV generation module converts inputs from multi-view images $\mathcal{I}$ or a LiDAR point cloud $P$ into BEV features $B$. 2. Using the Radon Transform (RT), a Convolutional Neural Network (CNN), and the Fourier Transform (FT), the rotation branch transforms $B$ into rotation-equivariant and translation-invariant representations $A$ and then uses 1D cross-correlation to estimate the relative rotation $\hat{\theta}$. 3. The translation branch compensates for the relative rotation $\theta$ \blue{which equals to the ground truth rotation $\theta^{*}$ during training and equals to the estimated rotation by the rotation $\hat{\theta}$ branch during inference} and uses a CNN to yield rotation-invariant and translation-equivariant representations $\widetilde{B}_t$. Subsequent 2D cross-correlation is employed to determine the relative translation $\hat{x}, \hat{y}$. RING\# is supervised by poses only in an end-to-end manner.}
    \label{fig:overview}
    \vspace{-0.5cm}
\end{figure*}

\subsection{Rotation Equivariant Representation}
\label{sec:rotation}
Given a bounded 2D function $f(x,y)$, we first apply the Radon transform $\mathcal{R}(\cdot)$ to it to generate a sinogram $S(\theta, \tau)$. Then, we perform a 1D CNN $\phi_h(\cdot)$ on the \blue{$\theta$} coordinate of $S(\theta, \tau)$ to extract features $S_h(\theta, \tau)$. Ultimately, we employ the Fourier transform with the magnitude operation $\mathcal{A}(\cdot)$ to the variable $\tau$, yielding the magnitude spectrum $A(\theta, \omega)$ (\ie the spectrum in Fig.~\ref{fig:overview} and Sec.~\ref{sec:rotation}).

\begin{theorem}
\label{theorem 1}
$A(\theta, \omega)$ is rotation equivariant and translation invariant.
\end{theorem}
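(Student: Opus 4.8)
The plan is to chain together the three equivariance/invariance properties established earlier for the Radon transform (Eq.~\ref{eq:radon_pose}), the 1D convolution (Lemma~\ref{lemma 1}), and the magnitude of the Fourier transform (Lemma~\ref{lemma 3}), tracking exactly how a rigid-motion $T$ (rotation $\alpha$, translation $t$) on the 2D input $f(x,y)$ propagates through the pipeline $f \mapsto S = \mathcal{R}(f) \mapsto S_h = \phi_h(S) \mapsto A = \mathcal{A}(S_h)$. First I would let $f'(x,y) = f(R_\alpha X - t)$ be the transformed input and write $S'(\theta,\tau) = \mathcal{R}(f')$. By Eq.~\ref{eq:radon_pose}, $S'(\theta,\tau) = S(\theta+\alpha, \tau - \Delta\tau)$ where $\Delta\tau = k_{\theta+\alpha}\cdot t$; the crucial point to emphasize is that this is a \emph{circular shift by $\alpha$ in the $\theta$ variable} and a \emph{shift by $\Delta\tau$ in the $\tau$ variable}, where $\Delta\tau$ depends on $\theta$ only through the (already shifted) combination $\theta+\alpha$.

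Next I would apply the 1D CNN along the $\theta$ axis. Since convolution along $\theta$ commutes with a shift in $\theta$ (Lemma~\ref{lemma 1}, translation equivariance, here on a periodic domain so it is a circular shift) and acts as the identity transformation-wise on the $\tau$ variable (the kernel only touches $\theta$), we get $S'_h(\theta,\tau) = S_h(\theta+\alpha, \tau - \Delta\tau)$ — the same shift structure survives. I should note that one must be slightly careful: the $\tau$-shift amount $\Delta\tau = \Delta x\cos(\theta+\alpha) + \Delta y\sin(\theta+\alpha)$ varies with $\theta$, so the convolution along $\theta$ does not see a pure shift in its input unless we treat each $\tau$-slice; the clean way is to observe that $S'_h(\theta,\tau)$, viewed as a function of $\theta$ for the reindexed family, equals $S_h$ evaluated at $(\theta+\alpha, \tau-\Delta\tau)$ because the convolution variable $\theta$ and the shift $\alpha$ commute while $\tau$ is a passive parameter inside the $\theta$-integral after the change of variables. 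Then I apply $\mathcal{A}(\cdot) = |\mathcal{F}(\cdot)|$ along the $\tau$ axis: by Lemma~\ref{lemma 3}, a shift of $\Delta\tau$ in $\tau$ becomes a phase factor $e^{-i\omega\Delta\tau}$ which the magnitude operation kills. Hence $A'(\theta,\omega) = |\widehat{S'_h}(\theta,\omega)| = |e^{-i\omega\Delta\tau}\widehat{S_h}(\theta+\alpha,\omega)| = A(\theta+\alpha,\omega)$.

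The conclusion is then $A'(\theta,\omega) = A(\theta+\alpha,\omega)$: the translation $t$ has been completely absorbed (translation invariance), while the rotation $\alpha$ manifests purely as a circular shift of $A$ along the $\theta$ axis (rotation equivariance, with $T_g$ the rigid motion and $S_g$ the $\theta$-shift, matching Definition~\ref{def:equivariance}). I would close by remarking that in the fully general case where $f$ undergoes both $\alpha$ and $t$ simultaneously, the argument above handles them together exactly because the $\tau$-shift is the only place $t$ enters and it is annihilated by $|\cdot|$.

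The main obstacle I anticipate is the bookkeeping in the middle step: making rigorous the claim that the 1D CNN along $\theta$ preserves the shift structure even though the accompanying $\tau$-shift $\Delta\tau$ is itself $\theta$-dependent. The honest resolution is that after the Radon step we do not yet commit to the $\theta$-dependence of $\Delta\tau$ as an obstruction — we carry $S'(\theta,\tau) = S(\theta+\alpha,\tau-\Delta\tau(\theta+\alpha))$ as an exact functional identity, note that $\phi_h$ acts by integrating against $h(\theta-\theta')$ in $\theta'$ with $\tau$ held fixed, and perform the substitution $\theta' \mapsto \theta' + \alpha$ to pull the $+\alpha$ out front; the $\tau$-dependence rides along untouched because $\tau$ is not integrated. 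After that, the Fourier-magnitude step is immediate from Lemma~\ref{lemma 3} applied slice-wise in $\theta$. A secondary point worth stating carefully is the periodicity/boundedness of $f$ so that the $\theta$-domain is a circle and "shift" means "circular shift", consistent with the $\theta \in [0,2\pi)$ convention fixed in Sec.~\ref{sec:problem}.
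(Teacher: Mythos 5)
Your proposal is correct and follows essentially the same route as the paper's own proof: it chains the Radon-transform identity of Eq.~(\ref{eq:radon_pose}), the translation equivariance of convolution (Lemma~\ref{lemma 1}), and the translation invariance of the Fourier magnitude (Lemma~\ref{lemma 3}) to arrive at $A'(\theta,\omega)=A(\theta+\alpha,\omega)$. The only difference is that you explicitly flag the $\theta$-dependence of $\Delta\tau$ when commuting the 1D convolution with the shift, a point the paper's proof passes over by simply treating $\Delta\tau$ as a passive shift parameter in the step $\mathcal{A}(\phi_h(S(\theta+\alpha,\tau-\Delta\tau)))=\mathcal{A}(S_h(\theta+\alpha,\tau-\Delta\tau))$.
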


\begin{proof} By the properties of the RT in Eq.~(\ref{eq:radon_pose}), CNN (Lemma~\ref{lemma 1}), and FT (Lemma~\ref{lemma 3}), the magnitude spectrum of $f'(x,y)$ can be expressed as
\begin{equation}
\begin{aligned}
    A'(\theta, \omega) &= \mathcal{A}(\phi_h(\mathcal{R}(f'(x,y)))) \\
    &= \mathcal{A}(\phi_h(S(\theta + \alpha, \tau - \Delta{\tau}))) \\
    &= \mathcal{A}(S_h(\theta + \alpha, \tau - \Delta{\tau})) \\
    &= A(\theta + \alpha, \omega),
\end{aligned}
\end{equation}
where $A'(\theta, \omega)$ is the resultant spectrum generated by $f'(x,y)$. The rotation equivariance satisfies Definition~\ref{def:equivariance} and the translation invariance satisfies Definition~\ref{def:invariance}. Therefore, $A(\theta, \omega)$ is rotation equivariant and translation invariant.
\end{proof}

\subsection{Translation Equivariant Representation}
\label{sec:translation}
Define $r(f(x,y), f'(x,y))$ as a function that compensates for the relative rotation between $f(x,y)$ and $f'(x,y)$. The rotation-compensated function, $\widetilde{f}(x,y)$, is expressed as
\begin{equation}
    \label{eq:rot_com_func}
    \widetilde{f}(x,y) \triangleq r(f(x,y), f'(x,y)) = f(R_{\alpha} X),
\end{equation}
where $\alpha$ is the rotation angle required to align $f(x,y)$ with $f'(x,y)$, and $R_{\alpha}$ is the corresponding rotation matrix.

Given a bounded 2D function $f(x,y)$, we use the rotation compensation function $r(\cdot, \cdot)$ to get a rotation-compensated function $\widetilde{f}(x,y)$. Then we apply a 2D CNN $\phi_h(\cdot)$ to $\widetilde{f}(x,y)$, generating features $\widetilde{f}_h(x, y)$ (\ie the neural BEV in Fig.~\ref{fig:overview} and Sec.~\ref{sec:translation}).

\begin{theorem}
\label{theorem 2}
$\widetilde{f}_h(x, y)$ is rotation invariant and translation equivariant.
\end{theorem}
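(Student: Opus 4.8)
The plan is to prove Theorem~\ref{theorem 2} by showing that the two-step construction (rotation compensation followed by a 2D CNN) eliminates all dependence on the rotation parameter $\alpha$ while converting input translations into output translations. I would begin by fixing notation: let $f'(x,y) = f(R_\alpha X - t)$ be the pose-transformed input as in Sec.~\ref{sec:radon}, with $X = (x,y)^T$. The rotation compensation function $r(\cdot,\cdot)$ of Eq.~(\ref{eq:rot_com_func}) is designed to undo the relative rotation $\alpha$ between its two arguments, so I would first establish how $r$ acts on $f$ and on $f'$ separately and check consistency. Applying the definition, $\widetilde{f}(x,y) = r(f,f') = f(R_\alpha X)$; the key computation is then $\widetilde{f'}(x,y) = r(f', f)$, which should yield $\widetilde{f'}(x,y) = f'(R_{-\alpha} X) = f(R_\alpha R_{-\alpha} X - t) = f(X - t)$. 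So the rotation-compensated version of the transformed function is just $f$ translated by $t$, with the rotation completely removed.

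Next I would feed both rotation-compensated functions through the 2D convolution $\phi_h(\cdot)$. Using Lemma~\ref{lemma 1} (translation equivariance of convolution, extended to two dimensions as noted in its proof), I would compute $\widetilde{f}_h(x,y) = \phi_h(\widetilde{f})(x,y)$ and $\widetilde{f'}_h(x,y) = \phi_h(\widetilde{f'})(x,y) = \phi_h(f(\cdot - t))(x,y) = \widetilde{f}_h(X - t)$ — wait, I must be careful here: $\widetilde{f} = f(R_\alpha X)$ is a \emph{rotated} copy of $f$, not $f$ itself, so I should instead compare $\widetilde{f'}_h$ directly against $\widetilde{f}_h$. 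Since $\widetilde{f'}(x,y) = f(X-t)$ and $\widetilde{f}(x,y) = f(R_\alpha X)$, these differ by more than a translation in general. The cleaner route is to note that the construction defines the output representation for a given observation as $\phi_h$ applied to \emph{its own} rotation-compensated form, where the reference used in $r$ is a canonical one; then for the pair $(f,f')$, after compensating both to a common rotational frame, $\widetilde{f'}_h(x,y) = \widetilde{f}_h(x - \Delta x, y - \Delta y)$, which is exactly translation equivariance per Definition~\ref{def:equivariance}, and the absence of any $\alpha$-dependence in this relation is rotation invariance per Definition~\ref{def:invariance}.

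The main obstacle, and the step I would spend the most care on, is pinning down precisely what $r(\cdot,\cdot)$ takes as its reference argument in the theorem statement versus in the framework, because the equivariance claim only makes sense relative to a consistent choice. In the inference pipeline (Fig.~\ref{fig:overview}) the rotation used for compensation is the \emph{estimated} $\hat\theta$ (or the ground-truth $\theta^*$ during training), so the honest statement is: once two observations are brought into rotational alignment by $r$, their CNN features are related by a pure translation equal to the residual translation between them, independent of the absolute orientations of either. I would therefore state the proof in terms of a reference orientation, show the rotation drops out after compensation (reducing to the $f(X-t)$ form above), and then invoke Lemma~\ref{lemma 1} to transfer the translation through the convolution. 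I would close by remarking, as in the proof of Theorem~\ref{theorem 1}, that the argument extends verbatim to multi-channel BEV features since the convolution acts channel-wise and the group action is the same on every channel, and that the rotation invariance is exact while in practice it is limited by the accuracy of the rotation estimate $\hat\theta$.
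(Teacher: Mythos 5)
Your proposal is correct and follows essentially the same route as the paper: rotation compensation brings the pair into a common rotational frame so that they differ only by a translation, and Lemma~\ref{lemma 1} then transfers that translation through $\phi_h$, giving translation equivariance, with the disappearance of $\alpha$ giving rotation invariance. The only difference is bookkeeping — the paper anchors to the frame of $f'$ (using $\widetilde{f}=r(f,f')=f(R_{\alpha}X)$ and $\widetilde{f}'=r(f',f')=f'$) while your final version anchors to the frame of $f$, and your initial detour of compensating both functions toward each other (which double-counts $\alpha$ and breaks the pure-translation relation) is something you correctly diagnosed and discarded yourself.
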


\begin{proof} After rotation compensation on $f(x, y)$ and $f'(x, y)$, we have
\begin{align}
\begin{aligned}
    \widetilde{f}(x,y) &= r(f(x,y), f'(x,y)) \\
    &= f(R_{\alpha} X),
\end{aligned} \\
\begin{aligned}
    \widetilde{f}'(x,y) &= r(f'(x,y), f'(x,y)) \\
    &= f'(x,y),
\end{aligned}
\end{align}
where $\widetilde{f}'(x,y)$ denotes the rotation compensation of $f'(x,y)$. Applying the 2D CNN $\phi_h(\cdot)$ to $\widetilde{f}(x,y)$, we can get
\begin{equation}
\begin{aligned}
    \label{eq:rot_com}
    \widetilde{f}_h(x, y) = \phi_h(\widetilde{f}(x,y)) &= \phi_h(f(R_{\alpha} X)) \\
    &= \phi_h(f(\widetilde{x}, \widetilde{y})) \\
    &= f_h(\widetilde{x}, \widetilde{y}),
\end{aligned}
\end{equation}
where $(\widetilde{x}, \widetilde{y})^T \triangleq R_{\alpha} X$. Employing the 2D CNN $\phi_h(\cdot)$ to $f'(x,y) = f(R_{\alpha} X - t)$ and utilizing the translation equivariance of CNN (Lemma~\ref{lemma 1}), we arrive at
\begin{equation}
\begin{aligned}
    \widetilde{f}'_h(x,y) = \phi_h(\widetilde{f}'(x,y)) &= \phi_h(f'(x,y)) \\
    &= \phi_h(f(R_{\alpha} X - t)) \\
    &= \phi_h(f(\widetilde{x} - \Delta{x}, \widetilde{y} - \Delta{y})) \\
    &= f_h(\widetilde{x} - \Delta{x}, \widetilde{y} - \Delta{y}).
\end{aligned}
\end{equation}
Compared with Eq.~(\ref{eq:rot_com}), $\widetilde{f}_h(x,y)$ is rotation invariant and translation equivariant from Definition~\ref{def:equivariance} and Definition~\ref{def:invariance}.
\end{proof}

\section{\blue{PR-by-PE} Localization}
In this section, we detail each component of the RING\# architecture illustrated in Fig.~\ref{fig:overview} in the following subsections.

\subsection{BEV Generation}
\label{sec:bev_generation}
BEV generation involves two distinct pipelines: one for extracting vision BEV features and the other for deriving LiDAR BEV features.

\subsubsection{Vision Stream}
For vision inputs, we adopt the view transformation module in BEVDepth \cite{li2023bevdepth} to aggregate multi-view image features from the perspective view into BEV features. It has three sub-modules: a feature extraction module, a depth distribution prediction module, and a feature aggregation module. 

\textbf{Feature Extraction.} Given a set of multi-view images $\mathcal{I} = \{I_1, I_2, \dots, I_n\}$ where $I_i \in \mathbb{R}^{3 \times H \times W}$ is the image captured by the $ith$ camera, $n$ is the number of views, we leverage ResNet-50~\cite{he2016deep} as the feature extractor $f_{e}(\cdot)$ to encode image features $F = \{F_1, F_2, \dots, F_n\}$, where $F_i = f_{e}(I_i) \in \mathbb{R}^{C_F \times H_F \times W_F}$, $C_F$ is the number of channels, $H_F$ and $W_F$ are the height and width of the feature map.

\textbf{Depth Distribution Prediction.} To lift 2D features into 3D space, we need to predict the depth distribution of the scene. We input the camera intrinsics and extrinsics into the convolutional neural network DepthNet $f_{d}(\cdot)$ proposed in~\cite{li2023bevdepth} to predict the depth distribution $D = \{D_1, D_2, \dots, D_n\}$, where $D_i = f_{d}(F_{I_i}) \in \mathbb{R}^{C_D \times H_F \times W_F}$, $C_D$ is the number of depth bins. The depth distribution $D$ is then normalized to $[0, 1]$ by a sigmoid function. The depth distribution prediction module is trained by minimizing Binary Cross Entropy (BCE) loss.
\begin{equation}
    \mathcal{L}_{d_i} = -\frac{1}{N}
\sum_{j=1}^{H_F}\sum_{k=1}^{W_F}(D_{ijk}^*\log(D_{ijk}) + (1 - D_{ijk}^*)\log(1 - D_{ijk})),
\end{equation}
where $\mathcal{L}_{d_i}$ is the depth loss of the $i$th camera image, $N = H_F \times W_F$ is the total number of pixels, and $D_{i}$ and $D_{i}^*$ are the predicted and ground truth depth distributions of the $i$th camera image. $D_{i}^*$ is generated by projecting the 3D LiDAR points onto the $i$th camera image. Then the total depth loss is $\mathcal{L}_{d} = \sum_{i=1}^{n} \mathcal{L}_{d_i}$.

\textbf{Feature Aggregation.} \blue{Based on the predicted depth distribution $D$, we lift the 2D image features $F$ into 3D frustum features $G = \{G_1, G_2, \dots, G_n\}$, calculated by}
\begin{equation}
    \blue{G_i(u, v) = F_i(u, v) \otimes D_i(u, v),}
\end{equation}
\blue{where $G_i(u, v) \in \mathbb{R}^{C_F \times C_D}$ is the output matrix at the feature pixel $(u, v)$ of the $i$th camera image,} $\otimes$ is the outer product operation. \blue{We then apply several $3 \times 3$ convolution layers to aggregate and refine the frustum features $G$ along the depth axis. These refined features are projected into 3D voxel features $V \in \mathbb{R}^{X \times Y \times Z \times C}$ by efficient voxel pooling. Finally, we reduce the vertical dimension $Z$ to obtain BEV features $B \in \mathbb{R}^{X \times Y \times C}$.} Through depth supervision and view transformation, the scale of BEV features $B$ is almost consistent, \blue{enabling that $B$ maintains a high degree of equivariance}.


\subsubsection{LiDAR Stream}
For LiDAR inputs, we directly convert a 3D LiDAR point cloud into a multi-channel BEV representation with occupancy information. Based on the BEV representation, we extract equivariant BEV features using e2cnn~\cite{e2cnn} detailed below.

\textbf{Multi-channel Occupied BEV.}
Given a 3D LiDAR point cloud $P \in \mathbb{R}^{N \times 3}$, we first remove the ground plane by the z axis and voxelize it into 3D voxels. Then we assign either $0$ (\blue{free}) or $1$ (occupied) to each voxel according to its occupancy, generating a multi-channel occupied BEV $O \in \mathbb{R}^{X \times Y \times C_O}$, where $X$, $Y$, and $C_O$ are \blue{the number of voxels in the x, y, and z axis}, respectively.

\textbf{Equivariant Feature Extraction.}
We apply e2cnn~\cite{e2cnn} to generate equivariant BEV features $B \in \mathbb{R}^{X \times Y \times C}$ from the multi-channel occupied BEV $O$. The e2cnn is a group equivariant convolutional neural network, which is equivariant to the group of 2D Euclidean transformations, namely the E(2) group. The forward pass of e2cnn is formulated as follows:
\blue{\begin{equation}
    B = E(O),
\end{equation}}
\blue{where $E(\cdot)$ denotes the e2cnn operation which comprises group convolutions and pooling. The resultant BEV features $B$ are equivariant to discrete SE(2) transformations}.

\subsection{Rotation Branch}
\label{sec:rotation}
The rotation branch comprises a rotation-equivariant representation module and a rotation estimation module. The former transforms BEV features $B$ into a rotation-equivariant and translation-invariant representation $A \in \mathbb{R}^{X \times Y}$. The latter determines the relative rotation between the query and the map keyframe via 1D circular cross-correlation applied to $A$.

\subsubsection{Rotation Equivariant Representation}
Based on the equivariant BEV features $B$ developed by the BEV generation module, we employ the Radon transform \blue{to each channel of BEV features $B$ independently} to construct a rotation-equivariant sinogram $S \in \mathbb{R}^{X \times Y \times C}$. The Radon transform converts a rotation angle on $B$ into a circular shift on $S$ in the Radon space, as illustrated in Eq.~(\ref{eq:radon_pose}). Then we apply 1D convolutional layers $\phi_{r}(\cdot)$ to $S$ along the \blue{$\theta$} dimension, squeezing the feature channel from $C$ to 1 and generating features $S_r \in \mathbb{R}^{X \times Y}$. To eliminate the effect of large translations, we deploy the Fourier transform along the \blue{$\tau$} axis of $S_r$ and take the magnitude, yielding the rotation-equivariant and translation-invariant representation $A$ according to Theorem~\ref{theorem 1}:
\begin{equation}
    A(\theta, \omega) = \mathcal{A}(\phi_{r}(\mathcal{R}(B(x, y, c)))).
    \label{rot}
\end{equation}
For general convolutional neural networks, there is a nonlinear activation function like ReLU between convolutions. As such operation is pixel-wise in the feature, the equivariance is still reserved.

\subsubsection{Rotation Estimation}
Taking advantage of the rotation equivariance and translation invariance of $A$, we solve the relative rotation $\theta$ between the query $Q$ and the map keyframe $M_i$ by 1D circular cross-correlation. The 1D cross-correlation can be formulated as follows:
\begin{equation}
\begin{aligned}
    \label{eq:rot_corr}
   c_{r_i}(d_{\theta}) &= \mathcal{S}_r(A_{Q}, A_{M_i}, d_{\theta}) \\
   &= A_{Q}(\theta, \omega) \star A_{M_i}(\theta, \omega) \\
   &= \sum_{\theta} \sum_{\omega} A_{Q}(\theta, \omega) A_{M_i}(\theta - d_{\theta}, \omega),
\end{aligned}
\end{equation}
where $c_{r_i}(d_{\theta})$ is the resultant correlation vector parameterized by $d_{\theta}$, $\mathcal{S}_r(\cdot)$ is the similarity function in the rotation branch, $A_{Q}$ and $A_{M_i}$ are the rotation-equivariant and translation-invariant representations of $Q$ and $M_i$, respectively, and $\star$ is the cross-correlation operation. As a result, we can calculate the similarity and the relative rotation angle simultaneously by
\begin{equation}
    \label{eq:rot_sim}
    s_{r_i} = \max_{d_{\theta}} c_{r_i}(d_{\theta}), \quad \hat{\theta} = \argmax_{d_{\theta}} c_{r_i}(d_{\theta}),
\end{equation}
where $s_{r_i}$ and $\hat{\theta}$ are the similarity and estimated rotation angle, respectively. Due to the property of the Radon transform, $c_{r_i}(d_{\theta})$ is a binomial distribution peaking at $\hat{\theta}$ and $\hat{\theta} - \pi$. Therefore, we choose Kullback-Leibler (KL) divergence loss as the rotation estimation loss:
\begin{equation}
\begin{aligned}
&q(d_{\theta}) = \text{softmax}(c_{r_i}(d_{\theta})), \\
&\mathcal{L}_{r} = \sum_{d_{\theta}} p(d_{\theta}) \log \frac{p(d_{\theta})}{q(d_{\theta})}, 
\end{aligned}
\end{equation}
where $q(d_{\theta})$ and $p(d_{\theta})$ are the predicted and ground truth rotation probability distributions. $p(d_{\theta})$ is a binomial gaussian distribution peaking at $\theta^{*}$ (ground truth rotation) and $\theta^{*} - \pi$.

\subsection{Translation Branch}
\label{sec:translation}
To eliminate the rotation effect, we employ the function specified in Eq.~(\ref{eq:rot_com_func}) to the BEV features of $Q$ and $M_i$, which rotates the BEV features of $Q$ by an angle $\theta$, defined as:
\begin{equation}
    \label{eq:theta}
    \theta = \begin{cases}
    \theta^{*}, & \text{in the training phase} \\
    \hat{\theta}, & \text{in the inference phase}.
\end{cases}
\end{equation}

After rotation compensation, we apply 2D convolution layers $\phi_{t}(\cdot)$ to the rotation-compensated BEV features $\widetilde{B}$, resulting in the rotation-invariant and translation-equivariant neural BEV $\widetilde{B}_t \in \mathbb{R}^{X \times Y}$ as stated in Theorem~\ref{theorem 2}. $\widetilde{B}_{t_Q}$ and $\widetilde{B}_{t_{M_i}}$ of $Q$ and $M_i$ are generated by
\begin{equation}
    \widetilde{B}_{t_Q} = \phi_{t}(B_Q(R_{\theta} X)), \quad \widetilde{B}_{t_{M_i}} = \phi_{t}(B_{M_i}),
\end{equation}
where $\theta$ is defined as in Eq.~(\ref{eq:theta}), and $R_{\theta}$ is the associated rotation matrix. Subsequently, to determine the relative translation between $Q$ and $M_i$, we employ 2D cross-correlation:
\begin{equation}
\begin{aligned}
    \label{eq:trans_corr}
   c_{t_i}(d_x, d_y) &= \mathcal{S}_t(\widetilde{B}_{t_Q}, \widetilde{B}_{t_{M_i}}, d_x, d_y) \\
   &= \widetilde{B}_{t_Q}(x, y) \star \widetilde{B}_{t_{M_i}}(x, y) \\
   &= \sum_{x} \sum_{y} \widetilde{B}_{t_Q}(x, y) \widetilde{B}_{t_{M_i}}(x - d_x, y - d_y),
\end{aligned}
\end{equation}
where $c_{t_i}(d_x, d_y)$ is the 2D correlation map and $\mathcal{S}_t(\cdot)$ is the similarity function in the translation branch. Then the similarity $s_{t_i}$ and relative translation $\hat{x}, \hat{y}$ can be estimated simultaneously by
\begin{equation}
    \label{eq:trans_sim}
    s_{t_i} = \max_{d_x, d_y} c_{t_i}(d_x, d_y), \quad \hat{x}, \hat{y} = \argmax_{d_x, d_y} c_{t_i}(d_x, d_y).
\end{equation}

We choose negative log-likelihood (NLL) loss as the translation estimation loss:
\begin{equation}
\begin{aligned}
    & q(d_x, d_y) = \text{softmax}(c_{t_i}(d_x, d_y)), \\
    &\mathcal{L}_{t} = - \log(q(x^{*}, y^{*})),
\end{aligned}
\end{equation}
where $q(d_x, d_y)$ is the predicted translation probability and $(x^{*}, y^{*})^T$ is the ground truth translation vector. \blue{The total loss is $\mathcal{L} =  \lambda_{d} \mathcal{L}_{d} + \lambda_{r} \mathcal{L}_{r} + \lambda_{t} \mathcal{L}_{t}$, where $\lambda_{d}$, $\lambda_{r}$, and $\lambda_{t}$ are the weights of the depth, rotation, and translation losses.}

\subsection{Place Recognition Derived by Pose Estimation}
In a \blue{\textit{PR-by-PE localization}} manner, place recognition is a by-product of pose estimation in our method. By Eq.~(\ref{eq:rot_sim}) and Eq.~(\ref{eq:trans_sim}), we can estimate the similarity $s_{r_i}$ and $s_{t_i}$ between the query $Q$ and each map keyframe $M_i$ in the database $\mathfrak{M}$, which enables place recognition. We select $\mathcal{S}_t(\cdot)$ in Eq.~(\ref{eq:trans_corr}) as the similarity function to recognize places.
\begin{equation}
\begin{aligned}
    \hat{M}_{i}, \hat{x}, \hat{y} &= \argmax_{M_i \in \mathfrak{M}, d_x, d_y} \mathcal{S}_t(\widetilde{B}_{t_Q}, \widetilde{B}_{t_{M_i}}, d_x, d_y) \\
    &= \argmax_{M_i \in \mathfrak{M}, d_x, d_y} \mathcal{S}_t(\phi_t(B_Q(R_{\hat{\theta}} X)), \phi_t(B_{M_i}), d_x, d_y), \\
    \text{s.t. } \hat{\theta} &= \argmax_{d_{\theta}} c_{r_i}(d_{\theta}).
\end{aligned}
\end{equation}
Finally, we re-write this similarity function as
\begin{equation}
\label{eq:gl_sim}
\hat{M}_{i}, \hat{\theta}, \hat{x}, \hat{y} = \argmax_{M_i \in \mathfrak{M}, d_{\theta}, d_x, d_y} \mathcal{S}(B_{Q}, B_{M_i}, d_{\theta}, d_x, d_y),
\end{equation}
which can be regarded as a concrete form of Eq.~(\ref{eq:gl}). This form allows for rotation and translation estimation by correlation-based exhaustive search, making the solver to Eq.~(\ref{eq:gl_sim}) global and efficient, satisfying the desirable properties.


\subsection{Pose Refinement}
\label{sec:pose_refinement}
Upon the estimated pose $\hat{\theta}, \hat{x}, \hat{y}$, we perform additional pose refinement to yield a more accurate pose. In the vision stream, we employ 3-DoF exhaustive matching on the neural BEV in a local range to refine the pose. Specifically, we rotate the query BEV features $B_Q$ by a set of candidate angles $\Theta = \{\theta_1, \theta_2, ..., \theta_m\}$ as inputs of $\phi_{t}(\cdot)$, generating $\{\widetilde{B}_{t_{Q_1}}, \widetilde{B}_{t_{Q_2}}, ..., \widetilde{B}_{t_{Q_m}}\}$. Referring to Eq.~(\ref{eq:trans_corr}) and Eq.~(\ref{eq:trans_sim}), the refined pose is computed by
\begin{equation}
\begin{aligned}
    \label{eq:pose_refine}
   \hat{\theta}, \hat{x}, \hat{y} &= \argmax_{\theta_j \in \Theta, d_x, d_y} \mathcal{S}_t(\widetilde{B}_{t_{Q_j}}, \widetilde{B}_{t_{\hat{M}_{i}}}, d_x, d_y) \\
   &= \argmax_{\theta_j \in \Theta, d_x, d_y} \mathcal{S}_t(\phi_t(B_Q(R_{\theta_j} X)), \phi_t(B_{\hat{M}_{i}}), d_x, d_y),
\end{aligned}
\end{equation}
where $R_{\theta_j}$ is the rotation matrix of $\theta_j$ and $\hat{M}_{i}$ is the retrieved map keyframe by Eq.~(\ref{eq:gl_sim}). In the LiDAR stream, we refine the 3-DoF pose by ICP alignment with FastGICP \cite{koide2021voxelized}. Ultimately, we obtain the localization pose of query $Q$ against map coordinate by Eq.~(\ref{eq:gl_pose}).
\section{Experiments}
In this section, we evaluate our method on the NCLT and Oxford datasets (Sec.~\ref{sec:dataset}) in terms of place recognition (Sec.~\ref{sec:place_recognition}), pose estimation (Sec.~\ref{sec:pose_estimation}), \blue{two-stage global localization evaluation} (Sec.~\ref{sec:two_stage_global_localization}) and \blue{one-stage global localization evaluation} (Sec.~\ref{sec:one_stage_global_localization}) under three evaluation protocols (Sec.~\ref{sec:protocol}), respectively. Moreover, we carry out ablation studies (Sec.~\ref{sec:ablation_study}) to \blue{further investigate the effectiveness of the proposed method}. Finally, we compare the runtime of our approach with other approaches (Sec.~\ref{sec:runtime_model_size}).

\subsection{Datasets}
\label{sec:dataset}
\textbf{NCLT Dataset}~\cite{carlevaris2016university} is a long-term dataset collected by a mobile segway robot in an urban environment. It contains 27 sessions with environmental changes, including weather, illumination, and season changes. The ground truth 6DoF poses are provided by a high-precision RTK GPS system. It contains loops under various rotation changes, which is widely used in the field of global localization. It provides six-view camera images captured by Pointgrey Ladybug3 omnidirectional camera and 3D scans collected by Velodyne HDL-32E. In our experiments, we use five-view camera images as inputs to train RING\#-V since the camera 0 faces the sky, which is useless for localization. Besides, we also crop the images and resize them to 224 $\times$ 384 to save the training memory.

\textbf{Oxford Radar RobotCar Dataset}~\cite{barnes2020oxford} is a large-scale dataset collected by a mobile car mounted on multi-view cameras, LiDAR (Velodyne HDL-32E) and Radar (FMCW) sensors, which is a radar extension of the Oxford Robotcar dataset~\cite{maddern20171}. It covers a large area of Oxford city center and contains multiple sessions with environmental changes in January 2019. The car is equipped with one Point Grey Bumblebee XB3 trinocular camera and three Point Grey Grasshopper2 monocular cameras for 360$^\circ$ vision sensing. Additionally, it utilizes two Velodyne HDL-32E mounted on the left and right of the radar for 3D scene understanding. In our experiments, we leverage four-view camera images captured from the center stereo camera of Point Grey Bumblebee XB3 and three Point Grey Grasshopper2 monocular cameras to train the vision model. Likewise, we crop these images and resize them to 320 $\times$ 640 during image preprocessing. In the LiDAR stream, we concatenate the point clouds collected by the left and right 3D LiDAR sensors into a single point cloud for training and evaluation. Since the ground truth poses of the Oxford dataset are not enough precise, we apply FastGICP~\cite{koide2021voxelized} for ICP refinement to generate more accurate poses.

\subsection{Implementation Details}
\label{sec:experimental_settings}
We implement our method in PyTorch~\cite{paszke2019pytorch} and train it on two NVIDIA GeForce RTX 4090 GPUs. We use the Adam optimizer~\cite{kingma2014adam} with a learning rate of \blue{$1 \times 10^{-3}$} and a weight decay of \blue{$1 \times 10^{-4}$}. We follow the batch size strategy in~\cite{komorowski2021minkloc3d} for batch generation, setting our batch size to 16. \blue{The loss weights $\lambda_{d}$, $\lambda_{r}$, and $\lambda_{t}$ are set to 3.0, 1.0 and 1.0.} We exclusively rely on pose supervision, training our model with data collected within a 25m radius of the current pose for 30 epochs. In the vision stream, we adopt the method in BEVDepth~\cite{li2023bevdepth} to construct the BEV features $B \in \mathbb{R}^{128 \times 128 \times 80}$. The BEV features $B$ represent the spatial range of $[102.4m \times 102.4m]$ with a grid size of $0.8m$. In the LiDAR stream, the multi-channel occupied BEV $O \in \mathbb{R}^{160 \times 160 \times 20}$ and the extracted BEV features $B \in \mathbb{R}^{160 \times 160 \times 128}$ both represent a region of $[140m \times 140m]$ with a grid size of $0.875m$. \blue{We train models on the NCLT and Oxford datasets separately.}


\subsection{Evaluation Protocols}
\label{sec:protocol}
We propose three protocols to evaluate compared methods under different variations: place variation, appearance variation, and both place and appearance variation.
\begin{itemize}
  \item \textbf{Protocol 1: Place Variation.} We split the sessions into training and test sets. Then, we train the model on the split training set and evaluate it on the split test set. The test sessions are collected in the same season and weather conditions as the training sessions. For the NCLT dataset, we choose ``2012-02-04'' as the map session and ``2012-03-17'' as the query session, and then follow~\cite{lu2023deepring} to split the training and test sets. For the Oxford dataset, we split ``2019-01-11-13-24-51'' and ``2019-01-15-13-06-37'' sessions into training and test sets for training and evaluation respectively.
  \item \textbf{Protocol 2: Appearance Variation.} We train the model on several entire sessions and test it on other entire sessions that are not used for training. The test sessions are collected in different seasons and weather conditions. For the NCLT dataset, we select ``2012-02-04'', ``2012-03-17'', ``2012-05-26'', and ``2013-04-05'' sequences for training and ``2012-01-08'', ``2012-08-20'' and ``2012-11-16'' sequences for testing. For the Oxford dataset, we select the entire sequence of ``2019-01-11-13-24-51'' as the map session and the entire sequence of ``2019-01-15-13-06-37'' as the query session for training. In the test phase, ``2019-01-11-14-37-14'' is used as the map session, and ``2019-01-17-12-48-25'' is used as the query session.
  \item \textbf{Protocol 3: Place and Appearance Variation.} We utilize the trained model in Protocol 1 to evaluate the performance on the test sessions in Protocol 2. Specifically, we test all methods on ``2012-01-08'', ``2012-08-20'', and ``2012-11-16'' sequences of Protocol 2 using the model trained on the split ``2012-02-04'' and ``2012-03-17'' sequences of Protocol 1 for the NCLT dataset. Likewise, we test all methods on ``2019-01-11-14-37-14'' and ``2019-01-17-12-48-25'' sequences of Protocol 2 using the model trained on the split ``2019-01-11-13-24-51'' and ``2019-01-15-13-06-37'' sequences of Protocol 1 for the Oxford dataset.
\end{itemize}

In the following experiments, we perform multi-session localization evaluation, where the query and map trajectories are sampled at 5m and 20m intervals, as used in~\cite{lu2022one,lu2023deepring, xu2023ring++}. \blue{Table~\ref{tab:dataset} summarizes the number of training and test samples for each dataset under the three evaluation protocols.}

\begin{table}[htbp]
    \renewcommand\arraystretch{1.1}
    \centering
    \blue{\caption{Datasets for Global Localization Evaluation}
    \label{tab:dataset}
    \begin{tabular}{lccc}
    \toprule[1pt]
    \multirow{2}{*}{}{\begin{tabular}[c]{@{}c@{}}Dataset\end{tabular}} &
    \multirow{2}{*}{}{\begin{tabular}[c]{@{}c@{}}Protocol 1 \\ (\# Train / Test)\end{tabular}} &
    \multirow{2}{*}{}{\begin{tabular}[c]{@{}c@{}}Protocol 2 \\ (\# Train / Test)\end{tabular}} &
    \multirow{2}{*}{}{\begin{tabular}[c]{@{}c@{}}Protocol 3 \\ (\# Train / Test)\end{tabular}} \\ \midrule
    NCLT \cite{carlevaris2016university} & 28331 / 265 & 80099 / 2729 & 28331 / 2729 \\
    Oxford \cite{barnes2020oxford} & 40843 / 274  & 50866 / 1733 & 40843 / 1733 \\
    
    \bottomrule[1pt]
    \end{tabular}}
    \vspace{-0.5cm}
\end{table}
\begin{table*}[htbp]
    \renewcommand\arraystretch{1.1}
    \centering
    \caption{Quantitive Results of Place Recognition of Protocol 1}
    \label{tab:pr_protocol1}
    \begin{threeparttable}
    \begin{tabular}{clccccccc}
    \toprule[1pt]
    \multicolumn{2}{c}{\multirow{2}{*}{Approach}} & \multirow{2}{*}{\blue{Representation}} & \multicolumn{3}{c}{NCLT} & \multicolumn{3}{c}{Oxford} \\ \cline{4-9}
    \multicolumn{3}{c}{} & Recall@1 $\uparrow$ & F1 Score $\uparrow$ & AUC $\uparrow$ & Recall@1 $\uparrow$ & F1 Score $\uparrow$ & AUC $\uparrow$ \\ \hline
    \multirow{8}{*}{Vision} & NetVLAD \cite{arandjelovic2016netvlad} & \blue{PV} & 0.37 & 0.51 & 0.43 & 0.62 & 0.75 & 0.69 \\
    & Patch-NetVLAD \cite{hausler2021patch} & \blue{PV} & 0.41 & 0.54 & 0.43 & 0.67 & 0.78 & 0.73 \\
    & AnyLoc \cite{keetha2023anyloc} & \blue{PV} & 0.47 & 0.60 & 0.42 & 0.73 & 0.83 & 0.81 \\
    & SFRS \cite{ge2020self} & \blue{PV} & 0.50 & 0.62 & 0.54 & 0.74 & 0.83 & 0.86 \\
    & \cellcolor{gray!30}\blue{Exhaustive SS \cite{detone2018superpoint, sarlin2020superglue}}$^{\dagger}$ & \cellcolor{gray!30}\blue{PV} & \cellcolor{gray!30}\blue{0.66} & \cellcolor{gray!30}\blue{0.74} & \cellcolor{gray!30}\blue{\underline{0.82}} & \cellcolor{gray!30}\blue{\textbf{0.86}} & \cellcolor{gray!30}\blue{\textbf{0.91}} & \cellcolor{gray!30}\blue{\textbf{0.95}} \\
    & \blue{BEV-NetVLAD-MLP} & \blue{BEV} & \blue{0.60} & \blue{0.71} & \blue{0.64} & \blue{0.74} & \blue{0.83} & \blue{0.72} \\
    & \blue{vDiSCO \cite{xu2023leveraging}} & \blue{BEV} & \blue{\underline{0.76}} & \blue{\underline{0.82}} & \blue{0.73} & \blue{\underline{0.80}} & \blue{\underline{0.87}} & \blue{0.87} \\
    & \cellcolor{gray!30}\textbf{RING\#-V (Ours)} & \cellcolor{gray!30}\blue{BEV} & \cellcolor{gray!30}\textbf{0.82} & \cellcolor{gray!30}\textbf{0.86} & \cellcolor{gray!30}\textbf{0.93} & \cellcolor{gray!30}\textbf{0.86} & \cellcolor{gray!30}\textbf{0.91} & \cellcolor{gray!30}\underline{0.94} \\ \hline
    \multirow{7}{*}{LiDAR} & OverlapTransformer \cite{ma2022overlaptransformer} & \blue{RI} & 0.71 & 0.78 & 0.76 & 0.71 & 0.81 & 0.70 \\
    & LCDNet \cite{cattaneo2022lcdnet} & \blue{PC} & 0.70 & 0.78 & 0.75 & 0.62 & 0.75 & 0.69 \\
    & DiSCO \cite{xu2021disco} & \blue{Polar BEV} & 0.76 & 0.82 & 0.80 & 0.87 & 0.91 & 0.86 \\
    & RING \cite{lu2022one} & \blue{BEV} & 0.67 & 0.76 & 0.79 & 0.76 & 0.84 & 0.88 \\
    & \blue{RING++ \cite{xu2023ring++}} & \blue{BEV} & \blue{0.68} & \blue{0.77} & \blue{0.78} & \blue{0.83} & \blue{0.89} & \blue{\underline{0.93}} \\
    & EgoNN \cite{komorowski2021egonn} & \blue{PC} & \underline{0.80} & \underline{0.85} & \underline{0.85} & \underline{0.89} & \underline{0.92} & \textbf{0.95} \\
    & \cellcolor{gray!30}\textbf{RING\#-L (Ours)} & \cellcolor{gray!30}\blue{BEV} & \cellcolor{gray!30}\textbf{0.85} & \cellcolor{gray!30}\textbf{0.87} & \cellcolor{gray!30}\textbf{0.91} & \cellcolor{gray!30}\textbf{0.91} & \cellcolor{gray!30}\textbf{0.93} & \cellcolor{gray!30}\underline{0.93} \\
  
    \bottomrule[1pt]
    \end{tabular}
    \begin{tablenotes}
        \footnotesize
        \item[$\dagger$] \blue{SS: Superpoint + SuperGlue, PV: Perspective View, BEV: Bird's-Eye-View, RI: Range Image, PC: Point Cloud. \gray{Gray} rows represent the results of PR-by-PE localization methods.} The best result is highlighted in \textbf{bold} and the second best is \underline{underlined}.
    \end{tablenotes}
    \end{threeparttable}
    \vspace{-0.4cm}
\end{table*}

\begin{figure*}[htbp]
    \centering
    \subfigure[]{
		\includegraphics[trim=1.5cm 1.8cm 0.75cm 1.247cm, clip, width=2.405cm]{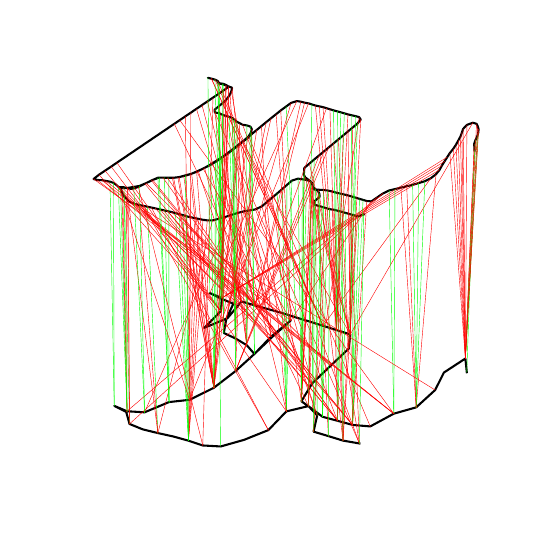}}
    \hspace{-0.515cm}
    \vspace{-0.04cm}
    \subfigure[]{
		\includegraphics[trim=1.5cm 1.8cm 0.75cm 1.247cm, clip, width=2.405cm]{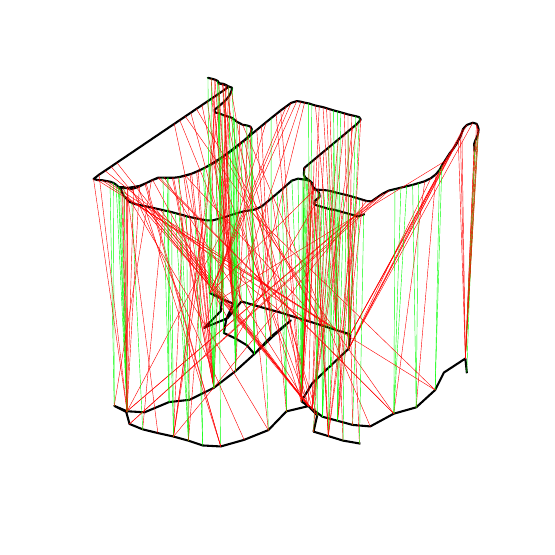}}
    \hspace{-0.515cm}
    \vspace{-0.04cm}
    \subfigure[]{
		\includegraphics[trim=1.5cm 1.8cm 0.75cm 1.247cm, clip, width=2.405cm]{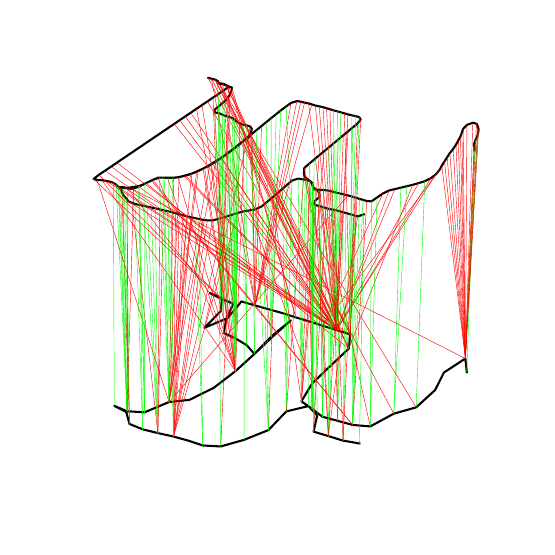}}	
    \hspace{-0.515cm}
    \vspace{-0.04cm}
    \subfigure[]{
		\includegraphics[trim=1.5cm 1.8cm 0.75cm 1.247cm, clip, width=2.405cm]{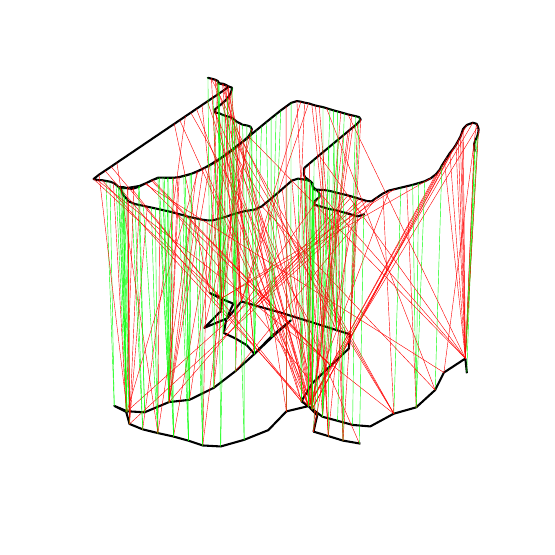}}
    \hspace{-0.515cm}
    \vspace{-0.04cm}
    \subfigure[]{
		\includegraphics[trim=1.5cm 1.8cm 0.75cm 1.247cm, clip, width=2.405cm]{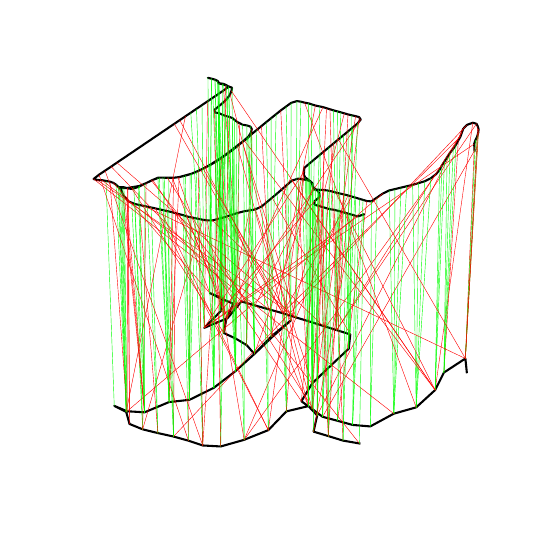}}
    \hspace{-0.515cm}
    \vspace{-0.04cm}    
    \subfigure[]{
		\includegraphics[trim=1.5cm 1.8cm 0.75cm 1.247cm, clip, width=2.405cm]{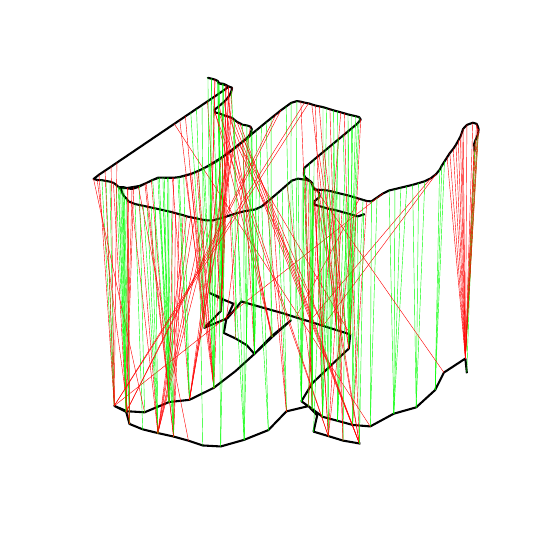}}
    \hspace{-0.515cm}
    \vspace{-0.04cm}
    \subfigure[]{
    \includegraphics[trim=1.5cm 1.8cm 0.75cm 1.247cm, clip, width=2.405cm]{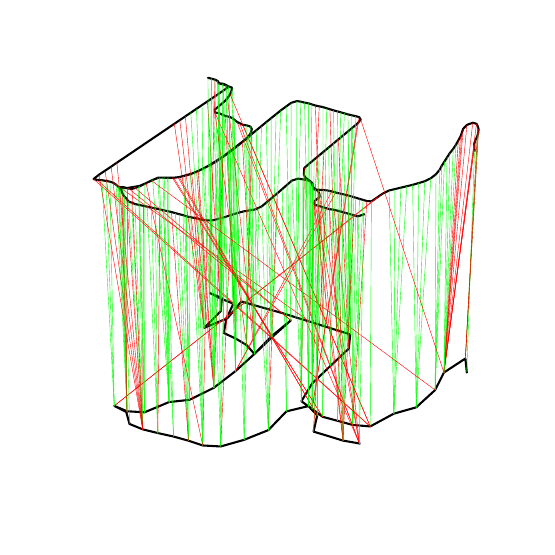}}
    \hspace{-0.515cm}
    \vspace{-0.04cm}
    \subfigure[]{
		\includegraphics[trim=1.5cm 1.8cm 0.75cm 1.247cm, clip, width=2.405cm]{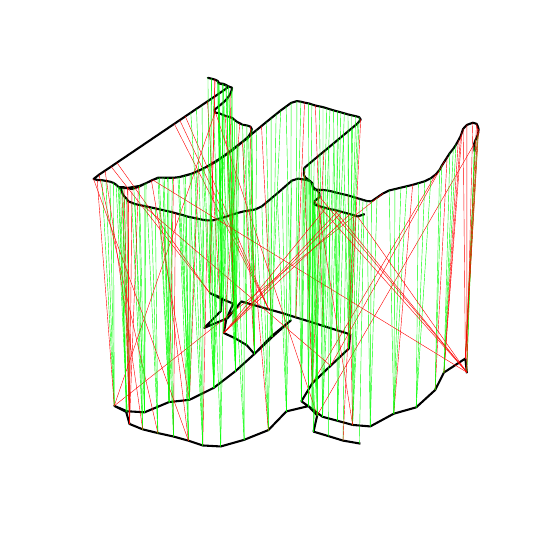}}
    \hspace{-0.515cm}
    \vspace{-0.04cm}
    \subfigure[]{
		\includegraphics[trim=1.5cm 1.8cm 0.75cm 1.247cm, clip, width=2.405cm]{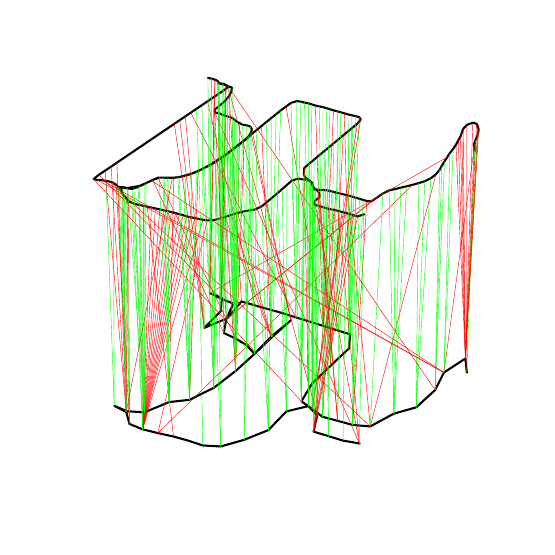}}
    \hspace{-0.515cm}
    \vspace{-0.01cm}
    \subfigure[]{
		\includegraphics[trim=1.5cm 1.8cm 0.75cm 1.247cm, clip, width=2.405cm]{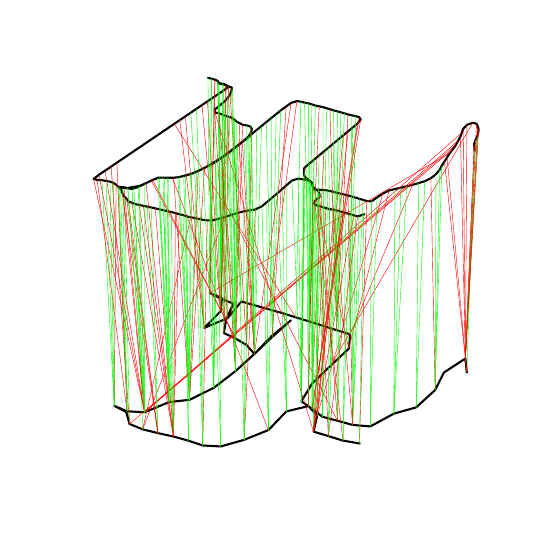}}
    \hspace{-0.515cm}
    \vspace{-0.01cm}
    \subfigure[]{
		\includegraphics[trim=1.5cm 1.8cm 0.75cm 1.247cm, clip, width=2.405cm]{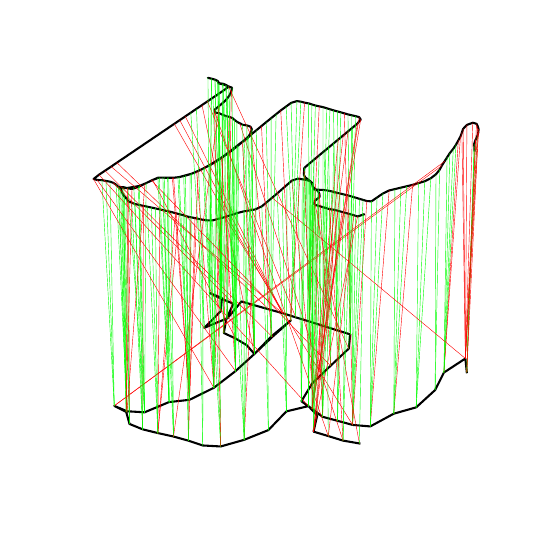}}
    \hspace{-0.515cm}
    \vspace{-0.01cm}
    \subfigure[]{
		\includegraphics[trim=1.5cm 1.8cm 0.75cm 1.247cm, clip, width=2.405cm]{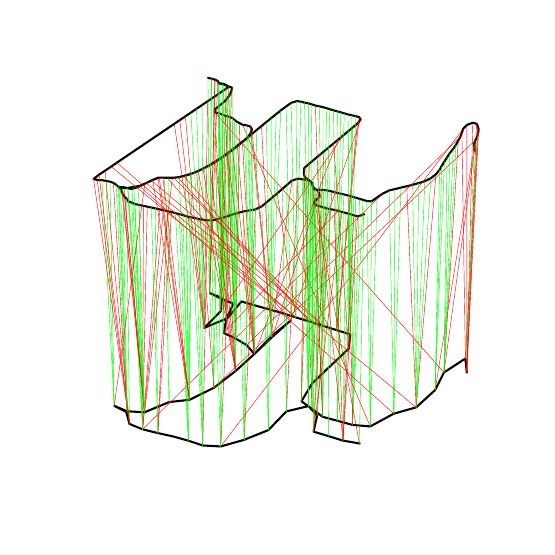}}    
    \hspace{-0.515cm}
    \vspace{-0.01cm}
    \subfigure[]{
		\includegraphics[trim=1.5cm 1.8cm 0.75cm 1.247cm, clip, width=2.405cm]{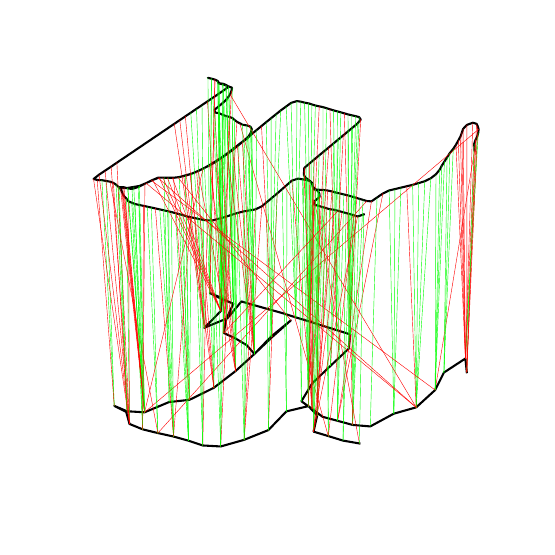}}    
    \hspace{-0.515cm}
    \vspace{-0.01cm}
    \subfigure[]{
		\includegraphics[trim=1.5cm 1.8cm 0.75cm 1.247cm, clip, width=2.405cm]{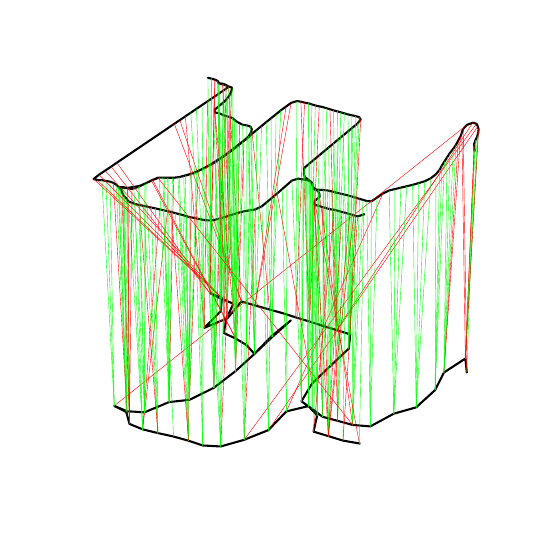}}
    \hspace{-0.515cm}
    \vspace{-0.01cm}
    \subfigure[]{
		\includegraphics[trim=1.5cm 1.8cm 0.75cm 1.247cm, clip, width=2.405cm]{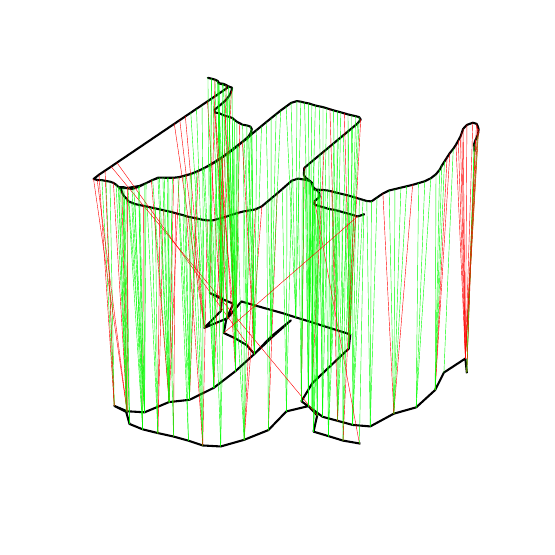}}
    \caption{\blue{\textbf{Top 1 retrieved matches for protocol 1 on the NCLT dataset.} (a) NetVLAD~\cite{arandjelovic2016netvlad}. (b) Patch-NetVLAD~\cite{hausler2021patch}. (c) AnyLoc~\cite{keetha2023anyloc}. (d) SFRS~\cite{ge2020self}. (e) Exhaustive SS~\cite{detone2018superpoint, sarlin2020superglue}. (f) BEV-NetVLAD-MLP. (g) vDiSCO~\cite{xu2023leveraging}. (h) RING\#-V (Ours). (i) OverlapTransformer~\cite{ma2022overlaptransformer}. (j) LCDNet~\cite{cattaneo2022lcdnet}. (k) DiSCO~\cite{xu2021disco}. (l) RING~\cite{lu2022one}. (m) RING++~\cite{xu2023ring++}. (n) EgoNN~\cite{komorowski2021egonn}. (o) RING\#-L (Ours).} The black line \raisebox{0.5ex}{\rule{0.3cm}{0.5pt}} represents the trajectory, the green line {\color{green}{\raisebox{0.5ex}{\rule{0.3cm}{0.5pt}}}} represents the correct retrieval match, and the red line {\color{red}{\raisebox{0.5ex}{\rule{0.3cm}{0.5pt}}}} represents the wrong retrieval match.}
    \label{fig:nclt_matches}
    \vspace{-0.4cm}
\end{figure*}

\subsection{Evaluation of Place Recognition}
\label{sec:place_recognition}
\textbf{Metrics.} \blue{For place recognition evaluation, a revisit threshold $r$ is used to determine whether a retrieved match is correct.} \blue{A retrieval is deemed successful if it lies within this threshold from the query.} \blue{In this experiment, we set the revisit threshold to $r = 10m$.} We leverage five metrics to assess the performance of all methods: 1) Recall@1: the percentage of queries whose top 1 retrieved match is correct; 2) F1 Score\blue{: the harmonic mean of precision (the ratio of true positives to all retrieved matches) and recall (the ratio of true positives to actual positives)} at \blue{various} thresholds\blue{\footnotemark[4]}, \blue{with the maximum F1 score reported}; 3) Precision-Recall Curve: a curve that plots the precision and recall of the retrieval results as the threshold\blue{\footnotemark[4]} changes; 4) AUC (Area Under Curve): the area under the precision-recall curve to quantitatively evaluate the performance of the precision-recall curve; 5) Recall@N: the ratio of queries where at least one of the top N retrieved matches is correct.

\footnotetext[4]{\blue{In a retrieval system, the threshold refers to the similarity score that determines if a retrieved item is considered a positive match.}}

\textbf{Baselines.} \blue{We evaluate our method against a range of state-of-the-art approaches across vision and LiDAR modalities. In the vision domain, we compare against several image matching and retrieval techniques in the PV space, including Exhaustive SS (SuperPoint~\cite{detone2018superpoint} + SuperGlue~\cite{sarlin2020superglue}),} NetVLAD~\cite{arandjelovic2016netvlad}, Patch-NetVLAD~\cite{hausler2021patch}, AnyLoc~\cite{keetha2023anyloc}, and SFRS~\cite{ge2020self}. \blue{Exhaustive SS combines SuperPoint and SuperGlue to exhaustively perform feature matching, selecting the match with the highest number of inliers as the top 1 retrieval. For} AnyLoc, we select the ViT-G AnyLoc-VLAD-DINOv2 model \blue{that uses the foundation model DINOv2~\cite{oquab2023dinov2} for feature extraction}. \blue{To ensure a fair comparison under a multi-camera setup, these image retrieval methods utilize panoramic images as inputs. Additionally, we assess BEV-based approaches BEV-NetVLAD-MLP and vDiSCO~\cite{xu2023leveraging}. BEV-NetVLAD-MLP consists of a shared BEV-based backbone with a NetVLAD head for place recognition and a multi-layer perceptron (MLP) head for pose estimation, where the BEV-based backbone is the same as RING\#, providing a direct comparison in the BEV space.} In the \blue{LiDAR domain}, we compare our method with \blue{six leading approaches:} OverlapTransformer~\cite{ma2022overlaptransformer}, LCDNet~\cite{cattaneo2022lcdnet}, DiSCO~\cite{xu2021disco}, RING~\cite{lu2022one}, \blue{RING++~\cite{xu2023ring++},} and EgoNN~\cite{komorowski2021egonn}. \blue{Except for Exhaustive SS, AnyLoc, and SFRS, for which we use the authors' pre-trained weights, we retrain all other methods using the official implementations on both datasets.}



\begin{figure*}[htbp]
    \centering
    \includegraphics[width=18cm]{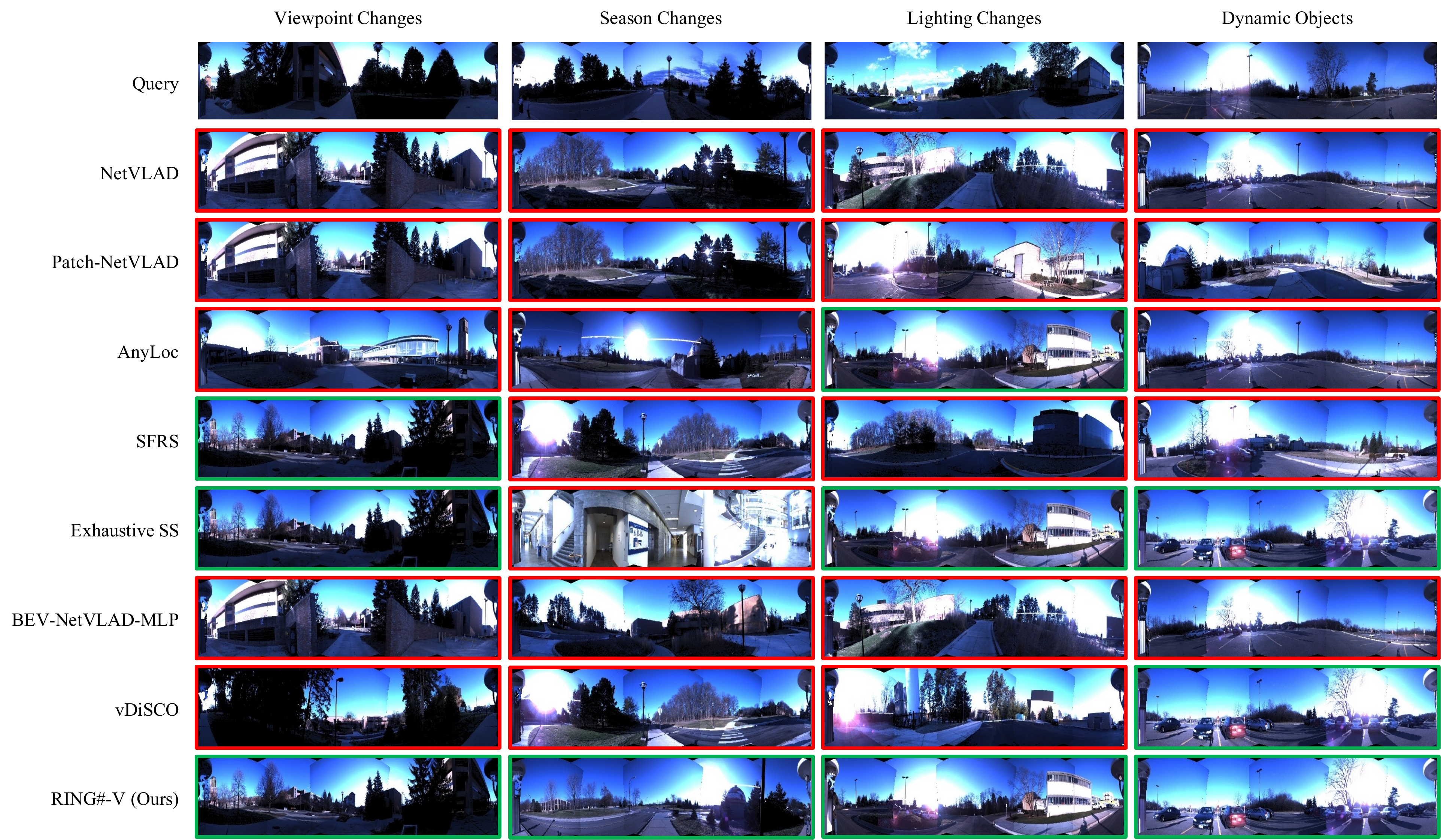}
    \vspace{-0.1cm}
    \caption{\textbf{Qualitative vision examples of some queries and their top 1 retrieved matches on the NCLT dataset.} The red rectangle {\color{red}{$\square$}} represents the wrong retrieval result and the green rectangle {\color{green}{$\square$}} represents the correct retrieval result.}
    \label{fig:case_pr_vision}
    \vspace{-0.5cm}
\end{figure*}

\textbf{Results.} \blue{Table~\ref{tab:pr_protocol1} compares the place recognition performance of all methods under Protocol 1. Apart from Recall@1 in Table~\ref{tab:pr_protocol1}, we report the recall of the top 10 retrieved matches and the precision-recall curve of all methods in Fig.~\ref{fig:nclt_oxford_Recall_N} and Fig.~\ref{fig:nclt_oxford_PR}, respectively. Overall, RING\#-V and RING\#-L demonstrate superior performance among vision- and lidar-based methods, verifying the effectiveness of the proposed \textit{PR-by-PE localization} paradigm. Specifically, we provide several key findings as follows:}
\blue{\begin{itemize}
  \item Among the PV-based methods, AnyLoc and SFRS leverage the foundation model and self-supervised learning, respectively, to extract powerful features for place recognition, outperforming NetVLAD and Patch-NetVLAD on both datasets. However, Exhaustive SS, the only method following the \textit{PR-by-PE localization} paradigm, achieves the best performance across all metrics on both NCLT and Oxford datasets, surpassing the second-best PV-based method, SFRS, by 16\% in Recall@1 on the NCLT dataset. This demonstrates that pose estimation sufficiently improves the performance of place recognition, further validating the effectiveness of the proposed \textit{PR-by-PE localization} paradigm.
  \item BEV-based methods generally perform better than PV-based methods due to the inherent structural awareness of BEV features. Notably, vDiSCO, which explicitly models rotation invariance, surpasses BEV-NetVLAD-MLP by a large margin, particularly on the NCLT dataset that contains more viewpoint changes. However, vDiSCO still lags behind RING\#-V on both datasets as it follows the \textit{PR-then-PE localization} paradigm.
  \item LiDAR-based methods, which benefit from the 3D geometric information invariant to appearance changes and design rotation-invariant global descriptors, have significant advantages over vision-based methods. Compared to these methods, RING\#-L achieves state-of-the-art performance on both datasets. This verifies the effectiveness of our proposed \textit{PR-by-PE localization} paradigm, aligning with the findings in vision-based methods.
  \item While RING\#-L exhibits higher Recall@1 than RING\#-V due to its reliance on explicit geometric point clouds, RING\#-V outperforms RING\#-L in precision-recall curve (Fig.~\ref{fig:nclt_oxford_PR}) and AUC (Table~\ref{tab:pr_protocol1}). The rich texture information captured by cameras enables the model to make more confident and accurate predictions, reducing false positives when classifying similar but distinct places.
\end{itemize}}

We further visualize the top 1 retrieved matches on two distinct trajectories of the NCLT dataset in Fig.~\ref{fig:nclt_matches} \blue{and the Oxford dataset in Appendix~\ref{sec:appendix_pr}}, consistent with the results of Recall@1. Fig.~\ref{fig:case_pr_vision} and Fig.~\ref{fig:case_pr_lidar} display some qualitative results of queries and their top 1 retrieved matches. \blue{Our approach is able to retrieve correct matches in various challenging scenarios, such as large viewpoint changes and seasonal changes, where other methods are prone to fail.} The underlying reason is that \blue{explicitly embedding equivariance into the network enables the framework} to learn patterns invariant to both environment and viewpoint changes. In contrast, compared methods tend to learn patterns that are coupled with changes in environment and viewpoint. 

\begin{figure*}[htbp]
    \centering
    \includegraphics[width=18cm]{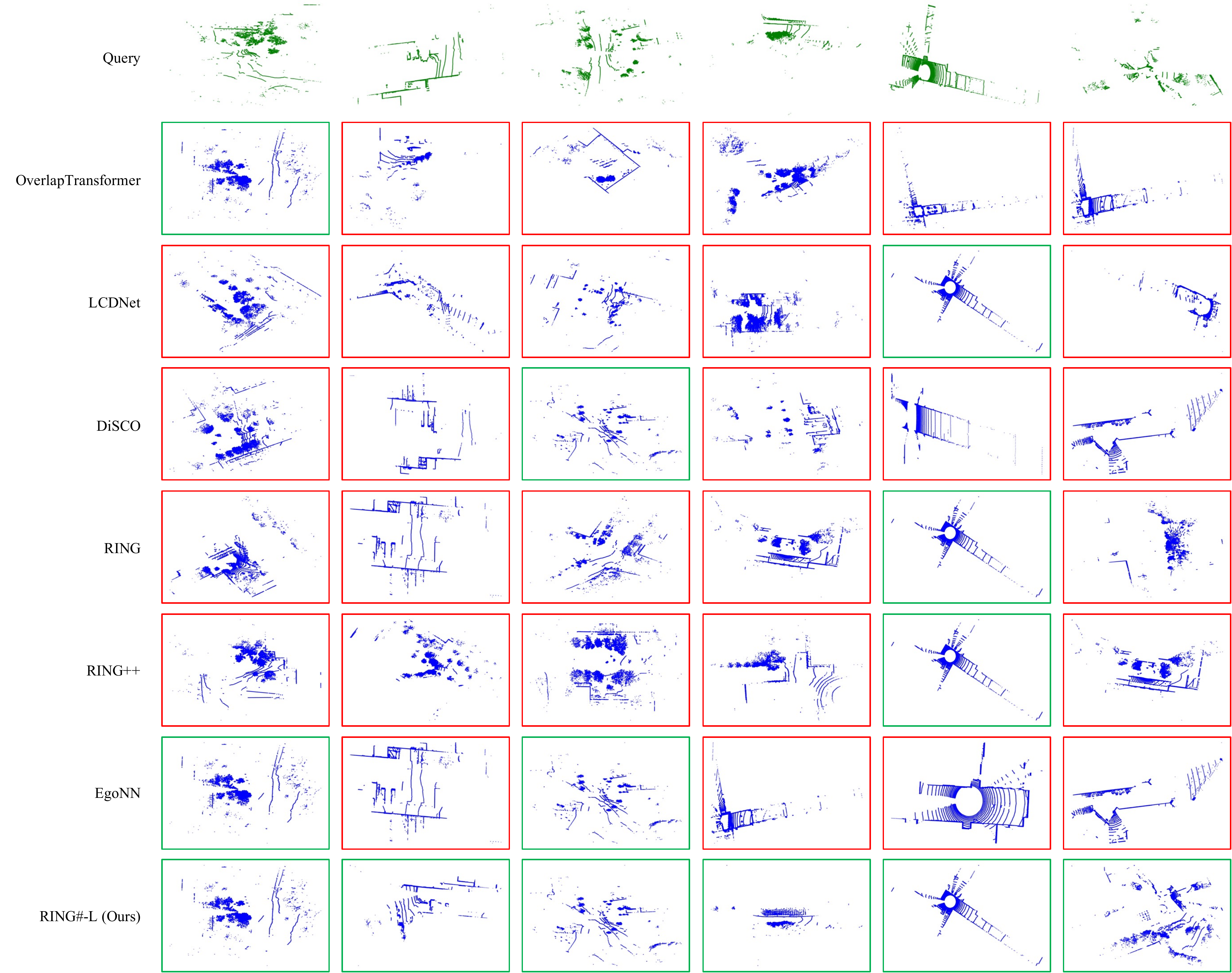}
    \caption{\textbf{Qualitative LiDAR examples of some queries and their top 1 retrieved matches on the NCLT dataset.} The red rectangle {\color{red}{$\square$}} represents the wrong retrieval result and the green rectangle {\color{green}{$\square$}} represents the correct retrieval result.}
    \label{fig:case_pr_lidar}
    \vspace{-0.5cm}
\end{figure*}
\begin{figure*}
    \centering
    \includegraphics[width=18cm]{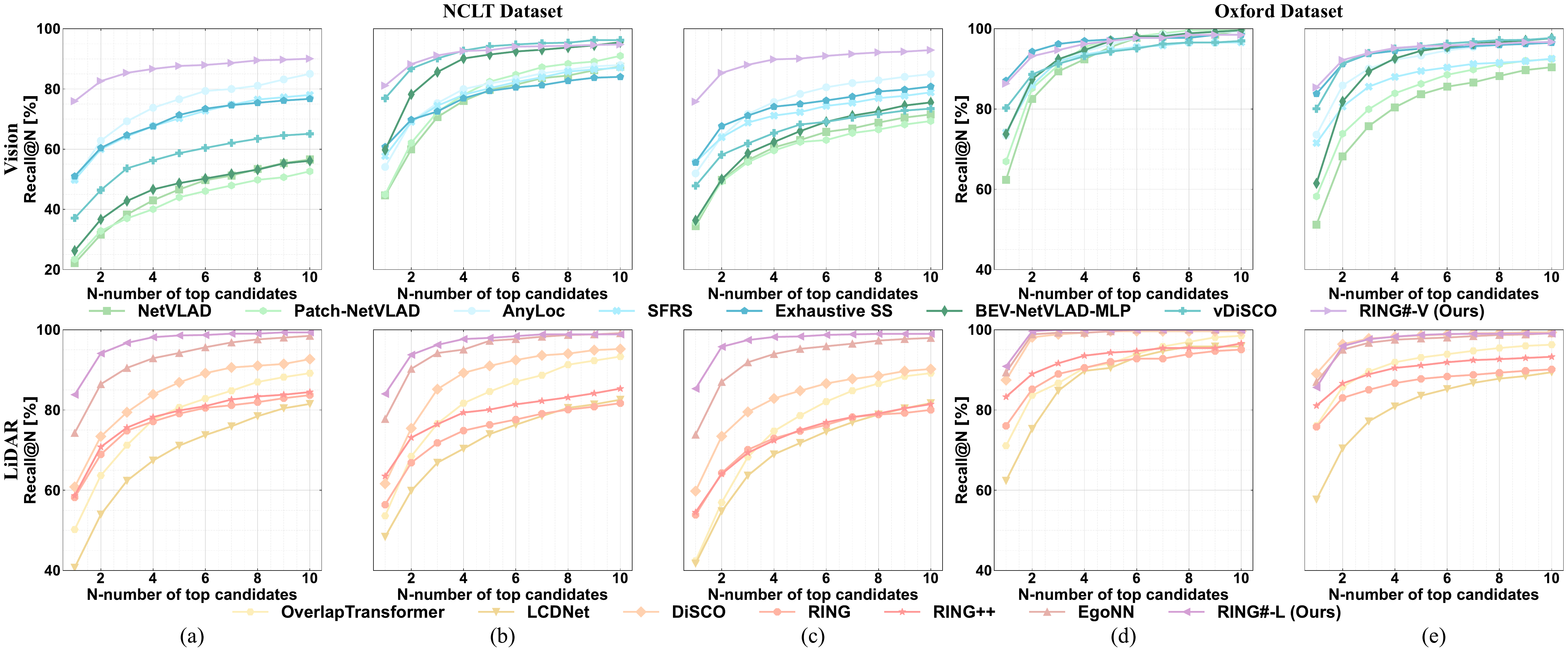}
    \vspace{-0.3cm}
    \caption{\blue{\textbf{Recall@N curves on the NCLT and Oxford datasets.} (a) 2012-01-08 to 2012-08-20. (b) 2012-01-08 to 2012-11-16. (c) 2012-08-20 to 2012-11-16. (d) 2019-01-11-13-24-51 to 2019-01-15-13-06-37. (e) 2019-01-11-13-24-51 to 2019-01-17-12-48-25.}}
    \label{fig:nclt_oxford_Recall_N}
    \vspace{-0.4cm}
\end{figure*}
\begin{figure*}
    \centering
    \includegraphics[width=18cm]{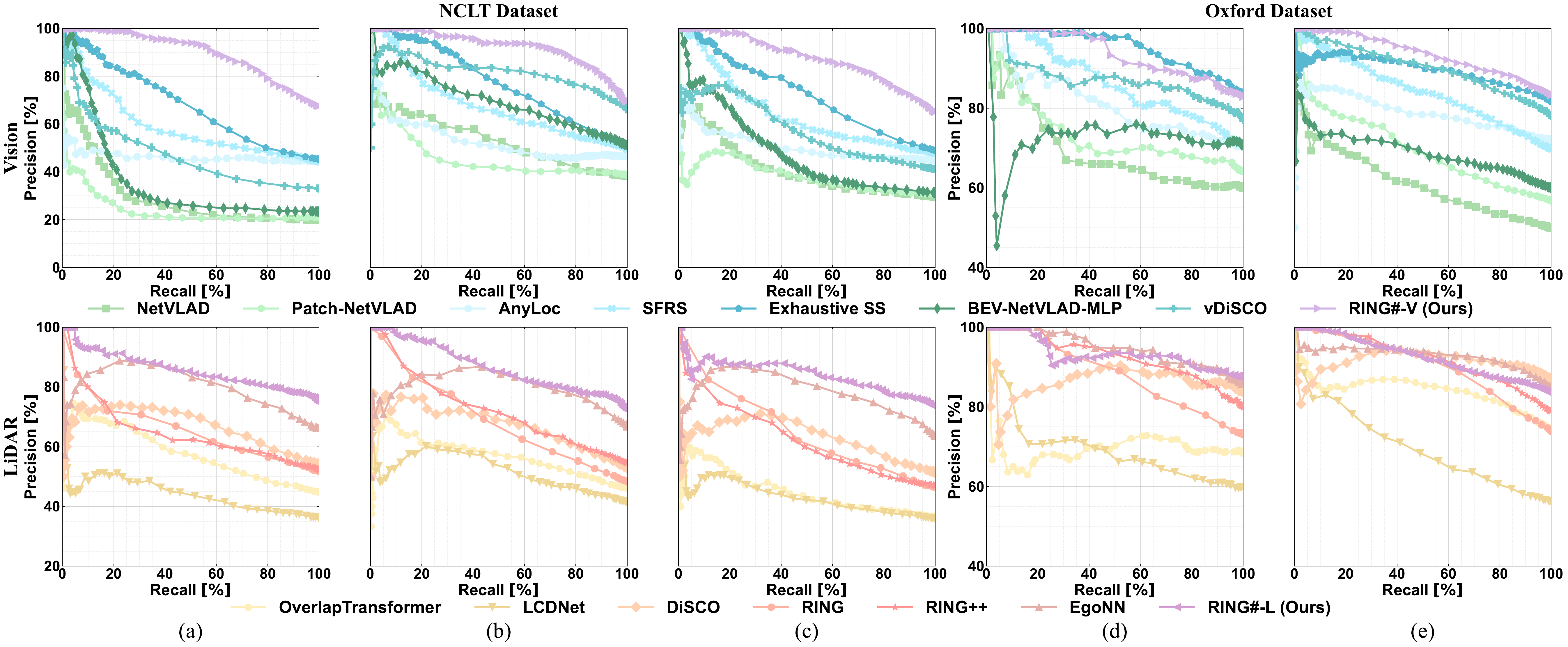}
    \vspace{-0.4cm}
    \caption{\blue{\textbf{Precision-recall curves on the NCLT and Oxford datasets.} (a) 2012-01-08 to 2012-08-20. (b) 2012-01-08 to 2012-11-16. (c) 2012-08-20 to 2012-11-16. (d) 2019-01-11-13-24-51 to 2019-01-15-13-06-37. (e) 2019-01-11-13-24-51 to 2019-01-17-12-48-25.}}
    \label{fig:nclt_oxford_PR}
    \vspace{-0.4cm}
\end{figure*}  

\subsection{Evaluation of Pose Estimation}
\label{sec:pose_estimation}
We perform pure pose estimation evaluation under Protocol 1 without the interference of place recognition to compare the pure pose estimation performance. In detail, we estimate the relative pose of each query with every reference within a 10m radius in the map trajectory.

\textbf{Metrics.} We use three metrics to evaluate the pose estimation performance: 1) RE (Rotation Error), which measures the difference between the estimated and ground truth rotation angle; 2) TE (Translation Error), which measures the difference between the estimated and ground truth translation; 3) PE Succ. (Pose Estimation Success Rate), which calculates the ratio of queries that satisfy $\text{RE} < 5^{\circ}$ and $\text{TE} < 2m$. In this paper, we focus on 3-DoF pose errors (1-DoF RE and 2-DoF TE) \blue{reporting both 50th and 75th percentile errors}.

\textbf{Baselines.} For vision-based methods, we compare RING\#-V with both handcrafted and learning-based \blue{local feature matching} methods. We evaluate SIFT~\cite{lowe2004distinctive} and SuperPoint~\cite{detone2018superpoint} feature extractors with the Nearest Neighbor (NN) and SuperGlue~\cite{sarlin2020superglue} \blue{matchers}. \blue{To perform these methods with multi-view images as inputs, we assign multi-view matched image pairs according to the ground truth rotation angle.} We replace detected 2D keypoints on the reference image with 3D keypoints using the ground truth depth projected from the LiDAR point cloud. After that, we filter out the outliers and estimate the pose transformation with PnP~\cite{kneip2011novel} + RANSAC~\cite{fischler1981random}. \blue{We also assess the pose estimation performance of BEV-NetVLAD-MLP, a BEV-based method that predicts 3-DoF poses.} For \blue{LiDAR-based methods}, we compare RING\#-L with the same methods listed in Sec.~\ref{sec:place_recognition} followed by ICP refinement using FastGICP~\cite{koide2021voxelized}. \blue{Additionally, for methods that} can estimate 3-DoF or 6-DoF poses aside from place recognition, we \blue{evaluate their} pose estimation performance without ICP \blue{refinement to provide a comprehensive comparison}.

\textbf{Results.} We report the pose errors and success rates in Table~\ref{tab:pe_protocol1}. \blue{The findings are summarized as follows:}
\blue{\begin{itemize}
  \item Superpoint + SuperGlue, leveraging learned feature detection and matching, outperforms other vision baselines. 
  \item BEV-NetVLAD-MLP, which directly regresses relative 3-DoF poses from BEV features, suffers from large pose errors due to its lack of interpretability. 
  \item RING\#-V achieves significantly lower 75th percentile pose errors and the highest PE Succ. on both datasets, benefiting from its equivariance design in the BEV space that effectively captures environmental structure for accurate pose estimation. The slightly higher 50th percentile pose errors observed on the Oxford dataset are due to the lower spatial resolution of BEV representations compared to pixel-level image matching methods.
  \item Among the LiDAR baselines capable of predicting 3-DoF or 6-DoF poses without ICP registration, LCDNet and EgoNN rely on local feature matching and the robust RANSAC estimator for pose estimation. Despite this, they are outperformed by RING and RING++, which employ a globally convergent pose solver. RING\#-L further enhances this by introducing learnable equivariant feature extraction, significantly boosting the discriminative power of the features and delivering excellent performance.
  \item After ICP refinement, all approaches present better performance. However, OverlapTransformer and DiSCO suffer from lower PE Succ. since they do not estimate relative poses or only predict 1-DoF rotations, which provide poor initial poses for ICP. In contrast, RING\#-L maintains superior PE Succ. both with and without ICP, demonstrating its robustness and global convergence.
\end{itemize}}

Furthermore, we provide the qualitative results that visualize the pose estimation process of RING\#-V and RING\#-L in Fig.~\ref{fig:case_pe}. As we can see, the neural BEV of RING\#-V and RING\#-L \blue{reveals a pattern consistent with the input LiDAR point cloud, highlighting the strong equivariance and geometric awareness of the neural BEV, which accounts for the superior performance of RING\# in pose estimation.}

\begin{table*}
    \renewcommand\arraystretch{1.1}
    \centering
    \caption{Quantitive Results of Pose Estimation of Protocol 1}
    \label{tab:pe_protocol1}
    \begin{threeparttable}
    \begin{tabular}{clcccccc}
    \toprule[1pt]
    \multicolumn{2}{c}{\multirow{2}{*}{Approach}} & \multicolumn{3}{c}{NCLT} & \multicolumn{3}{c}{Oxford} \\ \cline{3-8}
    \multicolumn{2}{c}{} & RE [\textdegree] $\downarrow$ & TE [m] $\downarrow$ & PE Succ. $\uparrow$ & RE [\textdegree] $\downarrow$ & TE [m] $\downarrow$ & PE Succ. $\uparrow$ \\ \hline
    
    \multirow{5}{*}{Vision} & \blue{{SIFT \cite{lowe2004distinctive} + NN}$^{\dagger}$} & \blue{40.36} / \blue{126.99} & \blue{8.90} / \blue{23.72} & \blue{0.11} & \blue{\underline{0.63}} / \blue{2.87} & \blue{\underline{0.68}} / \blue{3.84} & \blue{0.63} \\
    & \blue{{SuperPoint \cite{detone2018superpoint} + NN}$^{\dagger}$} & \blue{24.76} / \blue{171.14} & \blue{5.12} / \blue{7.84} & \blue{0.22} & \blue{0.84} / \blue{4.07} & \blue{0.85} / \blue{4.50} & \blue{0.58} \\
    & \blue{SuperPoint \cite{detone2018superpoint} + SuperGlue \cite{sarlin2020superglue}} & \blue{\underline{3.31}} / \blue{\underline{9.02}} & \blue{\underline{2.34}} / \blue{\underline{5.68}} & \blue{\underline{0.43}} & \blue{\textbf{0.59}} / \blue{\underline{2.51}} & \blue{\textbf{0.45}} / \blue{\underline{2.62}} & \blue{\underline{0.69}} \\
    & \blue{BEV-NetVLAD-MLP} & \blue{57.98} / \blue{121.56} & \blue{5.82} / \blue{8.24} & \blue{0.01} & \blue{6.71} / \blue{14.60} & \blue{6.75} / \blue{12.07} & \blue{0.05} \\
    & \textbf{RING\#-V (Ours)} & \textbf{1.25} / \textbf{2.22} & \textbf{0.71} / \textbf{1.29} & \textbf{0.85} & 0.76 / \textbf{1.48} & 0.76 / \textbf{1.85} & \textbf{0.75} \\ \hline
    \multirow{5}{*}{LiDAR w/o ICP} & LCDNet \cite{cattaneo2022lcdnet} & 3.91 / 9.04 & 3.47 / 5.65 & 0.25 & 3.35 / 8.49 & 5.11 / 7.91 & 0.14 \\
    & RING \cite{lu2022one} & 1.37 / 2.36 & \underline{0.56} / \textbf{0.83} & 0.88 & 0.79 / 1.49 & 0.60 / 1.03 & 0.78 \\
    & \blue{RING++ \cite{xu2023ring++}} & \blue{\underline{1.30}} / \blue{\underline{2.28}} & \blue{0.58} / \blue{0.88} & \blue{\underline{0.91}} & \blue{0.78} / \blue{1.41} & \blue{\underline{0.55}} / \blue{\underline{0.93}} & \blue{\underline{0.83}} \\
    & EgoNN \cite{komorowski2021egonn} & 1.57 / 4.43 & \textbf{0.40} / 2.02 & 0.71 & \textbf{0.39} / \textbf{0.96} & 0.56 / 4.78 & 0.63 \\
    & \textbf{RING\#-L (Ours)} & \textbf{1.13} / \textbf{1.85} & 0.62 / \underline{0.86} & \textbf{0.97} & \underline{0.54} / \underline{0.98} & \textbf{0.51} / \textbf{0.80} & \textbf{0.87} \\ \hline
    \multirow{7}{*}{LiDAR w/ ICP} & OverlapTransformer \cite{ma2022overlaptransformer} + ICP \cite{koide2021voxelized} & 86.36 / 172.68 & 4.29 / 8.08 & 0.31 & 0.01 / 0.22 & 0.02 / 5.58 & 0.54 \\
    & LCDNet \cite{cattaneo2022lcdnet} + ICP \cite{koide2021voxelized} & 1.23 / 2.70 & 0.22 / 4.05 & 0.67 & \textbf{0.00} / \underline{0.01} & \textbf{0.00} / 1.19 & 0.74 \\
    & DiSCO \cite{xu2021disco} + ICP \cite{koide2021voxelized} & 1.14 / 2.16 & 0.19 / 1.39 & 0.75 & 0.01 / 0.06 & 0.02 / 5.42 & 0.60 \\
    & RING \cite{lu2022one} + ICP \cite{koide2021voxelized} & 1.05 / 1.78 & \textbf{0.14} / 0.25 & 0.92 & \textbf{0.00} / \underline{0.01} & \textbf{0.00} / 0.06 & 0.79 \\
    & \blue{RING++ \cite{xu2023ring++} + ICP \cite{koide2021voxelized}} & \blue{\underline{1.03}} / \blue{\underline{1.77}} & \blue{\textbf{0.14}} / \blue{\underline{0.23}} & \blue{\underline{0.95}} & \blue{\textbf{0.00}} / \blue{\underline{0.01}} & \blue{\textbf{0.00}} / \blue{\textbf{0.01}} & \blue{\underline{0.83}} \\
    & EgoNN \cite{komorowski2021egonn} + ICP \cite{koide2021voxelized} & 1.15 / 2.32 & 0.15 / 0.34 & 0.79 & \textbf{0.00} / 0.02 & \textbf{0.00} / 4.59 & 0.67 \\
    & \textbf{RING\#-L (Ours) + ICP \cite{koide2021voxelized}} & \textbf{0.99} / \textbf{1.68} & \textbf{0.14} / \textbf{0.22} & \textbf{0.97} & \textbf{0.00} / \textbf{0.00} & \textbf{0.00} / \textbf{0.01} & \textbf{0.87} \\
    
    \bottomrule[1pt]
    \end{tabular}
    \begin{tablenotes}
        \footnotesize
        \item[$\dagger$] \blue{NN: Nearest Neighbor.} \blue{We report 50th / 75th percentile errors for RE and TE.} The best result is highlighted in \textbf{bold} and the second best is \underline{underlined}.
    \end{tablenotes}
    \end{threeparttable}
    \vspace{-0.3cm}
\end{table*}
\begin{figure*}[ht!]
    \centering
    \includegraphics[width=18cm]{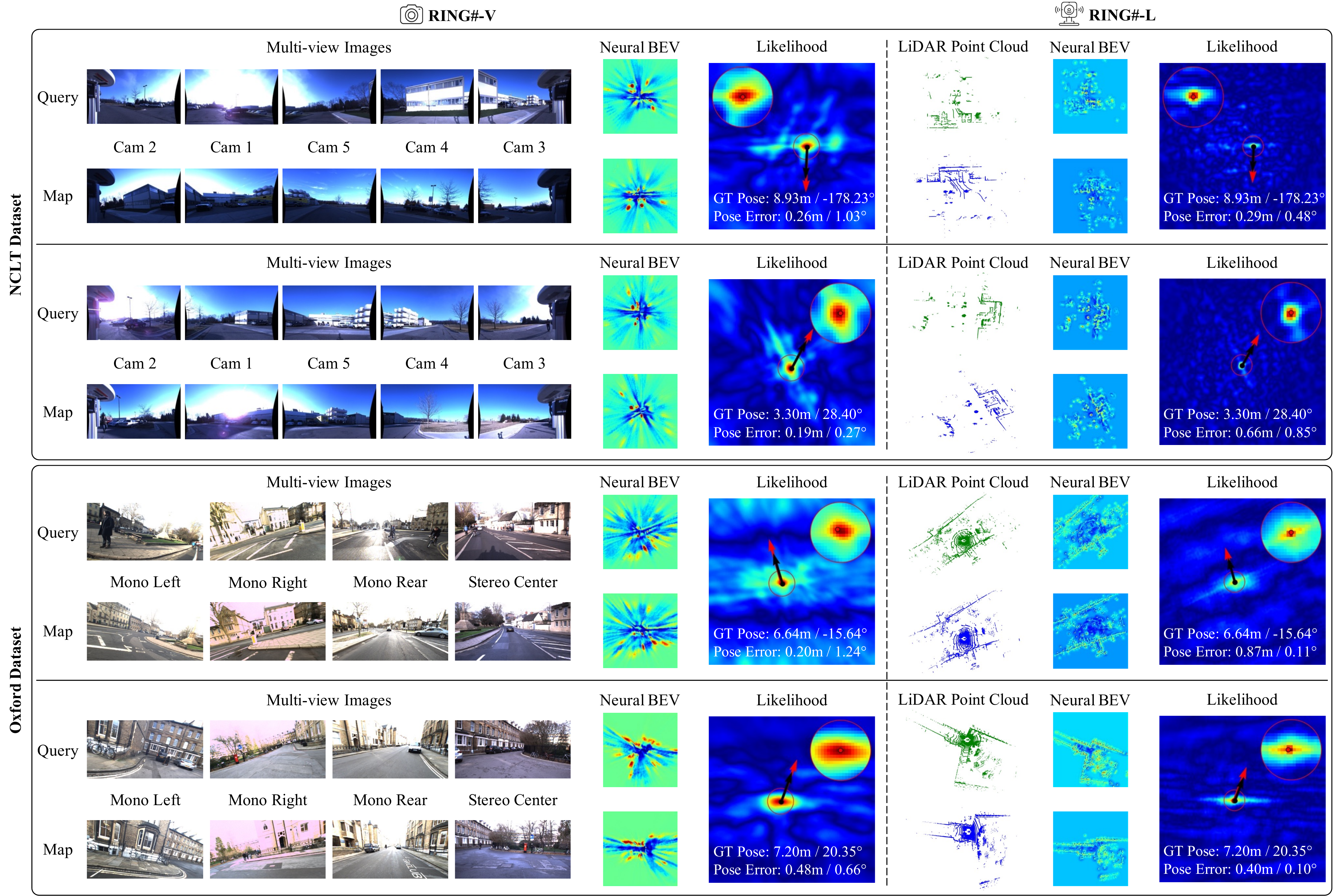}
    \vspace{-0.2cm}
    \caption{\textbf{Qualitative localization results of RING\#-V and RING\#-L.} We display several localization cases of RING\# on the NCLT and Oxford datasets. On the likelihood plot, the black arrow $\rightarrow$ shows the pose estimated by RING\#, the red arrow {\color{red}{$\rightarrow$}} shows the ground truth pose and the red dot {\color{red}{$\bullet$}} shows the ground truth position. Here we rotate the query neural BEV by $\hat{\theta}$ estimated using RING\# to visualize the neural BEV under a 3-DoF pose transformation.}
    \label{fig:case_pe}
    \vspace{-0.3cm}
\end{figure*}

\subsection{\blue{Two-stage Evaluation of Global Localization}}
\label{sec:two_stage_global_localization}
In this subsection, we evaluate global localization performance \blue{in two stages where the success rate of global localization is determined by the success rate of pose estimation, conditioned on the success of place recognition.}

\textbf{Metrics.}
In addition to the evaluation metrics (RE, TE, and PE Succ.) used in Sec.~\ref{sec:pose_estimation}, we introduce one more metric for global localization evaluation: GL Succ., which is defined as the percentage of queries that are correctly localized ($\text{RE} < 5^{\circ}$ and $\text{TE} < 2m$). For \blue{two-stage evaluation of} global localization, GL Succ. is a compound metric of place recognition and pose estimation, formulated as $\text{GL Succ.} = \text{Recall@1} \times \text{PE Succ.}$.

\textbf{Baselines.}
We combine all visual place recognition methods in Sec.~\ref{sec:place_recognition} with pose estimation methods SuperPoint + SuperGlue (abbreviated as SS) as the vision baselines. The LiDAR baselines are composed of LiDAR place recognition methods in Sec.~\ref{sec:place_recognition} followed by ICP registration. 

\textbf{Results.} \blue{A revisit threshold of $r = 10m$ is used in these experiments. The global localization results for all protocols are summarized in Table~\ref{tab:gl_protocol1} to Table~\ref{tab:gl_protocol3}. Our method consistently outperforms other methods across all protocols and datasets. Notably, RING\#-V even outperforms most LiDAR-based methods, reinforcing our claim that the \textit{PR-by-PE localization} paradigm is more effective than the \textit{PR-then-PE localization} paradigm. Key findings include:}
\blue{\begin{itemize}
  \item vDiSCO + SS employs the same pose estimation technique as Exhaustive SS but shows a lower PE Succ. across all protocols, with a notable 17\% drop under Protocol 2. This discrepancy highlights the inconsistency within the \textit{PR-then-PE localization} paradigm, where the success of pose estimation and place recognition diverges due to their inherently different objectives. In contrast, Exhaustive SS, which adheres to the \textit{PR-by-PE localization} paradigm, exhibits the best overall performance among vision baselines.
  \item Across the three protocols, most methods experience a decline in GL Succ. under Protocol 3, which is more challenging due to coupled place and appearance changes. For example, vDiSCO + SS shows a 13\% drop in GL Succ. from Protocol 1 to Protocol 3 on the NCLT dataset, and EgoNN + ICP sees a 9\% decrease. In contrast, RING\# exhibits only a 3\% drop for the vision modality and a 1\% drop for the LiDAR modality, which indicates that our approach performs well even in these challenging scenarios, showcasing strong generalization ability.
  \item EgoNN, a joint \textit{PR-then-PE localization} network that extracts both global and local descriptors for place recognition and pose estimation, shows competitive performance. However, RING\#-L, specifically designed for \textit{PR-by-PE localization}, generally surpasses EgoNN, except under Protocol 2 on the Oxford dataset, where EgoNN shows a slight edge. This exception will be discussed further.
  \item The PE Succ. of RING\# is almost the same and consistently exceeds other methods across all protocols, emphasizing the effectiveness of our globally convergent pose estimation network. However, the GL Succ. of RING\#-L in Protocol 2 is lower than in Protocol 3, despite more training data. This suggests that the drop in GL Succ. is mainly due to place recognition performance degradation.
\end{itemize}}

\textbf{\bluei{Impact of Different Revisit Thresholds.}} \blue{Regarding the last two findings mentioned above,} we hypothesize that the reason is that \textit{the \blue{strict} revisit threshold of place recognition rejects some potential queries that can be correctly localized.} To verify this hypothesis, we \blue{analyze pose errors and global localization success rates at different revisit thresholds (5m, 10m, 20m, 25m) as shown in Fig.~\ref{fig:nclt_vision_PE}} to study the impact of revisit thresholds. \blue{The results reveal that pose errors (RE and TE) for all baselines increase significantly with higher revisit thresholds}. \blue{Nonetheless,} RING\#-V and RING\#-L almost maintain \blue{nearly} constant RE and TE as the revisit threshold increases. \blue{Furthermore, RING\# achieves a substantial improvement in GL Succ. with higher revisit thresholds. This indicates that the $r = 10m$ revisit threshold used in previous experiments is too conservative. RING\# could achieve even better performance with a higher revisit threshold. Additional results in Appendix~\ref{sec:appendix_two_stage_gl} further support this observation.} 

\begin{table*}[htbp]
    \renewcommand\arraystretch{1.1}
    \centering
    \caption{Quantitative Results of Global Localization of Protocol 1}
    \label{tab:gl_protocol1}
    \resizebox{\textwidth}{!}{
    \begin{threeparttable}
    \begin{tabular}{clcccccc}
    \toprule[1pt]
    \multicolumn{2}{c}{\multirow{2}{*}{Approach}} & \multicolumn{3}{c}{NCLT} & \multicolumn{3}{c}{Oxford} \\ \cline{3-8}
    \multicolumn{2}{c}{} & \multicolumn{1}{c}{GL / PE Succ. $\uparrow$} & \multicolumn{1}{c}{RE [\textdegree] $\downarrow$} & \multicolumn{1}{c}{TE [m] $\downarrow$} & \multicolumn{1}{c}{GL / PE Succ. $\uparrow$} & \multicolumn{1}{c}{RE [\textdegree] $\downarrow$} & \multicolumn{1}{c}{TE [m] $\downarrow$} \\ \hline
    
    \multirow{8}{*}{Vision} & \blue{NetVLAD \cite{arandjelovic2016netvlad} + {SS \cite{detone2018superpoint,sarlin2020superglue}}$^{\dagger}$} & \blue{0.20} / \blue{0.52} & \blue{2.92} / \blue{6.79} & \blue{1.71} / \blue{4.07} & \blue{0.51} / \blue{0.81} & \blue{0.38} / \blue{1.16} & \blue{0.37} / \blue{0.93} \\
    & \blue{Patch-NetVLAD \cite{hausler2021patch} + {SS \cite{detone2018superpoint,sarlin2020superglue}}$^{\dagger}$} & \blue{0.22} / \blue{0.54} & \blue{2.63} / \blue{4.92} & \blue{1.72} / \blue{3.84} & \blue{0.55} / \blue{0.82} & \blue{\underline{0.35}} / \blue{1.06} & \blue{\underline{0.35}} / \blue{0.81} \\
    & \blue{AnyLoc \cite{keetha2023anyloc} + {SS \cite{detone2018superpoint,sarlin2020superglue}}$^{\dagger}$} & \blue{0.28} / \blue{0.59} & \blue{\underline{2.15}} / \blue{4.95} & \blue{1.19} / \blue{3.51} & \blue{0.59} / \blue{0.81} & \blue{0.36} / \blue{1.36} & \blue{\textbf{0.34}} / \blue{1.02} \\
    & \blue{SFRS \cite{ge2020self} + {SS \cite{detone2018superpoint,sarlin2020superglue}}$^{\dagger}$} & \blue{0.30} / \blue{\underline{0.61}} & \blue{2.57} / \blue{\underline{4.77}} & \blue{\underline{1.08}} / \blue{\underline{3.34}} & \blue{0.61} / \blue{0.82} & \blue{0.36} / \blue{1.11} & \blue{\underline{0.35}} / \blue{0.96} \\
    & \cellcolor{gray!30}\blue{Exhaustive {SS \cite{detone2018superpoint,sarlin2020superglue}}$^{\dagger}$} & \cellcolor{gray!30}\blue{\underline{0.38}} / \blue{0.57} & \cellcolor{gray!30}\blue{2.71} / \blue{5.32} & \cellcolor{gray!30}\blue{1.45} / \blue{3.45} & \cellcolor{gray!30}\blue{\underline{0.73}} / \blue{\underline{0.84}} & \cellcolor{gray!30}\blue{\textbf{0.34}} / \blue{\textbf{0.89}} & \cellcolor{gray!30}\blue{\underline{0.35}} / \blue{\textbf{0.77}} \\
    & \blue{BEV-NetVLAD-MLP} & \blue{0.00} / \blue{0.01} & \blue{61.05} / \blue{121.83} & \blue{5.16} / \blue{7.48} & \blue{0.05} / \blue{0.06} & \blue{5.77} / \blue{10.64} & \blue{7.08} / \blue{11.47} \\
    & \blue{vDiSCO \cite{xu2023leveraging} + {SS \cite{detone2018superpoint,sarlin2020superglue}}$^{\dagger}$} & \blue{\underline{0.38}} / \blue{0.50} & \blue{2.93} / \blue{7.18} & \blue{1.74} / \blue{4.05} & \blue{0.66} / \blue{0.82} & \blue{\underline{0.35}} / \blue{\underline{0.90}} & \blue{0.36} / \blue{\underline{0.78}} \\
    & \cellcolor{gray!30}\textbf{RING\#-V (Ours)} & \cellcolor{gray!30}\textbf{0.75} / \textbf{0.91} & \cellcolor{gray!30}\textbf{1.11} / \textbf{2.02} & \cellcolor{gray!30}\textbf{0.65} / \textbf{1.03} & \cellcolor{gray!30}\textbf{0.78} / \textbf{0.90} & \cellcolor{gray!30}1.71 / 2.48 & \cellcolor{gray!30}0.68 / 1.07 \\ \hline
    \multirow{7}{*}{LiDAR} & OverlapTransformer \cite{ma2022overlaptransformer} + ICP \cite{koide2021voxelized} & 0.27 / 0.38 & 47.26 / 172.39 & 3.70 / 7.38 & 0.48 / 0.68 & 0.01 / 0.06 & 0.01 / 4.49 \\
    & LCDNet \cite{cattaneo2022lcdnet} + ICP \cite{koide2021voxelized} & 0.50 / 0.72 & 1.21 / 1.95 & 0.17 / 2.81 & 0.54 / 0.87 & \textbf{0.00} / 0.01 & \textbf{0.00} / 0.01 \\
    & DiSCO \cite{xu2021disco} + ICP \cite{koide2021voxelized} & 0.62 / 0.81 & \underline{0.97} / 1.67 & 0.15 / 0.35 & 0.60 / 0.68 & 0.01 / 0.04 & 0.01 / 4.49 \\
    & RING \cite{lu2022one} + ICP \cite{koide2021voxelized} & 0.65 / \underline{0.97} & 0.98 / \underline{1.64} & 0.13 / \underline{0.20} & 0.71 / 0.94 & \textbf{0.00} / \textbf{0.00} & \textbf{0.00} / \textbf{0.00} \\
    & \blue{RING++ \cite{xu2023ring++} + ICP \cite{koide2021voxelized}} & \blue{0.66} / \blue{0.96} & \blue{0.98} / \blue{\underline{1.64}} & \blue{\textbf{0.12}} / \blue{0.21} & \blue{\underline{0.81}} / \blue{\underline{0.98}} & \blue{\textbf{0.00}} / \blue{\textbf{0.00}} & \blue{\textbf{0.00}} / \blue{\textbf{0.00}} \\
    & EgoNN \cite{komorowski2021egonn} + ICP \cite{koide2021voxelized} & \underline{0.76} / 0.95 & 0.98 / 1.68 & \textbf{0.12} / 0.21 & 0.79 / 0.89 & \textbf{0.00} / \textbf{0.00} & \textbf{0.00} / \textbf{0.00} \\
    & \cellcolor{gray!30}\textbf{RING\#-L (Ours) + ICP \cite{koide2021voxelized}} & \cellcolor{gray!30}\textbf{0.83} / \textbf{0.98} & \cellcolor{gray!30}\textbf{0.94} / \textbf{1.53} & \cellcolor{gray!30}\textbf{0.12} / \textbf{0.18} & \cellcolor{gray!30}\textbf{0.90} / \textbf{0.99} & \cellcolor{gray!30}\textbf{0.00} / \textbf{0.00} & \cellcolor{gray!30}\textbf{0.00} / \textbf{0.00} \\
    
    \bottomrule[1pt]
    \end{tabular}
    \begin{tablenotes}
        \footnotesize
        \item[$\dagger$] \blue{SS: SuperPoint + SuperGlue.} \blue{\gray{Gray} rows represent the results of PR-by-PE localization methods. We report 50th / 75th percentile errors for RE and TE.} The best result is highlighted in \textbf{bold} and the second best is \underline{underlined}.
    \end{tablenotes}
    \end{threeparttable}}
    \vspace{-0.5cm}
\end{table*}
\begin{table*}[htbp]
    \renewcommand\arraystretch{1.1}
    \centering
    \caption{Quantitative Results of Global Localization of Protocol 2}
    \label{tab:gl_protocol2}
    \resizebox{\textwidth}{!}{
    \begin{threeparttable}
    \begin{tabular}{clcccccc}
    \toprule[1pt]
    \multicolumn{2}{c}{\multirow{2}{*}{Approach}} & \multicolumn{3}{c}{NCLT} & \multicolumn{3}{c}{Oxford} \\ \cline{3-8}
    \multicolumn{2}{c}{} & \multicolumn{1}{c}{GL / PE Succ. $\uparrow$} & \multicolumn{1}{c}{RE [\textdegree] $\downarrow$} & \multicolumn{1}{c}{TE [m] $\downarrow$} & \multicolumn{1}{c}{GL / PE Succ. $\uparrow$} & \multicolumn{1}{c}{RE [\textdegree] $\downarrow$} & \multicolumn{1}{c}{TE [m] $\downarrow$} \\ \hline
    
    \multirow{8}{*}{Vision} & \blue{NetVLAD \cite{arandjelovic2016netvlad} + {SS \cite{detone2018superpoint,sarlin2020superglue}}$^{\dagger}$} & \blue{0.20} / \blue{0.43} & \blue{2.94} / \blue{8.57} & \blue{2.26} / \blue{5.45} & \blue{0.39} / \blue{0.75} & \blue{0.45} / \blue{1.36} & \blue{0.51} / \blue{1.90} \\
    & \blue{Patch-NetVLAD \cite{hausler2021patch} + {SS \cite{detone2018superpoint,sarlin2020superglue}}$^{\dagger}$} & \blue{0.21} / \blue{0.44} & \blue{3.02} / \blue{8.76} & \blue{2.12} / \blue{5.53} & \blue{0.37} / \blue{0.74} & \blue{0.49} / \blue{1.42} & \blue{0.54} / \blue{2.05} \\
    & \blue{AnyLoc \cite{keetha2023anyloc} + {SS \cite{detone2018superpoint,sarlin2020superglue}}$^{\dagger}$} & \blue{0.25} / \blue{0.48} & \blue{2.61} / \blue{7.23} & \blue{1.78} / \blue{4.54} & \blue{0.58} / \blue{0.78} & \blue{0.44} / \blue{1.20} & \blue{0.46} / \blue{1.54} \\
    & \blue{SFRS \cite{ge2020self} + {SS \cite{detone2018superpoint,sarlin2020superglue}}$^{\dagger}$} & \blue{0.28} / \blue{0.51} & \blue{2.36} / \blue{6.55} & \blue{1.64} / \blue{4.32} & \blue{0.58} / \blue{0.81} & \blue{\textbf{0.37}} / \blue{\textbf{0.98}} & \blue{\textbf{0.40}} / \blue{\underline{1.12}} \\
    & \cellcolor{gray!30}\blue{Exhaustive {SS \cite{detone2018superpoint,sarlin2020superglue}}$^{\dagger}$} & \cellcolor{gray!30}\blue{0.33} / \blue{\underline{0.60}} & \cellcolor{gray!30}\blue{\underline{2.01}} / \blue{\underline{4.48}} & \cellcolor{gray!30}\blue{\underline{1.20}} / \blue{\underline{3.05}} & \cellcolor{gray!30}\blue{\underline{0.69}} / \blue{\underline{0.83}} & \cellcolor{gray!30}\blue{\underline{0.38}} / \blue{\underline{1.02}} & \cellcolor{gray!30}\blue{\underline{0.42}} / \blue{1.13} \\
    & \blue{BEV-NetVLAD-MLP} & \blue{0.03} / \blue{0.04} & \blue{11.56} / \blue{27.42} & \blue{4.27} / \blue{6.46} & \blue{0.13} / \blue{0.17} & \blue{4.99} / \blue{9.50} & \blue{2.93} / \blue{4.72} \\
    & \blue{vDiSCO \cite{xu2023leveraging} + {SS \cite{detone2018superpoint,sarlin2020superglue}}$^{\dagger}$} & \blue{\underline{0.37}} / \blue{0.43} & \blue{2.95} / \blue{8.37} & \blue{2.26} / \blue{5.24} & \blue{0.62} / \blue{0.79} & \blue{0.44} / \blue{1.15} & \blue{0.46} / \blue{1.52} \\
    & \cellcolor{gray!30}\textbf{RING\#-V (Ours)} & \cellcolor{gray!30}\textbf{0.79} / \textbf{0.95} & \cellcolor{gray!30}\textbf{1.31} / \textbf{2.26} & \cellcolor{gray!30}\textbf{0.73} / \textbf{1.02} & \cellcolor{gray!30}\textbf{0.81} / \textbf{0.93} & \cellcolor{gray!30}0.65 / 1.07 & \cellcolor{gray!30}0.48 / \textbf{0.74} \\ \hline
    \multirow{7}{*}{LiDAR} & OverlapTransformer \cite{ma2022overlaptransformer} + ICP \cite{koide2021voxelized} & 0.30 / 0.43 & 3.85 / 164.88 & 2.62 / 7.06 & 0.57 / 0.72 & 0.01 / 0.06 & 0.01 / 2.95 \\
    & LCDNet \cite{cattaneo2022lcdnet} + ICP \cite{koide2021voxelized} & 0.28 / 0.63 & 1.39 / 3.22 & 0.22 / 3.82 & 0.58 / \textbf{0.93} & \textbf{0.00} / \textbf{0.00} & \textbf{0.00} / \textbf{0.00} \\
    & DiSCO \cite{xu2021disco} + ICP \cite{koide2021voxelized} & 0.61 / 0.74 & 1.15 / 2.29 & 0.21 / 0.85 & 0.66 / 0.75 & 0.01 / 0.04 & 0.01 / 2.02 \\
    & RING \cite{lu2022one} + ICP \cite{koide2021voxelized} & 0.53 / 0.95 & \underline{1.09} / 2.05 & \textbf{0.15} / \textbf{0.23} & 0.71 / \textbf{0.93} & \textbf{0.00} / \textbf{0.00} & \textbf{0.00} / \textbf{0.00} \\
    & \blue{RING++ \cite{xu2023ring++} + ICP \cite{koide2021voxelized}} & \blue{0.56} / \blue{\underline{0.96}} & \blue{1.10} / \blue{\underline{2.03}} & \blue{\textbf{0.15}} / \blue{\underline{0.24}} & \blue{0.76} / \blue{\textbf{0.93}} & \blue{\textbf{0.00}} / \blue{\textbf{0.00}} & \blue{\textbf{0.00}} / \blue{\textbf{0.00}} \\
    & EgoNN \cite{komorowski2021egonn} + ICP \cite{koide2021voxelized} & \underline{0.75} / 0.89 & 1.14 / 2.16 & 0.16 / 0.28 & \textbf{0.81} / 0.92 & \textbf{0.00} / \textbf{0.00} & \textbf{0.00} / \textbf{0.00} \\
    & \cellcolor{gray!30}\textbf{RING\#-L (Ours) + ICP \cite{koide2021voxelized}} & \cellcolor{gray!30}\textbf{0.78} / \textbf{0.97} & \cellcolor{gray!30}\textbf{1.07} / \textbf{1.98} & \cellcolor{gray!30}\textbf{0.15} / \underline{0.24} & \cellcolor{gray!30}\underline{0.79} / \textbf{0.93} & \cellcolor{gray!30}\textbf{0.00} / \textbf{0.00} & \cellcolor{gray!30}\textbf{0.00} / \textbf{0.00} \\
    
    \bottomrule[1pt]
    \end{tabular}
    \begin{tablenotes}
        \footnotesize
        \item[$\dagger$] \blue{SS: SuperPoint + SuperGlue.} \blue{\gray{Gray} rows represent the results of PR-by-PE localization methods. We report 50th / 75th percentile errors for RE and TE.} The best result is highlighted in \textbf{bold} and the second best is \underline{underlined}.
    \end{tablenotes}
    \end{threeparttable}}
    \vspace{-0.5cm}
\end{table*}
\begin{table*}[htbp]
    \renewcommand\arraystretch{1.1}
    \centering
    \caption{Quantitative Results of Global Localization of Protocol 3}
    \label{tab:gl_protocol3}
    \resizebox{\textwidth}{!}{
    \begin{threeparttable}
    \begin{tabular}{clcccccc}
    \toprule[1pt]
    \multicolumn{2}{c}{\multirow{2}{*}{Approach}} & \multicolumn{3}{c}{NCLT} & \multicolumn{3}{c}{Oxford} \\ \cline{3-8}
    \multicolumn{2}{c}{} & \multicolumn{1}{c}{GL / PE Succ. $\uparrow$} & \multicolumn{1}{c}{RE [\textdegree] $\downarrow$} & \multicolumn{1}{c}{TE [m] $\downarrow$} & \multicolumn{1}{c}{GL / PE Succ. $\uparrow$} & \multicolumn{1}{c}{RE [\textdegree] $\downarrow$} & \multicolumn{1}{c}{TE [m] $\downarrow$} \\ \hline
    
    \multirow{8}{*}{Vision} & \blue{NetVLAD \cite{arandjelovic2016netvlad} + {SS \cite{detone2018superpoint,sarlin2020superglue}}$^{\dagger}$} & \blue{0.15} / \blue{0.47} & \blue{2.78} / \blue{8.08} & \blue{1.74} / \blue{5.01} & \blue{0.39} / \blue{0.77} & \blue{0.45} / \blue{1.22} & \blue{0.46} / \blue{1.75} \\
    & \blue{Patch-NetVLAD \cite{hausler2021patch} + {SS \cite{detone2018superpoint,sarlin2020superglue}}$^{\dagger}$} & \blue{0.16} / \blue{0.47} & \blue{2.75} / \blue{7.64} & \blue{1.72} / \blue{4.88} & \blue{0.46} / \blue{0.78} & \blue{0.42} / \blue{1.14} & \blue{0.46} / \blue{1.40} \\
    & \blue{AnyLoc \cite{keetha2023anyloc} + {SS \cite{detone2018superpoint,sarlin2020superglue}}$^{\dagger}$} & \blue{0.25} / \blue{0.48} & \blue{2.61} / \blue{7.23} & \blue{1.78} / \blue{4.54} & \blue{0.58} / \blue{0.78} & \blue{0.44} / \blue{1.20} & \blue{0.46} / \blue{1.54} \\
    & \blue{SFRS \cite{ge2020self} + {SS \cite{detone2018superpoint,sarlin2020superglue}}$^{\dagger}$} & \blue{0.28} / \blue{0.51} & \blue{2.36} / \blue{6.55} & \blue{1.64} / \blue{4.32} & \blue{0.58} / \blue{0.81} & \blue{\textbf{0.37}} / \blue{\textbf{0.98}} & \blue{\textbf{0.40}} / \blue{\underline{1.12}} \\
    & \cellcolor{gray!30}\blue{Exhaustive {SS \cite{detone2018superpoint,sarlin2020superglue}}$^{\dagger}$} & \cellcolor{gray!30}\blue{\underline{0.33}} / \blue{\underline{0.60}} & \cellcolor{gray!30}\blue{\underline{2.01}} / \blue{\underline{4.48}} & \cellcolor{gray!30}\blue{\underline{1.20}} / \blue{\underline{3.05}} & \cellcolor{gray!30}\blue{\underline{0.69}} / \blue{\underline{0.83}} & \cellcolor{gray!30}\blue{\underline{0.38}} / \blue{\underline{1.02}} & \cellcolor{gray!30}\blue{\underline{0.42}} / \blue{1.13} \\
    & \blue{BEV-NetVLAD-MLP} & \blue{0.01} / \blue{0.02} & \blue{22.48} / \blue{60.75} & \blue{5.21} / \blue{7.23} & \blue{0.05} / \blue{0.07} & \blue{6.42} / \blue{10.96} & \blue{4.70} / \blue{7.88} \\
    & \blue{vDiSCO \cite{xu2023leveraging} + {SS \cite{detone2018superpoint,sarlin2020superglue}}$^{\dagger}$} & \blue{0.25} / \blue{0.47} & \blue{2.80} / \blue{7.49} & \blue{1.89} / \blue{4.64} & \blue{0.62} / \blue{0.78} & \blue{0.44} / \blue{1.21} & \blue{0.46} / \blue{1.59} \\
    & \cellcolor{gray!30}\textbf{RING\#-V (Ours)} & \cellcolor{gray!30}\textbf{0.72} / \textbf{0.94} & \cellcolor{gray!30}\textbf{1.28} / \textbf{2.25} & \cellcolor{gray!30}\textbf{0.61} / \textbf{0.91} & \cellcolor{gray!30}\textbf{0.79} / \textbf{0.92} & \cellcolor{gray!30}0.66 / 1.10 & \cellcolor{gray!30}0.52 / \textbf{0.81} \\ \hline
    \multirow{7}{*}{LiDAR} & OverlapTransformer \cite{ma2022overlaptransformer} + ICP \cite{koide2021voxelized} & 0.22 / 0.45 & 3.21 / 160.05 & 2.15 / 6.83 & 0.56 / 0.73 & 0.01 / 0.06 & 0.01 / 2.54 \\
    & LCDNet \cite{cattaneo2022lcdnet} + ICP \cite{koide2021voxelized} & 0.27 / 0.62 & 1.42 / 3.26 & 0.21 / 4.07 & 0.53 / 0.91 & \textbf{0.00} / \textbf{0.00} & \textbf{0.00} / 0.01 \\
    & DiSCO \cite{xu2021disco} + ICP \cite{koide2021voxelized} & 0.44 / 0.73 & 1.17 / 2.33 & 0.20 / 2.97 & 0.67 / 0.75 & 0.01 / 0.04 & 0.01 / 2.10 \\
    & RING \cite{lu2022one} + ICP \cite{koide2021voxelized} & 0.53 / 0.95 & \textbf{1.09} / 2.05 & \textbf{0.15} / \textbf{0.23} & 0.71 / \textbf{0.93} & \textbf{0.00} / \textbf{0.00} & \textbf{0.00} / \textbf{0.00} \\
    & \blue{RING++ \cite{xu2023ring++} + ICP \cite{koide2021voxelized}} & \blue{0.56} / \blue{\underline{0.96}} & \blue{1.10} / \blue{\underline{2.03}} & \blue{\textbf{0.15}} / \blue{\underline{0.24}} & \blue{0.76} / \blue{\textbf{0.93}} & \blue{\textbf{0.00}} / \blue{\textbf{0.00}} & \blue{\textbf{0.00}} / \blue{\textbf{0.00}} \\
    & EgoNN \cite{komorowski2021egonn} + ICP \cite{koide2021voxelized} & \underline{0.67} / 0.89 & 1.13 / 2.14 & 0.16 / 0.28 & \underline{0.79} / 0.91 & \textbf{0.00} / \textbf{0.00} & \textbf{0.00} / \textbf{0.00} \\
    & \cellcolor{gray!30}\textbf{RING\#-L (Ours) + ICP \cite{koide2021voxelized}} & \cellcolor{gray!30}\textbf{0.82} / \textbf{0.97} & \cellcolor{gray!30}\textbf{1.09} / \textbf{2.00} & \cellcolor{gray!30}\textbf{0.15} / \underline{0.24} & \cellcolor{gray!30}\textbf{0.80} / \textbf{0.93} & \cellcolor{gray!30}\textbf{0.00} / \textbf{0.00} & \cellcolor{gray!30}\textbf{0.00} / \textbf{0.00} \\
    
    \bottomrule[1pt]
    \end{tabular}
    \begin{tablenotes}
        \footnotesize
        \item[$\dagger$] \blue{SS: SuperPoint + SuperGlue.} \blue{\gray{Gray} rows represent the results of PR-by-PE localization methods. We report 50th / 75th percentile errors for RE and TE.} The best result is highlighted in \textbf{bold} and the second best is \underline{underlined}.
    \end{tablenotes}
    \end{threeparttable}}
    \vspace{-0.4cm}
\end{table*}
\begin{figure*}[htbp]
    \centering
    \includegraphics[width=17.6cm]{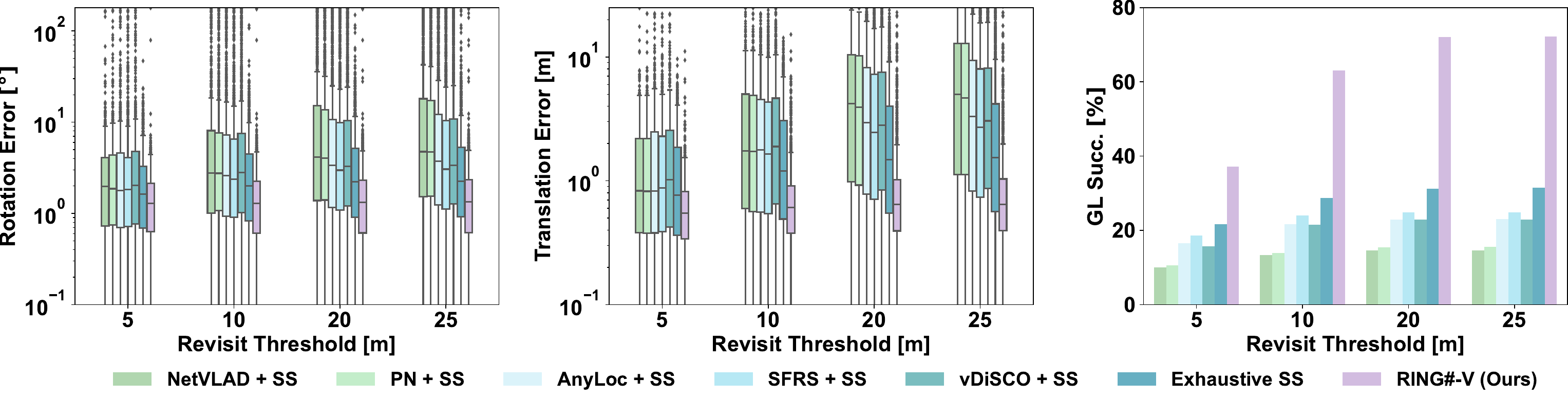}
    \vspace{-0.3cm}
    \caption{\blue{\textbf{Pose errors and global localization success rates at different revisit thresholds of vision-based methods on the NCLT dataset.} SS: SuperPoint + SuperGlue, PN: Patch-NetVLAD.}}
    \label{fig:nclt_vision_PE}
    \vspace{-0.3cm}
\end{figure*}

\subsection{\blue{One-stage Evaluation of Global Localization}}
\label{sec:one_stage_global_localization}
\bluei{
To further confirm the hypothesis in the previous subsection, we remove the strict revisit threshold in two-stage evaluation and conduct one-stage evaluation in this subsection.}

\textbf{Metrics.}
For \blue{two-stage evaluation of} global localization in Sec.~\ref{sec:two_stage_global_localization}, $\text{GL Succ.} = \text{Recall@1} \times \text{PE Succ.}$, where the denominator of PE Succ. is the number of correct retrievals. GL Succ. equals PE Succ. with the denominator being the number of queries \blue{for one-stage evaluation of} global localization.

\textbf{Baselines.} The baselines are the same as that in Sec.~\ref{sec:two_stage_global_localization}.

\textbf{Results.} Fig.~\ref{fig:gl_succ} reports the \blue{global localization success rate under two-stage and one-stage evaluation}. We conduct a statistical analysis using the two-tailed Mann-Whitney U test~\cite{mcknight2010mann} to determine the statistical significance of the proposed method. The results show that GL Succ. of all methods increases from \blue{two-stage evaluation} to \blue{one-stage evaluation} without the influence of the revisit threshold. In particular, RING\#-V and RING\#-L improve GL Succ. by 8.27\% and 18.20\% in comparison to \blue{two-stage evaluation}. This further confirms our hypothesis that \textit{GL Succ. can be improved by eliminating the \blue{strict} revisit threshold of place recognition}. Furthermore, the Mann-Whitney U test indicates that RING\# outperforms state-of-the-art methods with statistically significant performance improvements under both two-stage and one-stage evaluation. \blue{Additional qualitative localization results and a detailed analysis of failure cases are provided in Appendix~\ref{sec:appendix_one_stage_gl} and Appendix~\ref{sec:appendix_failure_cases}, respectively.} 

\begin{figure*}[htbp]
    \centering
    \includegraphics[width=17.5cm]{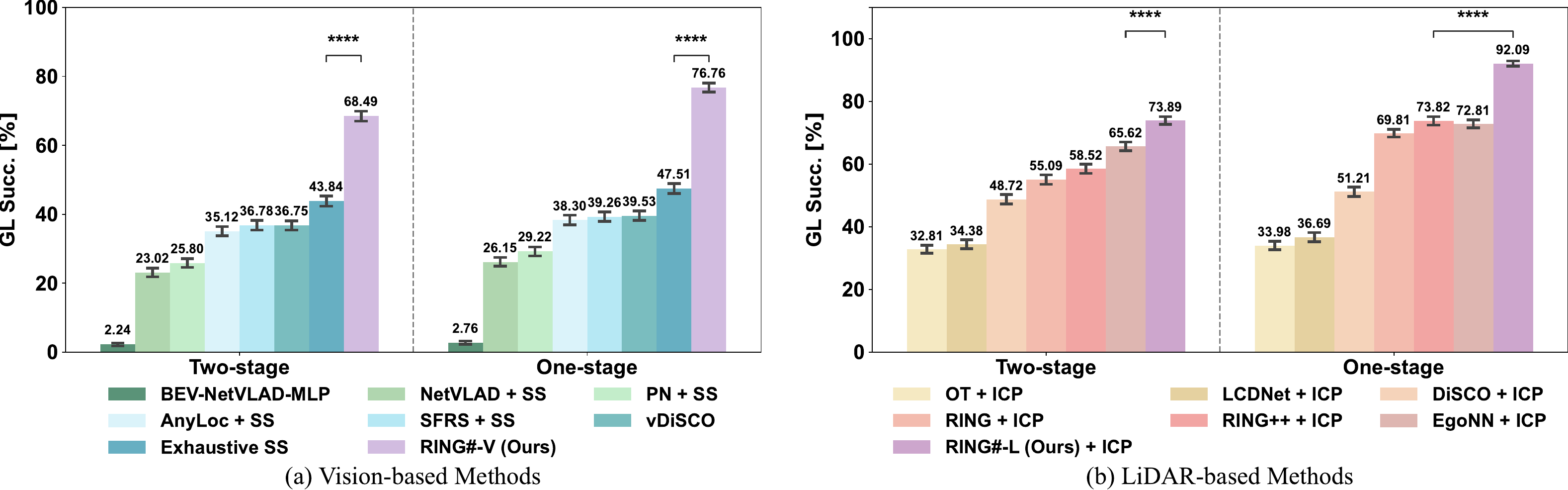}
    \vspace{-0.2cm}
    \caption{\blue{\textbf{Two-stage and one-stage global localization success rates averaged on the NCLT and Oxford datasets.}} \blue{SS: SuperPoint + SuperGlue, PN: Patch-NetVLAD, OT: OverlapTransformer.} The two-tailed Mann-Whitney U test shows statistical significance for a pair of methods comparison: $\star$ P $<$ 0.05; $\star\star$ P $<$ 0.01; $\star\star\star$ P $<$ 0.001; $\star\star\star\star$ P $<$ 0.0001.}
    \label{fig:gl_succ}
    \vspace{-0.4cm}
\end{figure*}

\subsection{Ablation Study}
\label{sec:ablation_study}
\bluei{In this subsection, we perform ablation experiments to investigate how different RING\# components and map intervals affect localization performance.}

\textbf{\blue{Equivariance Construction.}} \blue{Table~\ref{tab:ablation} presents the performance of place recognition and pure pose estimation for RING\#-V with different modules on the NCLT dataset under Protocol 1. Variants $\mathcal{M}_1$ to $\mathcal{M}_4$ differ in their use of CNNs within the rotation and translation branches. Among them, $\mathcal{M}_1$, which excludes CNNs in both branches, performs the worst, underscoring the necessity of CNNs for achieving high performance with RING\#-V. $\mathcal{M}_3$ outperforms $\mathcal{M}_2$, confirming the effectiveness of our rotation equivariance construction in the rotation branch, as stated in Theorem~\ref{theorem 1}. Furthermore, $\mathcal{M}_4$ surpasses $\mathcal{M}_3$ by 20\% in Recall@1, validating the translation equivariance construction detailed in Theorem~\ref{theorem 2}. These results highlight the essential role of a well-designed, learnable equivariance architecture.}

\textbf{\blue{Depth Supervision and Pose Refinement.}} \blue{We investigate the effects of depth supervision and pose refinement by comparing $\mathcal{M}_4 \sim \mathcal{M}_6$. $\mathcal{M}_5$ achieves slightly better Recall@1 and PE Succ. than $\mathcal{M}_4$ attributed to the enhanced depth accuracy enabled by depth supervision. $\mathcal{M}_6$, which builds on $\mathcal{M}_5$ by incorporating 3-DoF exhaustive matching upon the neural BEV in the translation branch for pose refinement, achieves the best performance among all variants. This indicates that the refinement process effectively reduces rotation estimation noise in the rotation branch.}

\textbf{\blue{Localization Paradigm.}} \blue{The performance of RING\#-V under different localization paradigms is assessed by comparing $\mathcal{M}_6 \sim \mathcal{M}_8$. $\mathcal{M}_7$ is trained solely with place recognition loss (Cross-Entropy loss) based on the similarity calculated in the rotation branch, which adheres to the \textit{PR-then-PE localization} paradigm. Its performance is inferior to $\mathcal{M}_6$, due to the limited supervision provided by place recognition alone. $\mathcal{M}_8$ extends $\mathcal{M}_7$ by jointly training place recognition and pose estimation with task-specific losses within the \textit{PR-then-PE localization} paradigm, resulting in a 22\% improvement in Recall@1 over $\mathcal{M}_7$. This demonstrates the benefits of integrating place recognition and pose estimation into a single network for \textit{PR-then-PE localization}. However, $\mathcal{M}_8$ still lags behind $\mathcal{M}_6$ in both Recall@1 and PE Succ., further validating the superior effectiveness of the \textit{PR-by-PE localization} paradigm.}

\begin{table*}[htbp]
    \renewcommand\arraystretch{1.1}
    \centering
    \caption{Ablation Study of Network Architecture$^*$}
    \label{tab:ablation}
    \resizebox{\textwidth}{!}{
    \begin{threeparttable}
    \begin{tabular}{cccccccc|cccc}
    \toprule[1pt]
    \multirow{2}{*}{}{\begin{tabular}[c]{@{}c@{}}\blue{Model}\end{tabular}} &
    \multirow{2}{*}{}{\begin{tabular}[c]{@{}c@{}}CNN in \\ Branch I\end{tabular}} &
    \multirow{2}{*}{}{\begin{tabular}[c]{@{}c@{}}CNN in \\ Branch II\end{tabular}} &
    \multirow{2}{*}{}{\begin{tabular}[c]{@{}c@{}}Depth \\ Loss\end{tabular}} &
    \multirow{2}{*}{}{\begin{tabular}[c]{@{}c@{}}\blue{PR} \\ \blue{Loss}\end{tabular}} &
    \multirow{2}{*}{}{\begin{tabular}[c]{@{}c@{}}\blue{PE} \\ \blue{Loss}\end{tabular}} &
    \multirow{2}{*}{}{\begin{tabular}[c]{@{}c@{}}Pose \\ Refinement\end{tabular}} &
    \multirow{2}{*}{}{\begin{tabular}[c]{@{}c@{}}\blue{PR-by-PE} \\ \blue{Localization}\end{tabular}} &
    \multirow{2}{*}{}{\begin{tabular}[c]{@{}c@{}}Recall@1 $\uparrow$\end{tabular}} &
    \multirow{2}{*}{}{\begin{tabular}[c]{@{}c@{}}RE [\textdegree] $\downarrow$\end{tabular}} &
    \multirow{2}{*}{}{\begin{tabular}[c]{@{}c@{}}TE [m] $\downarrow$\end{tabular}} &
    \multirow{2}{*}{}{\begin{tabular}[c]{@{}c@{}}PE Succ. $\uparrow$\end{tabular}} \\ \midrule
    \blue{$\mathcal{M}_1$} & \red{\xmark} & \red{\xmark} & \red{\xmark} & \red{\xmark} & \green{\cmark} & \red{\xmark} & \green{\cmark} & \blue{0.22} & \blue{6.59 / 15.73} & \blue{3.35 / 5.68} & \blue{0.17} \\
    $\mathcal{M}_2$ & Before RT & Before RC & \red{\xmark} & \red{\xmark} & \green{\cmark} & \red{\xmark} & \green{\cmark} & 0.56 & 2.40 / 5.70 & 1.76 / 3.78 & 0.51 \\
    $\mathcal{M}_3$ & After RT & Before RC & \red{\xmark} & \red{\xmark} & \green{\cmark} & \red{\xmark} & \green{\cmark} & 0.59 & 2.10 / 3.81 & 1.19 / 2.70 & 0.65 \\
    $\mathcal{M}_4$ & After RT & After RC & \red{\xmark} & \red{\xmark} & \green{\cmark} & \red{\xmark} & \green{\cmark} & \underline{0.79} & \underline{2.07} / 4.24 & 1.04 / 2.07 & 0.69 \\
    $\mathcal{M}_5$ & After RT & After RC & \green{\cmark} & \red{\xmark} & \green{\cmark} & \red{\xmark} & \green{\cmark} & \textbf{0.82} & 2.27 / 4.10 & \underline{0.86} / 1.71 & 0.71 \\ \midrule
    \textbf{$\mathcal{M}_6$ (Ours)} & After RT & After RC & \green{\cmark} & \red{\xmark} & \green{\cmark} & \green{\cmark} & \green{\cmark} & \textbf{0.82} & \textbf{1.25} / \textbf{2.22} & \textbf{0.71} / \textbf{1.29} & \textbf{0.85} \\  \midrule
    \blue{$\mathcal{M}_7$} & After RT & After RC & \green{\cmark} & \green{\cmark} & \red{\xmark} &  \red{\xmark} & \red{\xmark} & \blue{0.32} & \blue{-} & \blue{-} & \blue{-} \\
    \blue{$\mathcal{M}_8$} & After RT & After RC & \green{\cmark} & \green{\cmark} & \green{\cmark} & \green{\cmark} & \red{\xmark} & \blue{0.54} & \blue{\textbf{1.25} /  \underline{2.36}} & \blue{0.98 /  \underline{1.56}} & \blue{\underline{0.83}} \\
    
    \bottomrule[1pt]
    \end{tabular}
    \begin{tablenotes}
        \footnotesize
        \item[*] Branch I: Rotation Branch, Branch II: Translation Branch, \blue{PR Loss: Place Recognition Loss, PE Loss: Pose Estimation Loss,} Pose Refinement: 3-DoF Exhaustive Matching upon Neural BEV, RT: Radon Transform, RC: Rotation Compensation. \blue{We report 50th / 75th percentile errors for RE and TE.} The best result is highlighted in \textbf{bold} and the second best is \underline{underlined}.
    \end{tablenotes}
    \end{threeparttable}}
    \vspace{-0.4cm}
\end{table*}

\textbf{Map Interval.} \blue{We evaluate several representative methods (RING, RING++, EgoNN, RING\#-V, and RING\#-L) using Protocol 3 on the NCLT dataset, with map intervals ranging from 20m to 80m. The map interval refers to the distance between consecutive keyframes, where larger intervals result in sparser maps and greater viewpoint variation. The one-stage global localization success rates of these methods across varying map intervals are shown in Fig.~\ref{fig:gl_succ_map_interval}. As expected, larger map intervals result in smaller overlap between query and map keyframes, causing a general performance decline for all methods. Despite this, RING\#-V and RING\#-L exhibit outstanding robustness, consistently surpassing other methods across all intervals. Notably, RING\# tested at large intervals (\eg 50m) still outperforms other methods tested at small intervals (\eg 20m), underscoring the inherent strengths of the \textit{PR-by-PE localization} paradigm.}

\begin{figure}[ht]
    \centering
    \includegraphics[width=8.4cm]{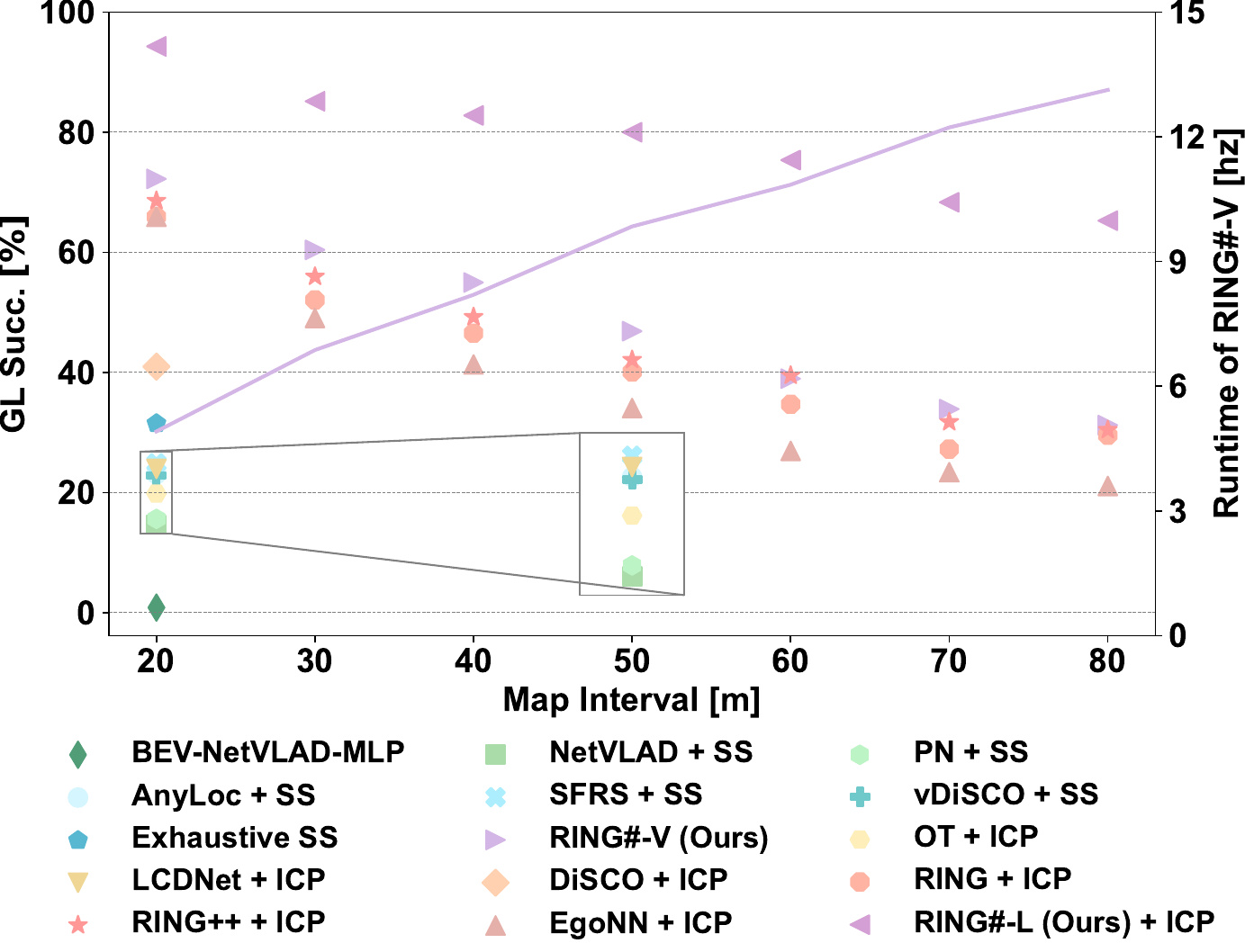}
    \vspace{-0.2cm}
    \caption{\blue{\textbf{One-stage global localization success rates and runtime of RING\#-V on the NCLT dataset at different map intervals.}} \blue{SS: SuperPoint + SuperGlue, PN: Patch-NetVLAD, OT: OverlapTransformer. Markers show the GL Succ. of all methods across various map intervals, while the \textcolor{myPurple}{purple} line depicts the runtime of RING\#-V at these intervals.}}
    \label{fig:gl_succ_map_interval}
    \vspace{-0.5cm}
\end{figure}

\subsection{Runtime Analysis}
\label{sec:runtime_model_size}
In this section, we evaluate the runtime of \blue{all approaches} with an NVIDIA GeForce GTX 4090 GPU. We report the average runtime \blue{per query} for all approaches on ``2012-01-08'' to ``2012-08-20'' of the NCLT dataset in \blue{Fig.~\ref{fig:runtime}}. Vision baselines \blue{except for BEV-NetVLAD-MLP exhibit substantial runtimes due to the extensive time required for detecting and matching local features across multi-view images using SuperPoint and SuperGlue for pose estimation. Among these, Exhaustive SS has the longest runtime due to its exhaustive matching process between the query and all map keyframes.} \blue{Although BEV-NetVLAD-MLP demonstrates the shortest runtime, it is unable to estimate accurate poses.} In contrast, \blue{RING\#-V achieves a balance between feasible runtime and superior performance, thanks to its end-to-end design, which leverages a fast correlation algorithm and batch processing on GPU.} \blue{Among the LiDAR-based methods, OverlapTransformer and DiSCO are faster than other approaches but fail to provide accurate initial poses for ICP alignment.} In contrast, RING\#-L maintains comparable runtime \blue{while delivering the best performance.} Moreover, as shown in Table~\ref{tab:pe_protocol1}, RING\#-L achieves nearly identical performance with or without ICP, allowing for reduced iterations to save time \blue{when necessary}.

\bluei{Table~\ref{tab:runtime} details the runtime of each module within RING\#. The feature extraction time represents the generation time of two equivariant representations in the rotation and translation branches.} The exhaustive search time, \blue{which is the most time-consuming part of RING\#,} reports the time needed to calculate similarities between the query and all \blue{map keyframes} in the database and identify the most similar one. \blue{The pose refinement time of RING\#-V is the time of pose refinement via 3-DoF exhaustive matching. Due to the larger grid size of the LiDAR BEV representation, RING\#-L requires more time than RING\#-V. Additionally, we investigate how the runtime of RING\#-V changes with different map intervals in Fig.~\ref{fig:gl_succ_map_interval}. As the map interval decreases, both runtime and performance increase empirically. To achieve a balance between performance and runtime, the map interval can be adjusted based on the specific application requirements.}

\begin{figure}[ht]
    \centering
    \includegraphics[width=8.3cm]{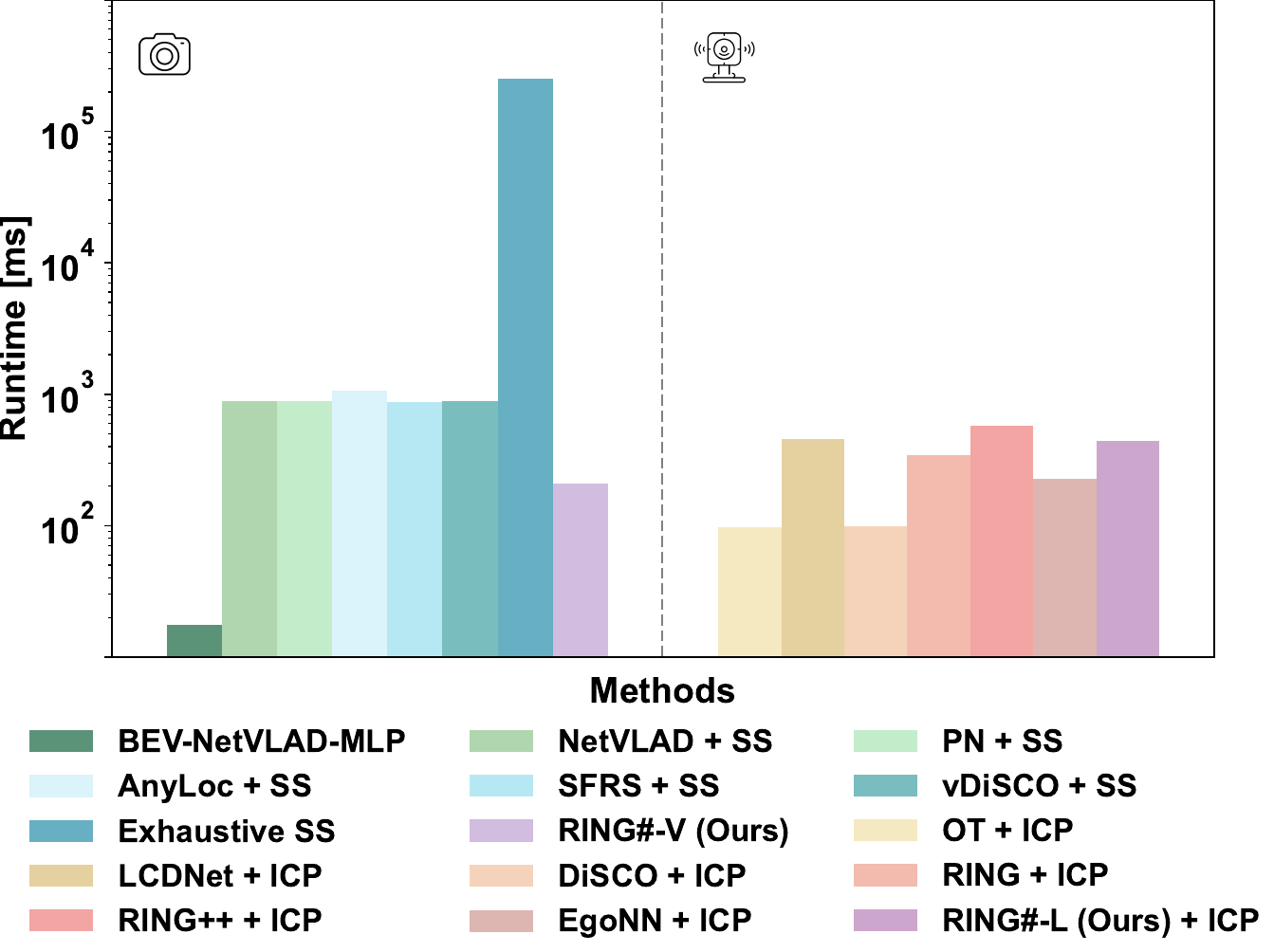}
    \vspace{-0.2cm}
    \caption{\blue{\textbf{Average runtime per query of all methods on the NCLT dataset.}} \blue{SS: SuperPoint + SuperGlue, PN: Patch-NetVLAD, OT: OverlapTransformer.}}
    \label{fig:runtime}
    \vspace{-0.4cm}
\end{figure}
\begin{table}
    \renewcommand\arraystretch{1.1}
    \centering
    \caption{\blue{Runtime of Each RING\# Module}}
    \label{tab:runtime}
    \begin{threeparttable}
    \begin{tabular}{cccc}
    \toprule[1pt]
    \multirow{2}{*}{}{\begin{tabular}[c]{@{}c@{}}\blue{Approach}\end{tabular}} &
    \multirow{2}{*}{}{\begin{tabular}[c]{@{}c@{}}\blue{Feature} \\ \blue{Extraction [ms]}\end{tabular}} &
    \multirow{2}{*}{}{\begin{tabular}[c]{@{}c@{}}\blue{Exhaustive} \\ \blue{Search [ms]}\end{tabular}} &
    \multirow{2}{*}{}{\begin{tabular}[c]{@{}c@{}}\blue{Pose} \\ \blue{Refinement [ms]}\end{tabular}} \\ \midrule
    \blue{RING\#-V} & \blue{12.64} & \blue{152.92} & \blue{44.22} \\
    \blue{RING\#-L} & \blue{16.31} & \blue{330.01} & \blue{-} \\
    \bottomrule[1pt]
    \end{tabular}
    \end{threeparttable}
    \vspace{-0.5cm}
\end{table}

\section{Conclusion \blue{and Future Work}}
In this paper, we propose RING\#, \blue{an end-to-end \textit{PR-by-PE localization} framework in the BEV space}. By leveraging correlation-based exhaustive search on equivariant \blue{BEV} representations, RING\# effectively \blue{captures} the spatial structure of the environment, \blue{enabling globally convergent localization for both vision and LiDAR modalities.} \blue{Extensive experiments on the NCLT and Oxford datasets demonstrate its} superior localization performance, especially under challenging environmental variations. The success of RING\# not only confirms the effectiveness of \blue{the \textit{PR-by-PE localization} paradigm} but also sets a new standard for localization performance, emphasizing the potential of our method in the realm of autonomous navigation and robotic systems.

\textbf{\blue{Limitations and Future Work.}}
\blue{RING\# is designed for 3-DoF localization, which may not fully address 6-DoF applications. Besides, the BEV generation process in RING\#-V relies on specific camera parameters, potentially limiting its adaptability to varied sensor setups. Future work will focus on extending RING\# to handle 6-DoF localization and enhancing its adaptability across different sensor configurations. Exploring the integration of foundation models into our framework, while maintaining essential equivariant properties, is a promising direction to improve adaptability and robustness.}

\bibliographystyle{IEEEtran}
\bibliography{IEEEabrv, IEEEref}

\vfill

\clearpage

\appendix
\label{sec:appendix}
This appendix presents additional details and visualizations that supplement the main text, providing further insights into the proposed RING\# method. We first present a case study to validate the rotation-equivariant and translation-invariant representation introduced in the rotation branch of RING\# (Sec.~\ref{sec:appendix_case_study}). We then provide additional qualitative results for place recognition on the Oxford dataset (Sec.~\ref{sec:appendix_pr}). Next, we present additional results for two-stage global localization on the NCLT and Oxford datasets (Sec.~\ref{sec:appendix_two_stage_gl}), supplementing the findings shown in Fig.~\ref{fig:nclt_vision_PE} in the main paper. We also include qualitative results for one-stage global localization on both datasets (Sec.~\ref{sec:appendix_one_stage_gl}), complementing the results shown in Fig.~\ref{fig:gl_succ} in the main paper. Finally, we analyze failure cases to identify the limitations of RING\# and suggest potential improvements (Sec.~\ref{sec:appendix_failure_cases}), and provide a detailed description of the compared methods used in the evaluation (Sec.~\ref{sec:appendix_methods}).

\subsection{Case Study}
\label{sec:appendix_case_study}
\begin{figure*}[hbp]
    \centering
    \includegraphics[width=17.6cm]{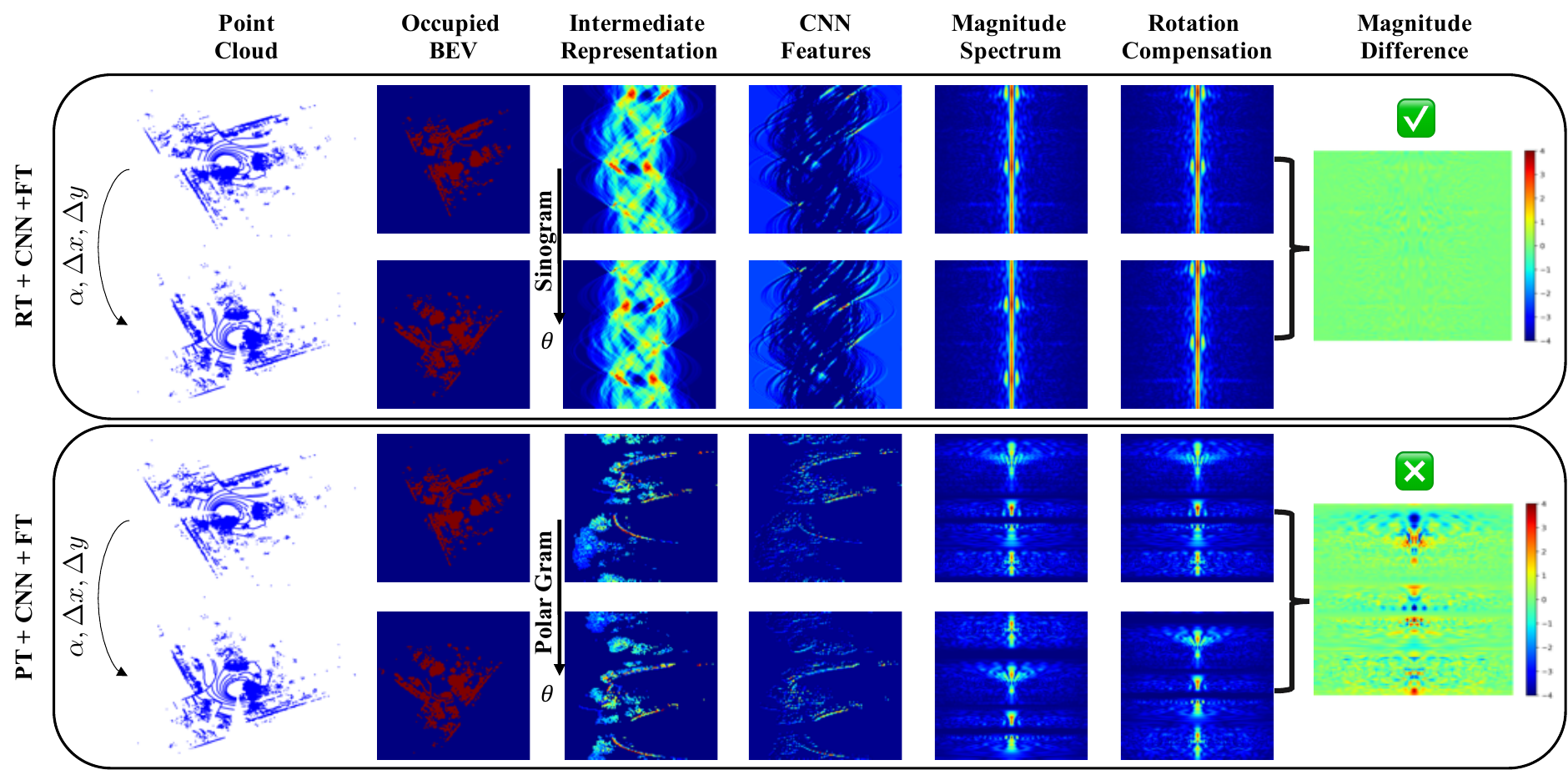}
    \vspace{-0.2cm}
    \caption{\textbf{Case study of rotation equivariance.} Comparison of the learnable rotation-equivariant representations using the Radon transform (RT) and the polar transform (PT). \raisebox{-2pt}{\includegraphics[height=9pt]{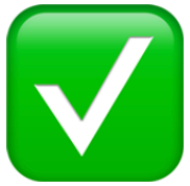}} means that the rotation equivariance is preserved, while \raisebox{-2pt}{\includegraphics[height=9pt]{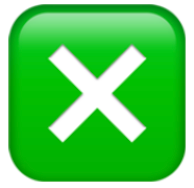}} means that the rotation equivariance is not preserved.}
    \label{fig:case_equivariance}
    \vspace{-0.3cm}
\end{figure*}
We conduct a case study to validate the learnable rotation-equivariant and translation-invariant representation introduced in the rotation branch of RING\#. Specifically, we compare the rotation equivariance properties of the Radon transform (RT) and polar transform (PT), as defined in Eq.~\ref{eq:radon} and Eq.~\ref{eq:polar}, and discussed in Sec.~\ref{sec:radon}. In this experiment, we apply a random 3-DoF transformation $(\alpha, \Delta x, \Delta y)$ to a point cloud, generating a pair of transformed point clouds. These point clouds are then converted into occupied BEV, where 0 represents free space and 1 represents occupied space. We then apply the Radon transform and polar transform to the occupied BEV, generating the corresponding sinogram and polar gram, respectively. Following the procedure outlined in Sec.~\ref{sec:rotation}, we employ a convolutional neural network (CNN) followed by the Fourier transform (FT) to extract the rotation-equivariant magnitude spectra from both representations. Finally, we apply rotation compensation to the magnitude spectra to eliminate the effects of rotation. In theory, if the representation is truly rotation-equivariant, the difference between the two compensated spectra should approach zero. As illustrated in Fig.~\ref{fig:case_equivariance}, the Radon transform results in a near-zero difference, while the polar transform exhibits a larger difference, validating our rotation equivariance design. This empirical result further supports the theoretical analysis provided in Theorem~\ref{theorem 1}.


\subsection{Additional Results of Place Recognition}
\label{sec:appendix_pr}
In addition to the visualization of top 1 retrieved matches for the NCLT dataset shown in Fig.~\ref{fig:nclt_matches}, we provide additional qualitative results for the Oxford dataset in Fig.~\ref{fig:oxford_matches}. The overall performance on the Oxford dataset surpasses that on the NCLT dataset, primarily due to fewer environmental changes in the Oxford dataset. Compared to state-of-the-art methods, both RING\#-V and RING\#-L achieve a higher number of correct matches and fewer incorrect matches, further demonstrating the superior performance of our method. This is consistent with the quantitative findings discussed in Sec.~\ref{sec:place_recognition}.

\begin{figure*}[htbp]
    \centering
    \subfigure[]{
		\includegraphics[trim=1.5cm 1.6cm 0.75cm 1.247cm, clip, width=2.405cm]{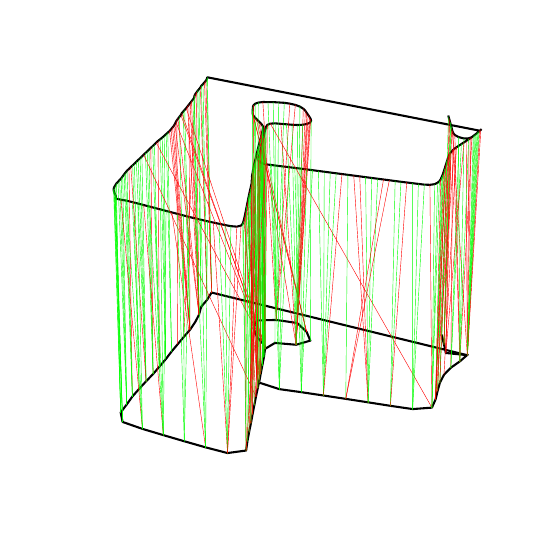}}
    \hspace{-0.515cm}
    \vspace{-0.04cm}
    \subfigure[]{
		\includegraphics[trim=1.5cm 1.6cm 0.75cm 1.247cm, clip, width=2.405cm]{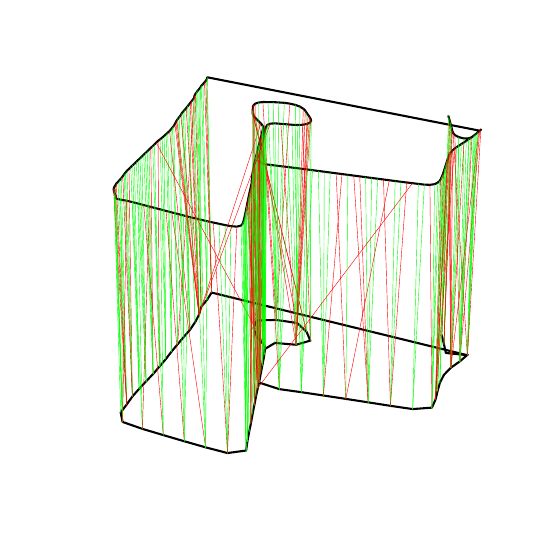}}
    \hspace{-0.515cm}
    \vspace{-0.04cm}
    \subfigure[]{
		\includegraphics[trim=1.5cm 1.6cm 0.75cm 1.247cm, clip, width=2.405cm]{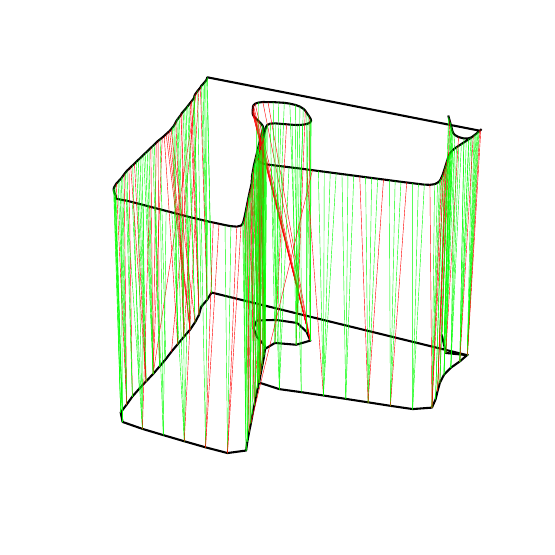}}
    \hspace{-0.515cm}
    \vspace{-0.04cm}
    \subfigure[]{
		\includegraphics[trim=1.5cm 1.6cm 0.75cm 1.247cm, clip, width=2.405cm]{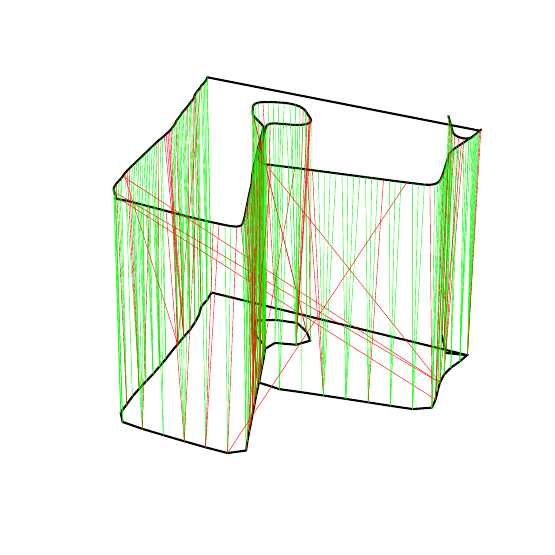}}
    \hspace{-0.515cm}
    \vspace{-0.04cm}
    \subfigure[]{
		\includegraphics[trim=1.5cm 1.6cm 0.75cm 1.247cm, clip, width=2.405cm]{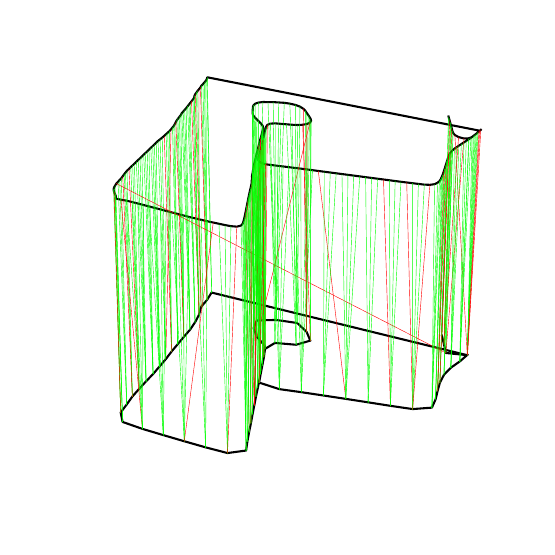}}
    \hspace{-0.515cm}
    \vspace{-0.04cm}    
    \subfigure[]{
		\includegraphics[trim=1.5cm 1.6cm 0.75cm 1.247cm, clip, width=2.405cm]{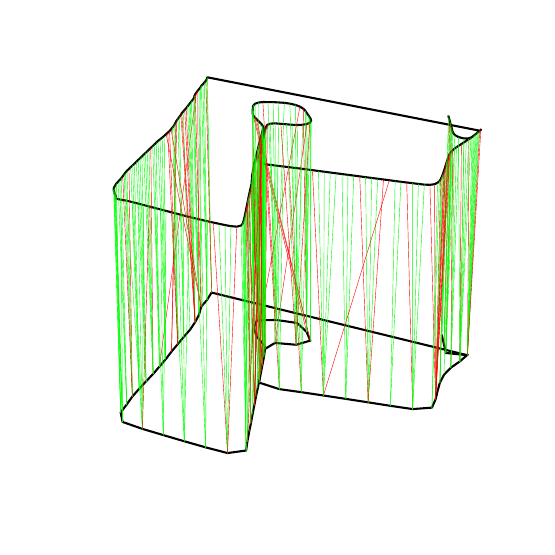}}
    \hspace{-0.515cm}
    \vspace{-0.04cm}
    \subfigure[]{
    \includegraphics[trim=1.5cm 1.6cm 0.75cm 1.247cm, clip, width=2.405cm]{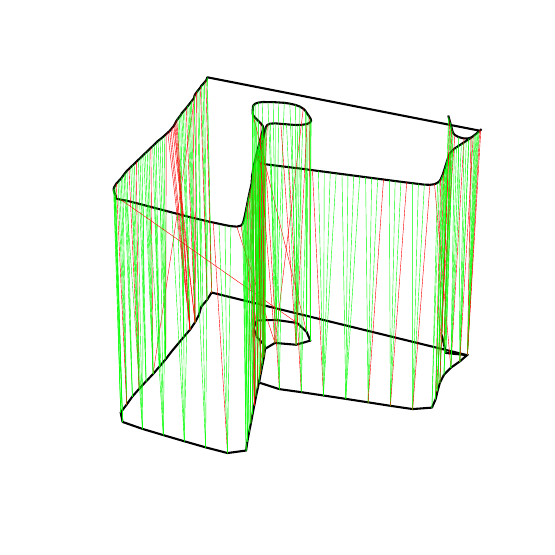}}
    \hspace{-0.515cm}
    \vspace{-0.04cm}
    \subfigure[]{
		\includegraphics[trim=1.5cm 1.6cm 0.75cm 1.247cm, clip, width=2.405cm]{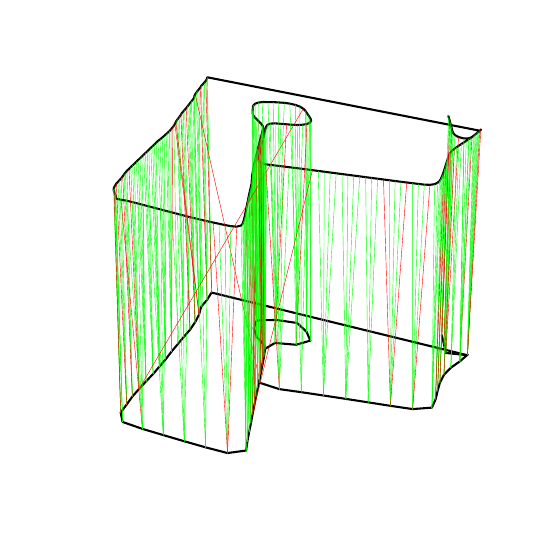}}
    \hspace{-0.515cm}
    \vspace{-0.04cm}
    \subfigure[]{
		\includegraphics[trim=1.5cm 1.6cm 0.75cm 1.247cm, clip, width=2.405cm]{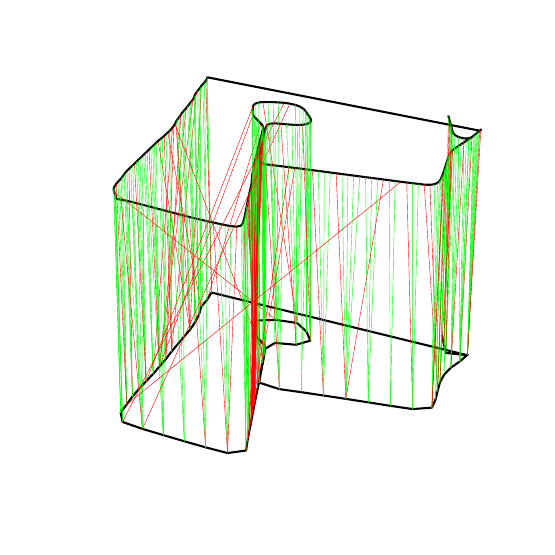}}
    \hspace{-0.515cm}
    \vspace{-0.01cm}
    \subfigure[]{
		\includegraphics[trim=1.5cm 1.6cm 0.75cm 1.247cm, clip, width=2.405cm]{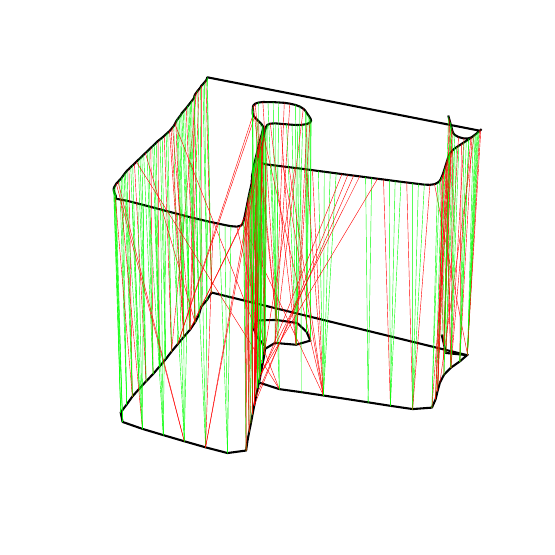}}
    \hspace{-0.515cm}
    \vspace{-0.01cm}
    \subfigure[]{
		\includegraphics[trim=1.5cm 1.6cm 0.75cm 1.247cm, clip, width=2.405cm]{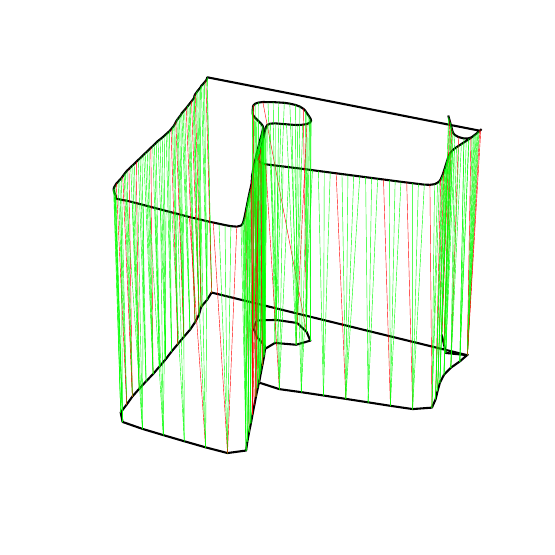}}
    \hspace{-0.515cm}
    \vspace{-0.01cm}
    \subfigure[]{
		\includegraphics[trim=1.5cm 1.6cm 0.75cm 1.247cm, clip, width=2.405cm]{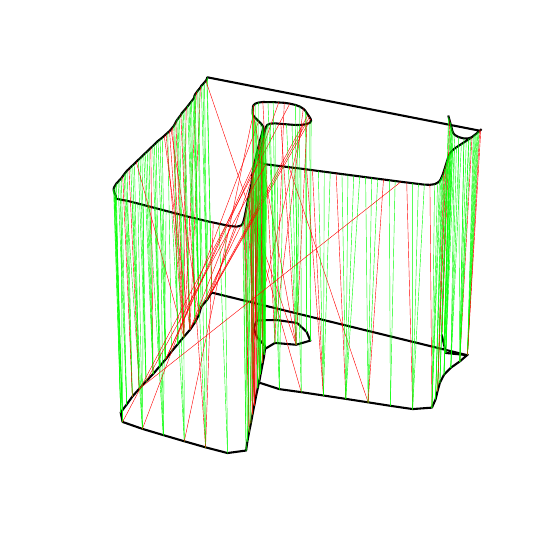}}    
    \hspace{-0.515cm}
    \vspace{-0.01cm}
    \subfigure[]{
		\includegraphics[trim=1.5cm 1.6cm 0.75cm 1.247cm, clip, width=2.405cm]{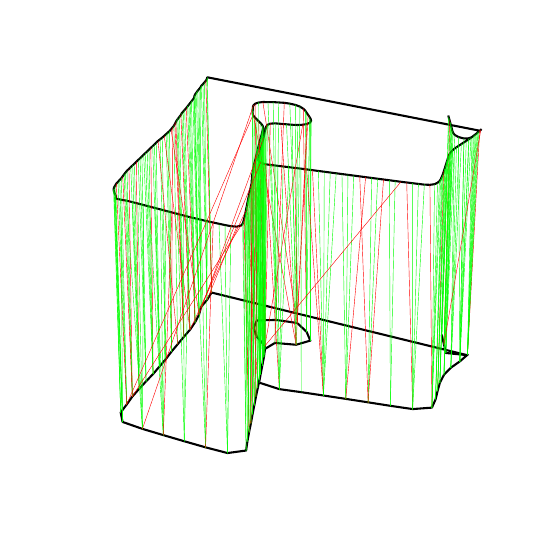}}    
    \hspace{-0.515cm}
    \vspace{-0.01cm}
    \subfigure[]{
		\includegraphics[trim=1.5cm 1.6cm 0.75cm 1.247cm, clip, width=2.405cm]{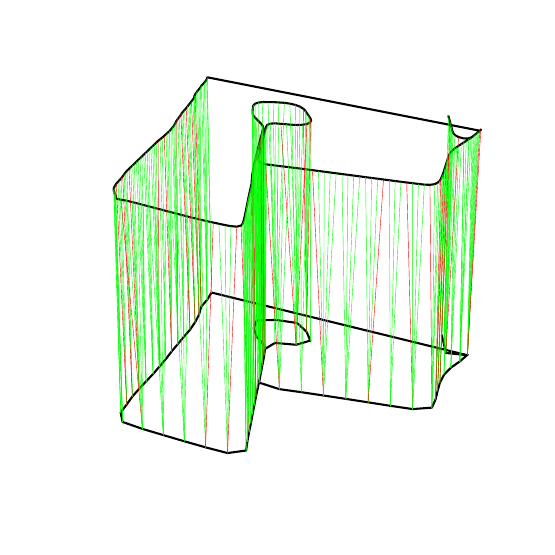}}
    \hspace{-0.515cm}
    \vspace{-0.01cm}
    \subfigure[]{
		\includegraphics[trim=1.5cm 1.6cm 0.75cm 1.247cm, clip, width=2.405cm]{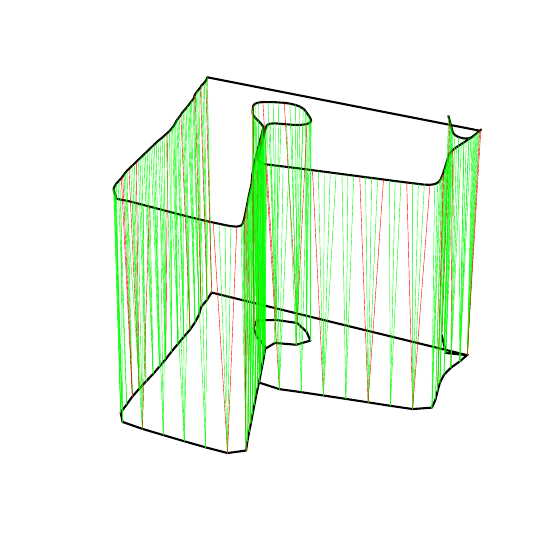}}
    \caption{\textbf{Top 1 retrieved matches for protocol 1 on the Oxford dataset.} (a) NetVLAD~\cite{arandjelovic2016netvlad}. (b) Patch-NetVLAD~\cite{hausler2021patch}. (c) AnyLoc~\cite{keetha2023anyloc}. (d) SFRS~\cite{ge2020self}. (e) Exhaustive SS~\cite{detone2018superpoint, sarlin2020superglue}. (f) BEV-NetVLAD-MLP. (g) vDiSCO~\cite{xu2023leveraging}. (h) RING\#-V (Ours). (i) OverlapTransformer~\cite{ma2022overlaptransformer}. (j) LCDNet~\cite{cattaneo2022lcdnet}. (k) DiSCO~\cite{xu2021disco}. (l) RING~\cite{lu2022one}. (m) RING++~\cite{xu2023ring++}. (n) EgoNN~\cite{komorowski2021egonn}. (o) RING\#-L (Ours). The black line \raisebox{0.5ex}{\rule{0.3cm}{0.5pt}} represents the trajectory, the green line {\color{green}{\raisebox{0.5ex}{\rule{0.3cm}{0.5pt}}}} represents the correct retrieval match, and the red line {\color{red}{\raisebox{0.5ex}{\rule{0.3cm}{0.5pt}}}} represents the wrong retrieval match.}
    \label{fig:oxford_matches}
    \vspace{-0.2cm}
\end{figure*}

\subsection{Additional Results of Two-stage Global Localization}
\label{sec:appendix_two_stage_gl}
Fig.~\ref{fig:nclt_vision_PE} in the main paper has demonstrated the influence of different revisit thresholds on pose errors and global localization success rates for vision-based methods on the NCLT dataset. Here, Fig.~\ref{fig:nclt_lidar_PE} to Fig.~\ref{fig:oxford_lidar_PE} provide additional results for both vision- and LiDAR-based methods on the NCLT and Oxford datasets. As the revisit threshold increases, the performance gap between RING\# and other methods widens, which is especially evident in the vision domain. This observation further validates the hypothesis in Sec.\ref{sec:two_stage_global_localization}, where the strict revisit threshold of place recognition rejects some correct localization results, underestimating the global localization performance. This explains why RING\# consistently achieves high PE Succ. across all protocols, while its global localization performance varies, as shown in Table~\ref{tab:gl_protocol1} to Table~\ref{tab:gl_protocol3} and discussed in the last two findings in Sec.\ref{sec:two_stage_global_localization}.

\begin{figure*}[htbp]
    \centering
    \includegraphics[width=17.6cm]{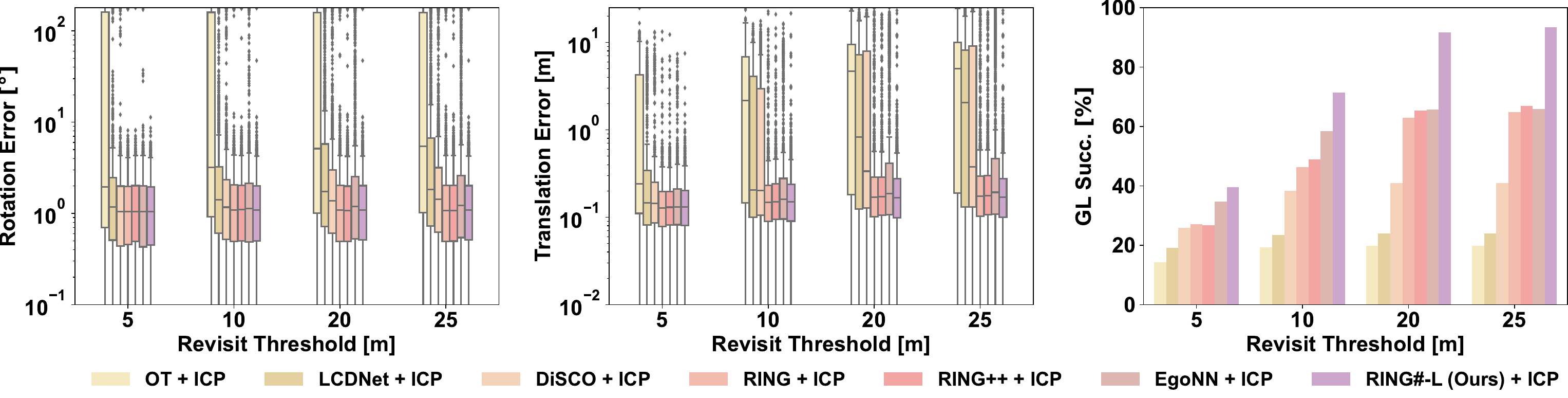}
    \vspace{-0.3cm}
    \caption{\textbf{Pose errors and global localization success rates at different revisit thresholds of LiDAR-based methods on the NCLT dataset.} OT: OverlapTransformer.}
    \label{fig:nclt_lidar_PE}
    \vspace{-0.3cm}
\end{figure*}
\begin{figure*}[htbp]
    \centering
    \includegraphics[width=17.6cm]{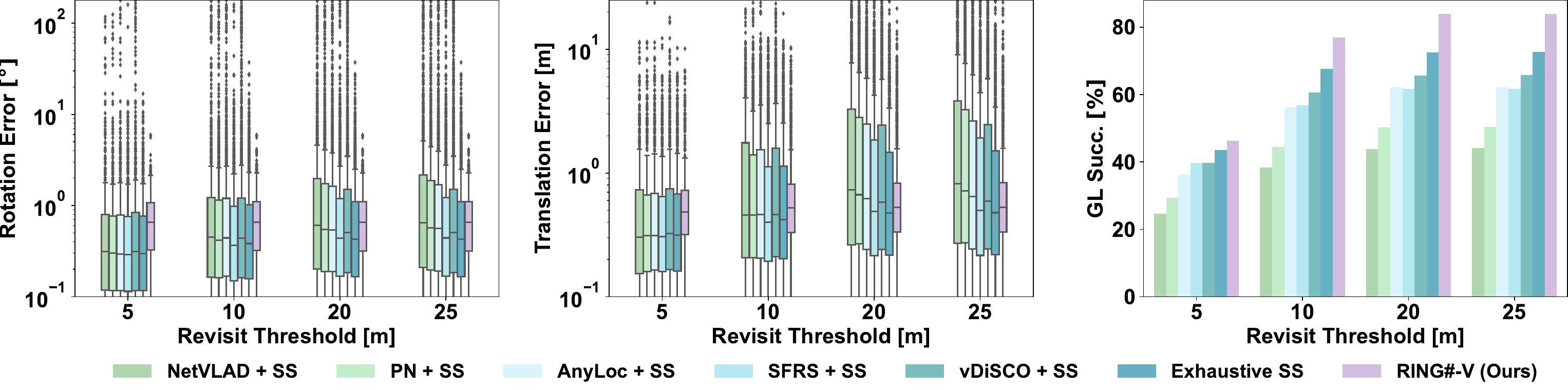}
    \vspace{-0.3cm}
    \caption{\textbf{Pose errors and global localization success rates at different revisit thresholds of vision-based methods on the Oxford dataset.} SS: SuperPoint + SuperGlue, PN: Patch-NetVLAD.}
    \label{fig:oxford_vision_PE}
    \vspace{-0.3cm}
\end{figure*}
\begin{figure*}[htbp]
    \centering
    \includegraphics[width=17.6cm]{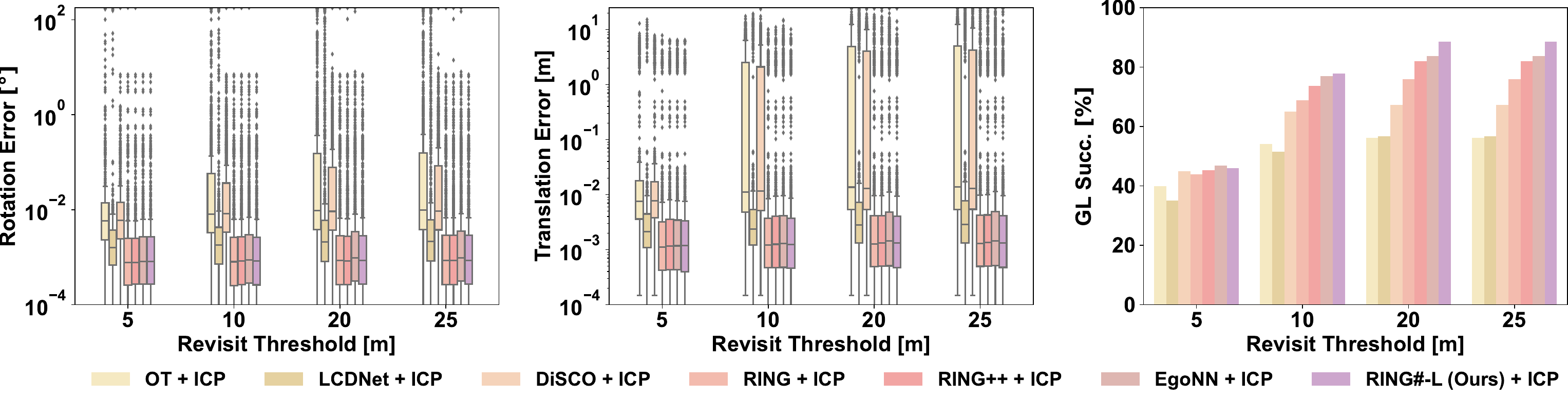}
    \vspace{-0.3cm}
    \caption{\textbf{Pose errors and global localization success rates at different revisit thresholds of LiDAR-based methods on the Oxford dataset.} OT: OverlapTransformer.}
    \label{fig:oxford_lidar_PE}
    \vspace{-0.3cm}
\end{figure*}

\begin{table*}[htbp]
    \renewcommand\arraystretch{1.1}
    \centering
    \caption{Quantitive Results Comparing Two-stage and One-stage Global Localization of Protocol 2}
    \label{tab:pr_gl_protocol2}
    \resizebox{\textwidth}{!}{
    \begin{threeparttable}
    \begin{tabular}{clcccccccc}
    \toprule[1pt]
    \multicolumn{2}{c}{\multirow{3}{*}{Approach}} & \multicolumn{4}{c}{NCLT} & \multicolumn{4}{c}{Oxford} \\ \cline{3-10}
    \multicolumn{2}{c}{} & \multirow{2}{*}{Recall@1 $\uparrow$} & \multirow{2}{*}{PE Succ. $\uparrow$} & \multicolumn{2}{c}{GL Succ. $\uparrow$} & \multirow{2}{*}{Recall@1 $\uparrow$} & \multirow{2}{*}{PE Succ. $\uparrow$} & \multicolumn{2}{c}{GL Succ. $\uparrow$} \\ \cline{5-6} \cline{9-10}
    \multicolumn{2}{c}{} &  &  & Two-stage & One-stage &  &  & Two-stage & One-stage \\ \hline
    \multirow{8}{*}{Vision} & NetVLAD \cite{arandjelovic2016netvlad} + {SS \cite{detone2018superpoint,sarlin2020superglue}}$^{\dagger}$ & 0.47 & 0.43 & 0.20 & 0.19 & 0.52 & 0.75 & 0.39 & 0.45 \\
    & Patch-NetVLAD \cite{hausler2021patch} + {SS \cite{detone2018superpoint,sarlin2020superglue}}$^{\dagger}$ & 0.48 & 0.44 & 0.21 & 0.2 & 0.50 & 0.74 & 0.37 & 0.44 \\
    & AnyLoc \cite{keetha2023anyloc} + {SS \cite{detone2018superpoint,sarlin2020superglue}}$^{\dagger}$ & 0.52 & 0.48 & 0.25 & 0.23 & 0.74 & 0.78 & 0.58 & 0.62 \\
    & SFRS \cite{ge2020self} + {SS \cite{detone2018superpoint,sarlin2020superglue}}$^{\dagger}$ & 0.54 & 0.51 & 0.28 & 0.25 & 0.72 & 0.81 & 0.58 & 0.62 \\
    & \cellcolor{gray!30}Exhaustive {SS \cite{detone2018superpoint,sarlin2020superglue}}$^{\dagger}$ & \cellcolor{gray!30}0.55 & \cellcolor{gray!30}\underline{0.60} & \cellcolor{gray!30}0.33 & \cellcolor{gray!30}0.32 & \cellcolor{gray!30}\underline{0.84} & \cellcolor{gray!30}\underline{0.83} & \cellcolor{gray!30}\underline{0.69} & \cellcolor{gray!30}\underline{0.73} \\
    & BEV-NetVLAD-MLP & 0.73 & 0.04 & 0.03 & 0.03 & 0.74 & 0.17 & 0.13 & 0.13 \\
    & vDiSCO \cite{xu2023leveraging} + {SS \cite{detone2018superpoint,sarlin2020superglue}}$^{\dagger}$ & \textbf{0.86} & 0.43 & \underline{0.37} & \underline{0.33} & 0.78 & 0.79 & 0.62 & 0.66 \\
    & \cellcolor{gray!30}\textbf{RING\#-V (Ours)} & \cellcolor{gray!30}\underline{0.83} & \cellcolor{gray!30}\textbf{0.95} & \cellcolor{gray!30}\textbf{0.79} & \cellcolor{gray!30}\textbf{0.81} & \cellcolor{gray!30}\textbf{0.86} & \cellcolor{gray!30}\textbf{0.93} & \cellcolor{gray!30}\textbf{0.81} & \cellcolor{gray!30}\textbf{0.86} \\ \hline
    \multirow{7}{*}{LiDAR} & OverlapTransformer \cite{ma2022overlaptransformer} + ICP \cite{koide2021voxelized} & 0.69 & 0.43 & 0.30 & 0.27 & 0.79 & 0.72 & 0.57 & 0.57 \\
    & LCDNet \cite{cattaneo2022lcdnet} + ICP \cite{koide2021voxelized} & 0.44 & 0.63 & 0.28 & 0.25 & 0.62 & \textbf{0.93} & 0.58 & 0.63 \\
    & DiSCO \cite{xu2021disco} + ICP \cite{koide2021voxelized} & \underline{0.82} & 0.74 & 0.61 & 0.56 & \textbf{0.88} & 0.75 & 0.66 & 0.67 \\
    & RING \cite{lu2022one} + ICP \cite{koide2021voxelized} & 0.56 & 0.95 & 0.53 & 0.66 & 0.76 & \textbf{0.93} & 0.71 & 0.76 \\
    & RING++ \cite{xu2023ring++} + ICP \cite{koide2021voxelized} & 0.59 & \underline{0.96} & 0.56 & 0.69 & 0.81 & \textbf{0.93} & 0.76 & 0.82 \\
    & EgoNN \cite{komorowski2021egonn} + ICP \cite{koide2021voxelized} & \textbf{0.84} & 0.89 & \underline{0.75} & \underline{0.76} & \textbf{0.88} & 0.92 & \textbf{0.81} & \underline{0.86} \\
    & \cellcolor{gray!30}\textbf{RING\#-L (Ours) + ICP \cite{koide2021voxelized}} & \cellcolor{gray!30}0.81 & \cellcolor{gray!30}\textbf{0.97} & \cellcolor{gray!30}\textbf{0.78} & \cellcolor{gray!30}\textbf{0.93} & \cellcolor{gray!30}0.84 & \cellcolor{gray!30}\textbf{0.93} & \cellcolor{gray!30}\underline{0.79} & \cellcolor{gray!30}\textbf{0.89} \\
    
    \bottomrule[1pt]
    \end{tabular}
    \begin{tablenotes}
        \footnotesize
        \item[$\dagger$] SS: Superpoint + SuperGlue. \gray{Gray} rows represent the results of PR-by-PE localization methods. The best result is highlighted in \textbf{bold} and the second best is \underline{underlined}.
    \end{tablenotes}
    \end{threeparttable}}
    \vspace{-0.3cm}
\end{table*}

To further clarify these findings, we provide the quantitative results comparing two-stage and one-stage global localization for Protocol 2 in Table~\ref{tab:pr_gl_protocol2}. While RING\# achieves the best PE Succ., it exhibits slightly lower Recall@1 in some cases, which affects its two-stage GL Succ.. This explains why RING\#-L have a slightly lower two-stage GL Succ. compared to EgoNN on the Oxford dataset. However, one-stage evaluation of global localization is more realistic and practical, as it directly evaluates the global localization performance without the influence of the revisit threshold. Under this evaluation, RING\# achieves the highest GL Succ., outperforming other methods by a significant margin on both datasets, which demonstrates the true potential of our approach.


\subsection{Additional Results of One-stage Global Localization}
\label{sec:appendix_one_stage_gl}
While Fig.~\ref{fig:gl_succ} in the main paper focuses on the quantitative evaluation of global localization performance, we provide additional qualitative results on both the NCLT and Oxford datasets. For the NCLT dataset, Fig.~\ref{fig:nclt_vision_gl_traj} and Fig.~\ref{fig:nclt_lidar_gl_traj} compare the global localization performance of RING\#-V against Exhaustive SS and RING\#-L against EgoNN. Similarly, for the Oxford dataset, Fig.~\ref{fig:oxford_vision_gl_traj} and Fig.~\ref{fig:oxford_lidar_gl_traj} depict global localization results of RING\#-V versus Exhaustive SS and RING\#-L versus EgoNN. In these figures, topological nodes are represented in \gray{gray}, while the ground truth location is shown in \green{green}.

\begin{figure*}[htbp]
  \centering
  \subfigure[RING\#-V (Ours) vs. Exhaustive SS~\cite{detone2018superpoint, sarlin2020superglue}]{
    \label{fig:nclt_vision_gl_traj}
    \includegraphics[width=8.2cm]{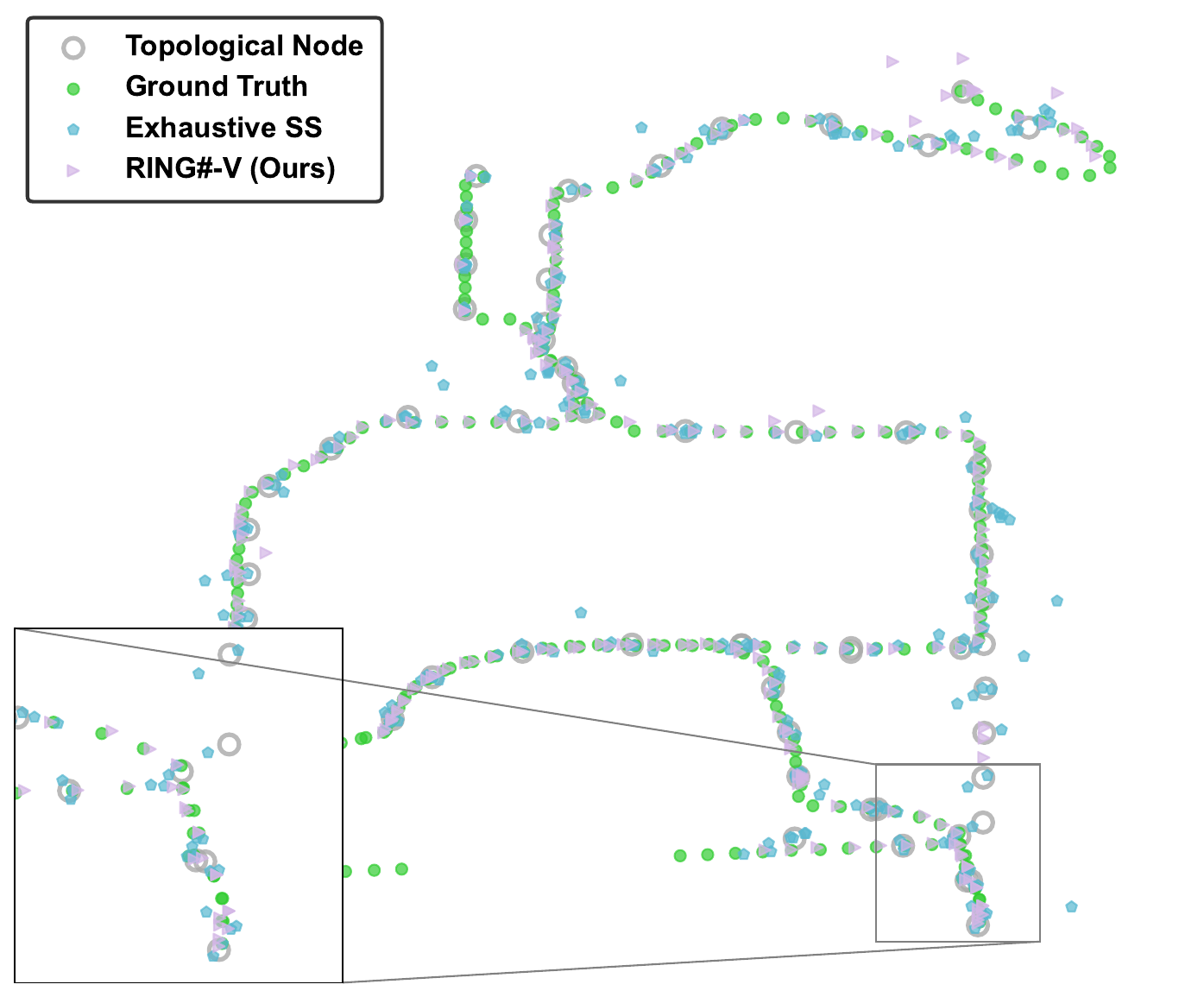}}
  \subfigure[RING\#-L (Ours) + ICP~\cite{koide2021voxelized} vs. EgoNN~\cite{komorowski2021egonn} + ICP~\cite{koide2021voxelized}]{
    \label{fig:nclt_lidar_gl_traj}
    \includegraphics[width=8.2cm]{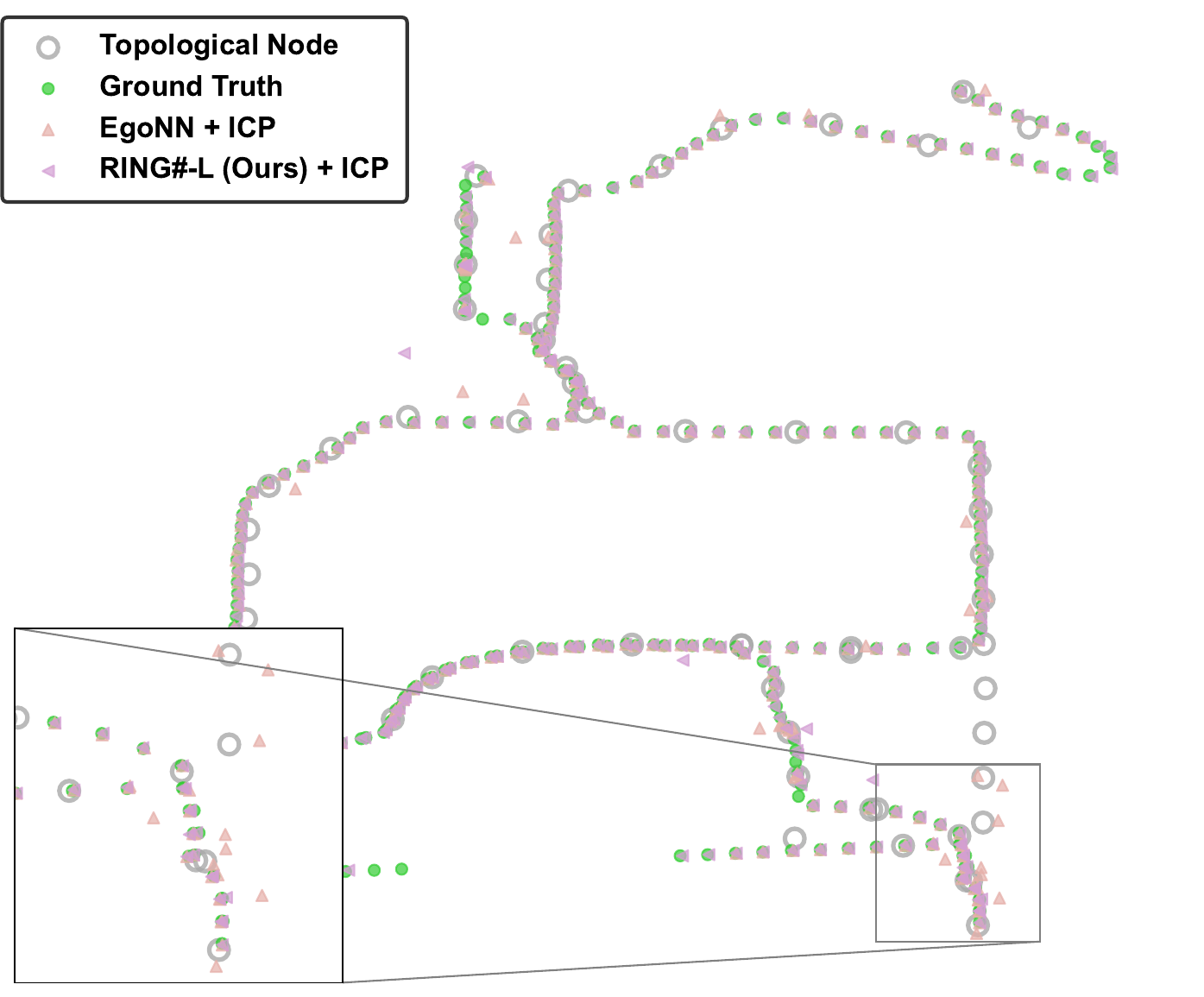}}
  \vspace{-0.2cm}
  \caption{\textbf{Global localization results of RING\#-V, Exhaustive SS, RING\#-L and EgoNN on the NCLT dataset.} The topological node is shown as \gray{gray} and the ground truth location is shown in \green{green}.}
  \label{fig:nclt_gl_traj}
  \vspace{-0.3cm}
\end{figure*}
\begin{figure*}[htbp]
  \centering
  \subfigure[RING\#-V (Ours) vs. Exhaustive SS~\cite{detone2018superpoint, sarlin2020superglue}]{
    \label{fig:oxford_vision_gl_traj}
    \includegraphics[width=8.2cm]{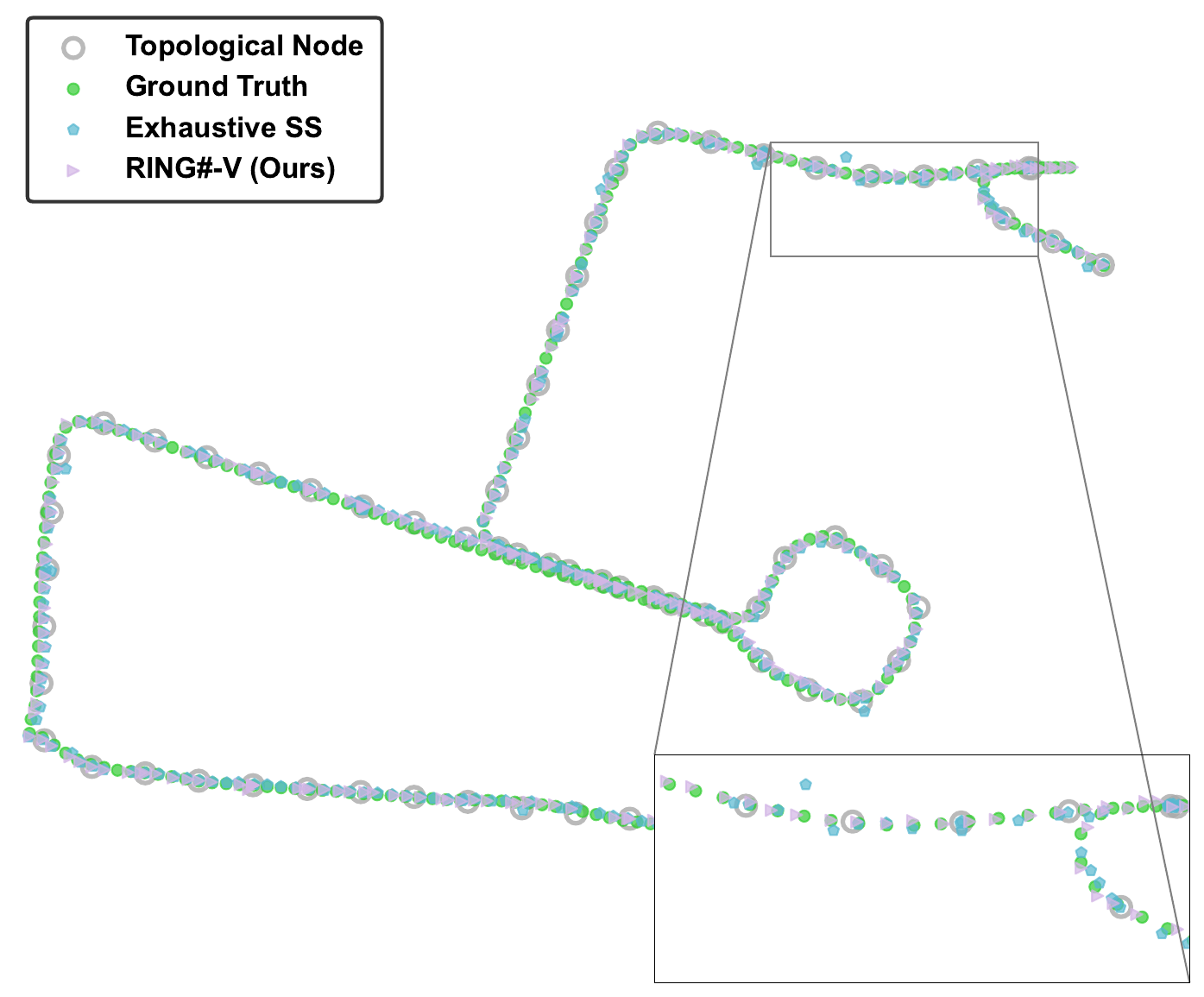}}
  \subfigure[RING\#-L (Ours) + ICP~\cite{koide2021voxelized} vs. EgoNN~\cite{komorowski2021egonn} + ICP~\cite{koide2021voxelized}]{
    \label{fig:oxford_lidar_gl_traj}
    \includegraphics[width=8.2cm]{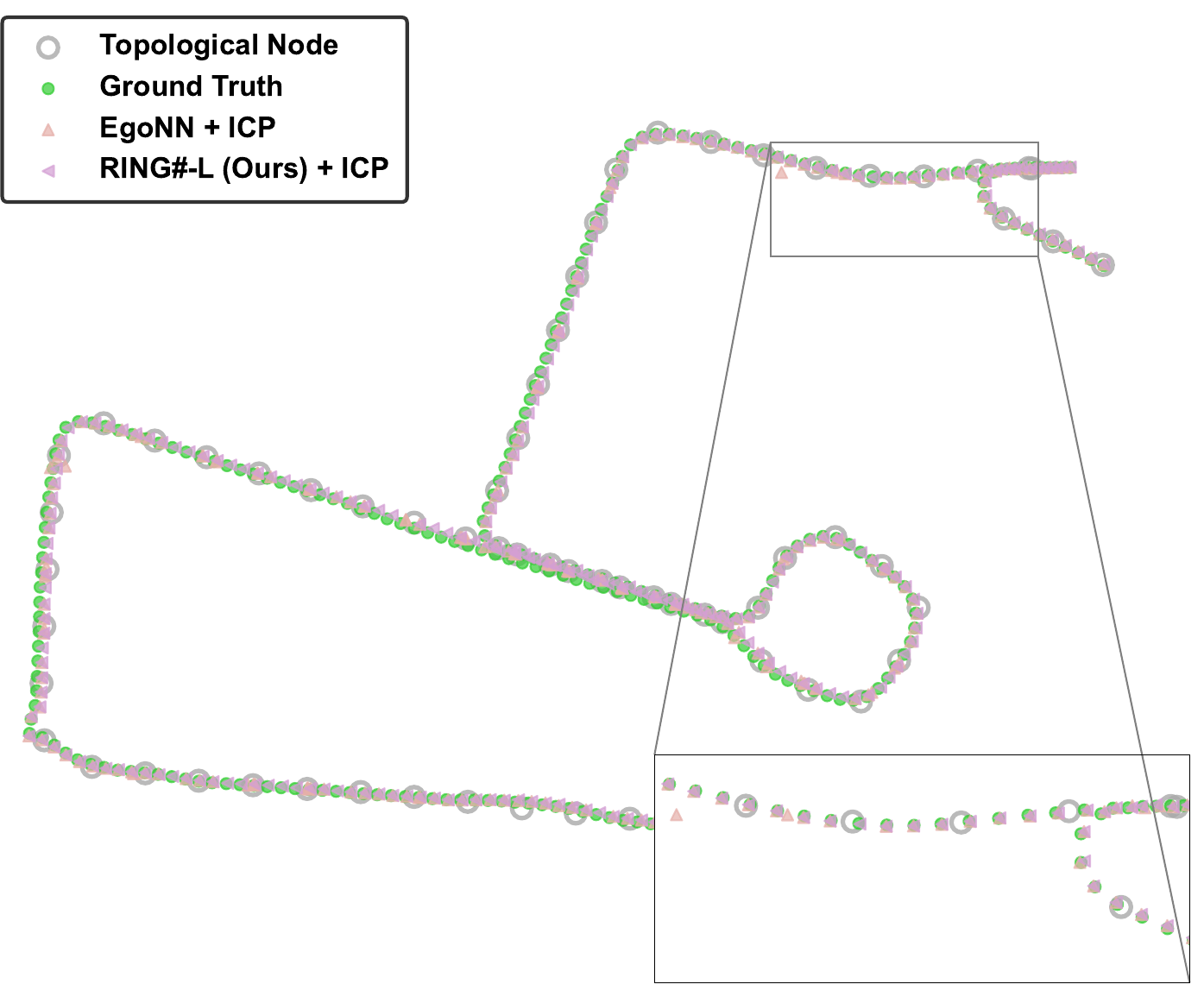}}
  \vspace{-0.2cm}
  \caption{\textbf{Global localization results of RING\#-V, Exhaustive SS, RING\#-L and EgoNN on the Oxford dataset.} The topological node is shown as \gray{gray} and the ground truth location is shown in \green{green}.}
  \label{fig:oxford_gl_traj}
  \vspace{-0.3cm}
\end{figure*}

As depicted in Fig.~\ref{fig:nclt_gl_traj}, significant viewpoint changes, such as large turns or backward movements in the trajectory, challenge the methods in localizing the robot accurately. In contrast, Fig.~\ref{fig:oxford_gl_traj} reveals more accurate localization results in the Oxford dataset, owing to fewer viewpoint and environmental variations. Despite challenging scenarios and sparse topological nodes, RING\#-V and RING\#-L  outperform the other methods on both datasets, showcasing the effectiveness of our approach. Specifically, RING\#-L benefits from explicit geometric constraints, allowing it to outperform RING\#-V, particularly on the NCLT dataset.

\subsection{Failure Cases Analysis}
\label{sec:appendix_failure_cases}
\begin{figure*}[htbp]
    \centering
    \includegraphics[width=18cm]{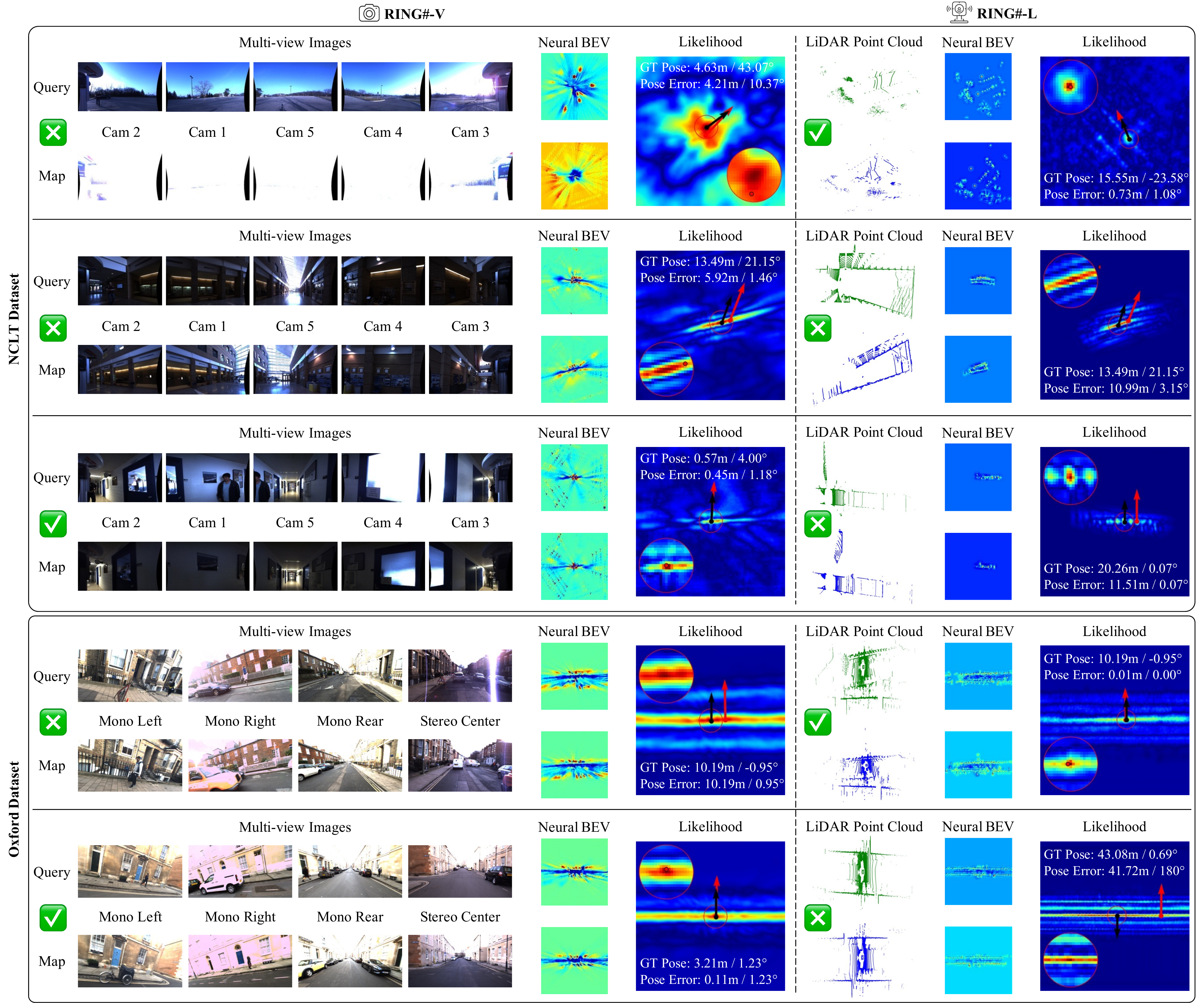}
    \vspace{-0.1cm}
    \caption{\textbf{Failure localization cases of RING\#-V and RING\#-L.} We display several failure cases of RING\#. On the likelihood plot, the black arrow $\rightarrow$ shows the pose estimated by RING\#, the red arrow {\color{red}{$\rightarrow$}} shows the ground truth pose and the red dot {\color{red}{$\bullet$}} shows the ground truth position. Here we rotate the query neural BEV by $\hat{\theta}$ estimated using RING\# to visualize the neural BEV under a 3-DoF pose transformation. \raisebox{-2pt}{\includegraphics[height=9pt]{figs/right.pdf}} denotes the successful localization case, while \raisebox{-2pt}{\includegraphics[height=9pt]{figs/wrong.pdf}} denotes the failure case.}
    \label{fig:case_gl_fail}
    \vspace{-0.5cm}
\end{figure*}

We conduct a detailed analysis of failure cases where RING\# delivers large rotation errors ($\text{RE} \geq 5^{\circ}$) or translation errors ($\text{TE} \geq 2m$) on the NCLT and Oxford datasets. Fig.~\ref{fig:case_gl_fail} illustrates representative examples of failure cases for both RING\#-V and RING\#-L, with successful cases marked by \raisebox{-2pt}{\includegraphics[height=9pt]{figs/right.pdf}} and failed cases indicated by \raisebox{-2pt}{\includegraphics[height=9pt]{figs/wrong.pdf}}.

Specifically, RING\#-V has shown vulnerabilities in environments with significant lighting variations, or in areas with repetitive patterns like building facades, where the extracted visual features are less discriminative. On the other hand, RING\#-L struggles in degraded environments characterized by repetitive geometries, such as long, uniform corridors. In such scenarios, LiDAR scans may appear similar across different sections, leading to localization ambiguities.

To address these limitations, future work could explore integrating multi-modal data, such as fusing vision and LiDAR information. Combining these modalities could enhance our approach's ability to disambiguate challenging environments and improve overall robustness. Addressing these issues could further improve the accuracy and reliability of RING\#, making it more effective across a broader range of scenarios.

\subsection{Details of Compared Methods}
\label{sec:appendix_methods}
We briefly outline the key vision- and LiDAR-based methods compared in our experiments:

\textbf{NetVLAD~\cite{arandjelovic2016netvlad}\footnote[5]{\url{https://github.com/Nanne/pytorch-NetVlad}}.} NetVLAD is a classic visual place recognition (VPR) method that integrates a differentiable VLAD layer into neural networks. We use the official PyTorch implementation with the released VGG16 model trained on the Pitts30k dataset to finetune the NetVLAD model on the NCLT and Oxford datasets.

\textbf{Patch-NetVLAD~\cite{hausler2021patch}\footnote[6]{\url{https://github.com/QVPR/Patch-NetVLAD}}.} Patch-NetVLAD enhances the original NetVLAD by incorporating a patch-based representation, improving robustness against challenging environmental changes. We finetune the official pre-trained model provided by the authors to evaluate Patch-NetVLAD on the NCLT and Oxford datasets.

\textbf{AnyLoc~\cite{keetha2023anyloc}\footnote[7]{\url{https://github.com/AnyLoc/AnyLoc}}.} AnyLoc is a recent VPR method that extracts features using the foundation models like DINOv2~\cite{oquab2023dinov2} for universal place recognition. We use the official model provided by the authors for evaluation.

\textbf{SFRS~\cite{ge2020self}\footnote[8]{\url{https://github.com/yxgeee/OpenIBL}}.} SFRS proposes self-supervised image-to-region similarites to learn discriminative features for VPR. We evaluate the performance of the official model on the NCLT and Oxford datasets to benchmark against our method.

\textbf{Exhaustive SS~\cite{detone2018superpoint, sarlin2020superglue}\footnote[9]{\url{https://github.com/magicleap/SuperGluePretrainedNetwork}}.} Exhaustive SS utilizes SuperPoint~\cite{detone2018superpoint} and SuperGlue~\cite{sarlin2020superglue} to extract and match local features for VPR. It exhaustively searches for the best pose across the entire database. We evaluate it using the official implementation on both datasets.

\textbf{BEV-NetVLAD-MLP.} BEV-NetVLAD-MLP is a variant of NetVLAD that operates in the BEV space, combining NetVLAD for place recognition with an MLP for pose estimation. It shares the BEV generation process with RING\# and serves as a direct comparison to assess the efficacy of BEV-based approaches on the NCLT and Oxford datasets.

\textbf{vDiSCO~\cite{xu2023leveraging}\footnote[10]{\url{https://github.com/MaverickPeter/vDiSCO}}.} vDiSCO is a BEV-based VPR method that leverages a transformer-based view transformation network to learn BEV features from multi-view images. We use the official implementation provided by the authors to evaluate vDiSCO on the NCLT and Oxford datasets.

\textbf{OverlapTransformer~\cite{ma2022overlaptransformer}\footnote[11]{\url{https://github.com/haomo-ai/OverlapTransformer}}.} OverlapTransformer is a LiDAR-based place recognition (LPR) method that leverages a transformer architecture to learn rotation-invariant global features from range images. We train and evaluate OverlapTransformer using the official code on both datasets, benchmarking its performance in LiDAR-based global localization.

\textbf{LCDNet~\cite{cattaneo2022lcdnet}\footnote[12]{\url{https://github.com/robot-learning-freiburg/LCDNet}}.} LCDNet incorporates a shared encoder, a place recognition head and a pose estimation head based on the unbalanced optimal transport theory, performing loop closure detection and 6-DoF point cloud registration simultaneously. We train LCDNet on the NCLT and Oxford datasets using the official code for evaluation.

\textbf{DiSCO~\cite{xu2021disco}\footnote[13]{\url{https://github.com/MaverickPeter/DiSCO-pytorch}}.} DiSCO transforms a 3D LiDAR point cloud into a 2D polar BEV representation, from which it extracts rotation-invariant global descriptors for place recognition. We train and evaluate the official implementation of DiSCO on the NCLT and Oxford datasets.

\textbf{RING~\cite{lu2022one}\footnote[14]{\url{https://github.com/lus6-Jenny/RING}}.} RING introduces a rotation-invariant and translation-invariant global descriptor by combining the Radon transform and Fourier transform for global localization. We evaluate the official implementation of RING on both datasets.

\textbf{RING++~\cite{xu2023ring++}\footnote[15]{\url{https://github.com/lus6-Jenny/RING}}.} RING++ extends RING by introducing multi-channel representations for improved global localization performance. We evaluate the official RING++ implementation on both datasets to compare against our method.


\textbf{EgoNN~\cite{komorowski2021egonn}\footnote[16]{\url{https://github.com/jac99/Egonn}}.} EgoNN jointly learns global and local features in the cylindrical coordinate to achieve both place recognition and 6-DoF pose estimation. Using the official implementation, we train EgoNN on the NCLT and Oxford datasets to benchmark its performance against other LiDAR-based methods.

\end{document}